





\documentclass{article} 

\usepackage[nonatbib, final]{style_files/neurips_2025} 

\usepackage[
    style=numeric,
    sorting=none,       
]{biblatex}
\title{Bib}
\addbibresource{main.bib}

\newtoggle{nips}
\toggletrue{nips}


\let\regularcite\cite 
\renewcommand\cite[1]{
    \iftoggle{nips}{\parencite{#1}}{\regularcite{#1}} 
}

\newcommand\citeA[1]{
    \iftoggle{nips}{\textcite{#1}}{\cite{#1}}
}


\makeatletter
 
\usepackage{imports}

\counterwithout{theorem}{section}

\singlespacing

\DTMlangsetup[en-GB]{ord=raise,monthyearsep={,\space}}
\DTMsavedate{date}{2021-07-21}

\begin{document}

\title{Online Mixture of Experts: No-Regret Learning for Optimal Collective Decision-Making}
\date{} 

\author{
    Larkin Liu\thanks{Corresponding author.} \\
    Technische Universität München \\
    Riebaki AI \\
    \texttt{larkin.liu@tum.de} \\
    \And
    Jalal Etesami \\
    Technische Universität München \\
    \texttt{j.etesami@tum.de}
}

\maketitle
\vspace{-0.8cm}

\begin{abstract}

We explore the use of expert-guided bandit learning, which we refer to as online mixture-of-experts (OMoE). In this setting, given a context, a candidate committee of experts must determine how to aggregate their outputs to achieve optimal results in terms of aggregate accuracy. We propose two algorithms to address this problem. The first algorithm combines aggregate voting with UCB-driven successive elimination, efficiently pruning suboptimal exploration actions. The second algorithm employs an online weighted-majority-voting mechanism, leveraging the respective voting power of each expert proportional to their predictive power. We derive theoretical guarantees for the regret properties in the bandit setting under ideal circumstances, and empirical results are provided accordingly. As a modern study on applications, these methods are applied to the online fine-tuning of a set of expert large language models (LLMs), where after each response, the generative LLM dynamically reweighs its set of experts and/or selects the optimal committee of experts to generate the most accurate response. Our results introduce new methodologies and no-regret guarantees for combining multiple experts to improve on the performance of the an aggregate model overall.

\end{abstract}


\iftoggle{nips}{\vspace{-1.5em}}

\section{Introduction}

\iftoggle{nips}{\vspace{-0.5em}}

The mixture-of-experts (MoE) model is a powerful concept in applied machine learning and social choice, leveraging the collective decision-making capabilities of a group of experts to yield improved predictions or more computationally efficient models. The core idea is that within a collection of $N$ experts, $\setE$, there exists at possible a subgroup, $\setE^* \subseteq \setE$, whose combined predictions are well-optimized on some well-defined metric (e.g. collective accuracy, model efficiency etc.). Classically, MoE models involved the design of a gating mechanism (a.k.a router) that aims to optimally transform the output of many experts into a single more optimal output. Traditionally, this involved training an offline model from labelled data to optimize some loss function via supervised learning  \cite{richardson:2003learning, nowlan:1990_moe_evaluation, si:2023-MoRE}. Concerning neural architecture design for offline learning, the MoE framework has been wide integrated to several state-of-the-art LLMs (e.g. DeepSeek-MoE, Mixtral) for token prediction tasks \cite{anthony:2024_blackmamba, shazeer:2017_sparse_moe_llm, dai:2024_deepseekmoe, jiang:2024mixtral, fedus:2022_sparse_expert_llm_review}. More recently, pertinent to applications on AI alignment, the online version of MoE has garnered significant interest. This is particularly useful when agents interact with an environment in a repeated manner, receiving feedback in an online fashion, and needing to optimize in real-time.

\iftoggle{nips}{\vspace{-0.5em}}

\paragraph{Online Mixture of Experts:}  Historically, online MoE was framed as a \textit{prediction with expert advice} problem, which has been extensively studied in the  learning literature \cite{cesa:2006_pred_learn_games, gao:2022_parameter_moe, huang:2024_mh_moe, cai2024:survey_moe}. The core principle involves aggregating predictions from multiple experts, where each expert is assigned a weight proportional to its historical predictive performance. Previous online MoE learning algorithms, such as \tT{EXP4}, are theoretically guaranteed to converge to the best single expert in hindsight \cite{littlestone:1994_weighted_maj_problem, lattimore:2020_bandit_book, cesa:2006_pred_learn_games}. While this property ensures asymptotic optimality relative to the best individual expert, it introduces a concerning limitation. Specifically, such algorithms fail to account for scenarios where an aggregate outcome of the experts' predictions—such as a majority voting, plurality consensus, or any form of preference aggregation—could yield superior performance compared to any single expert. In social choice, this phenomenon is often referred to as the \textit{wisdom-of-the-crowd}. If there exists potential improvement from collective decision-making, traditional online MoE methods, such as \tT{EXP4}, would not capture this. This gap highlights a key question, \textit{could we design an algorithm that provides convergence to the best collective decision-making committee rather than to the best single expert?} In our setting, we aim to learn both the competencies of the experts, and a sufficiently optimal strategy to combine the experts aggregate outcome in a real-time online environment. Evidently this also has important ramifications to generative LLMs, allowing for adaptively improved responses to tasks across various domains, when multiple LLMs are available.

\textbf{Key Contributions:} We make the following contributions in our work,

\begin{tcolorbox}[
colback=BIOblue,
		colframe=black,
		arc=4pt,
		boxsep=0.3pt,
	]%
    \textbf{Online Expert Aggregation:} 
    We introduce a family of methods that dynamically aggregate expert outputs in real-time through a combination of majority voting and online learning. This provably optimizes the collective aggregate competency of the expert set.
\end{tcolorbox}
\begin{tcolorbox}[
colback=BIOorange,
		colframe=black,
		arc=4pt,
		boxsep=0.3pt,
	]
    \textbf{LLM Application:} We empirically validate the effectiveness of our method across realistic problem domains via online majority voting mechanisms to aggregate responses from multiple LLMs serving as experts, yielding improved answer quality.
\end{tcolorbox}
\begin{tcolorbox}[
colback=BIOgreen,
		colframe=black,
		arc=4pt,
		boxsep=0.3pt,
	]
	\textbf{No-Regret Guarantees:} We provide theoretical guarantees with respect to the sublinear finite-time regret of the learning algorithm relative to the optimal solution under perfect information, bridging online learning methods to social choice theory.
     \looseness=-1
\end{tcolorbox}

\iftoggle{nips}{\vspace{-0.9em}}

\section{Problem Setting} \label{sec:problem_setting}

\begin{figure}
    \centering
    \includegraphics[width=0.98\linewidth]{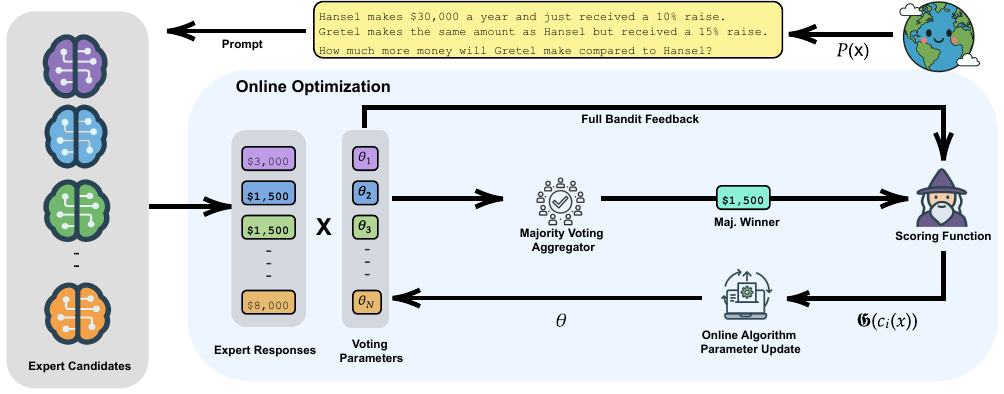}
    \caption[Online Mixture of Experts Workflow]{\textbf{Online Mixture of Experts via Majority Voting.} At each timestep, a context $\contexT$ is sampled from a distribution $P(\contexT)$. A prompt derived from $\contexT$ is used to query a set of expert candidates $\setExpertCand$. The responses are aggregated via a majority voting mechanism parameterized by $\theta$, yielding a single output. All expert responses and the aggregated output are evaluated by a scoring function (e.g., an oracle or human). The voting parameters $\theta$ are updated online to maximize expected score, using full-bandit feedback over a finite time horizon.  }
    \label{fig:enter-label}
    \vspace{-1.5em}
\end{figure}


\iftoggle{nips}{\vspace{-0.5em}}

To formally describe the problem setting, consider a learning model where the output is generated by combining the results of multiple weak learners through an aggregation function. These weak learners, referred to as \textit{experts}, produce individual outputs, and the aggregation function determines how these outputs are combined. Let $\setExpert$ denote the set of expert with the size $N$. Provided a context $\contexT\in\mathcal{X}$, expert $i\in \mathcal{E}$ makes a prediction $\bm{c}_i(\contexT)\in \mathcal{C}_i$ which represents an abstract categorical ordering or a singleton. Since $\{\mathcal{C}_i\}_{i\in\mathcal{E}}$ might be different sets (e.g., $\setCateg_i = \{\tT{cat}, \tT{dog}, \tT{mouse}\}$ and $\setCateg_j = \{\tT{mouse}, \tT{eagle}, \tT{fly}\}$), we introduce a \emph{standardizer} $\standardizeFunc(\cdot)$ that maps an arbitrary collection of prediction spaces $\{\setCateg_i\}_{i \in \setExpert}$ to a unified output space $\mathcal{C}$. Formally, the standardizer satisfies,
\[
\bigcap_{i \in \setExpert} \setCateg_i \subseteq \mathcal{C}:=\standardizeFunc(\{\setCateg_i\}_{i \in \setExpert}) \subseteq \bigcup_{i \in \setExpert} \setCateg_i,
\]
We refer to the size of the standardized output as $\dimStandard := |\standardizeFunc(\{\setCateg_i\}_{i \in \setExpert})|$. 

\textbf{Aggregation function:} It is  $\aggFun_\theta:\bigotimes_{i\in\mathcal{E}}\mathcal{C}_i\mapsto \mathcal{C}$ and characterized by a set of parameters $\theta=\{\theta_i\}_{i\in\mathcal{E}}$. It takes as input the experts' predictions $\{\bm{c}_i(\contexT)\}_{i\in\mathcal{E}}$ and produces a new prediction $\catAgg \in \mathcal{C}$. 

\textbf{Scoring function:} The \textit{goodness} of the aggregation function's output is valued by the scoring function, $\scrFun :\mathcal{C} \mapsto \mathbb{R}$ which assigns a real-value score in $\mathbb{R}$ to it's prediction. Suppose a series of contexts, $\contexT$, are randomly drawn from a distribution $P$. This contextual information is passed into each expert, and the predictions are subsequently aggregated parametrically. As indicated in Eq. \eqref{eq:context_label_scoring}, the prediction itself may be context-dependent. The scoring function evaluates the accuracy or quality of the prediction. The performance of a given aggregation function $\aggFun_\theta$ is measured by the average scoring function,
\begin{align}
    \mathbb{E}_{\contexT\sim P}[\scrFun \circ \aggFun_\theta] = \int_{\contexT} \scrFun \circ \aggFun_\theta\left(\{\bm{c}_i(\contexT)\}_{i\in\mathcal{E}}\right) \, \, dP(\contexT), \label{eq:context_label_scoring}
\end{align}
where $\scrFun \circ \aggFun_\theta$ denotes the composition of the scoring and aggregation functions. We aim is to maximize the average scoring function, formulated as,
$
    \theta^*\in\argmax_{\theta \in \bm{\theta}} \,\, \mathbb{E}_{\contexT\sim P}[\scrFun \circ \aggFun_\theta].
$ 
The properties of the utility function, such as commutativity or monotonicity, can depend on the structure of the aggregation outcomes. Our goal is to design an algorithm that learns the optimal parameters for the aggregation algorithm in an online setting.

\begin{remark} \label{rem:catalogue_expert}
    \textup{\textbf{Catalogue Standardizer:}} The standard dimension represents unified responses across experts (e.g., a fixed set of answers for LLMs). We distinguish two approaches: (1) \textbf{catalogue-based}, the model produces a catalogue of items where each expert selects from, and (2) \textbf{model-based}, where experts (e.g., transformers, MLPs) project arbitrary outputs to a standardized logit or token space $\mathbb{R}^D$. We applied the \textbf{catalogue} based standardizer for our experiments.
\end{remark}

\iftoggle{nips}{\vspace{-0.3em}}

The score of individual expert, expert's \textit{competency}, can be obtained from the scoring function by computing the expectation over all possible contexts distributed according to $P(\contexT)$ as follows,
\begin{align}
    p_i:= \int_{\contexT} \scrFun\left(\bm{c}_i(\contexT)\right) \, \, dP(\contexT).
\end{align}
In more straightforward terms, the competency of an expert refers to its individual ability to maximize the scoring-aggregation function over a singleton committee consisting of itself - i.e. the ability of the expert to individually predict the correct answer among candidate solutions. In voting theory, this is an important piece of information measuring potential effectiveness of aggregating experts. The average scoring function is the expectation over  the composition of the aggregation first $\aggFun(\cdot)$, followed by the scoring function $\scrFun(\cdot)$, according to distribution $P(\contexT)$ over all contexts, Eq. \eqref{eq:context_label_scoring}. Which we denote as as the scoring-aggregation function, or aggregation function for brevity.

\begin{assumption} \label{ass:eps-increm}
    \textbf{$\minPEps$-Incrementality (Heterogeneous Expert Competencies):} Every expert has a different $p_i$, offset by at least $\minPEps$ increment, that is $\inf_{i\neq j}\left( p_i - p_j \right) \geq \minPEps, \,\, \forall i, j \in 1 \dots N$. 
\end{assumption}

\vspace{-.2cm}
\paragraph{Extensions of Condorcet's Jury Theorem:} The celebrated Condorcet's Jury Theorem \cite{condorcet:1785_jury} states that with competent experts ($p_i > 0.5$), majority voting accuracy approaches 1 as $N \to \infty$. \citeA{list:2001_plurality_epistemic} extends this to plurality voting, showing asymptotic accuracy can hold even with $p_i < 0.5$ under strict reliability assumptions, i.e. that the probability of any expert selecting the correct class is greater than selecting any incorrect class. However, when these assumptions fail, incompetent experts can degrade aggregate accuracy, highlighting the need for robust expert aggregation in our setting. Our work applies epistemological social choice theory \cite{brandt:2016_handbook, list:2001_plurality_epistemic} to MoE models, focusing on maximizing predictive accuracy with a modest, finite set of experts ($\sim$20), where aggregate accuracy guarantees may or may not hold \cite{list:2001_plurality_epistemic, everaere:2010_belief_merging_truth_tracking, karge:2022_belief_fusion}. Unlike traditional social choice settings, we prioritize maximizing output accuracy (and accordingly minimizing regret) under imperfect information. Key challenges include dynamically inferring heterogeneous expert competencies and aggregating unreliable multi-class predictions under sample efficiency constraints.

\subsection{Egalitarian Majority Voting} \label{sec:egal_committee}

Consider a special setting in which experts' scored prediction are binary $\tT{Im}(\scrFun) =\{0,1\}$, where 1 indicates that the expert is deemed correct and 0 if incorrect. 
Suppose that the score-aggregation function is the majority voting with a random tie-breaking. 
In this setting, the composition of the scoring and aggregation function is expressed in Eq. \eqref{eq:score_agg_fun}. In words, it is 1 whenever majority of the experts give one and zero otherwise. 
\begin{small}
\begin{align}\label{eq:score_agg_fun}
    & \scrFun \circ \aggFun(\{\bm{c}_i(\contexT)\}_{i\in\mathcal{E}})=\mathbb{I}\left(\sum_{i\in\mathcal{E}} \scrFun (\bm{c}_i(\contexT))> \frac{N}{2}\right) + \mathbb{I}\left(\sum_{i\in\mathcal{E}} \scrFun (\bm{c}_i(\contexT))= \frac{N}{2}\right)\cdot Y,
\end{align}
\end{small}
where $\mathbb{I}(\cdot)$ is the indicator function and $Y\sim Be(1/2)$ is a Bernoulli random variable with parameter $1/2$. 
Let $\mathcal{X}_i:=\{\bm{x} = 1 | \bm{c}_i(\contexT) \}$ to be the set of contexts for which expert $i$ gives a correct prediction and $\mathcal{X}_i^c$ its complement. 
Note that the measure of $\mathcal{X}_i$ is $p_i$. Using this definition, the average scoring function can be rewritten as follows,
\begin{small}
\begin{align*}
    \mathbb{E}[\scrFun \circ \aggFun]= \proB \left(\bigcup_{\substack{S\subseteq\mathcal{E} \\ |S|> N/2  }}\bigcap_{i\in S} \mathcal{X}_i \bigcap_{j\in S^c} \mathcal{X}^c_j \right) + \frac{1}{2} \proB \left(\bigcup_{\substack{S\subseteq\mathcal{E} \\ |S|=N/2  }}\bigcap_{i\in S} \mathcal{X}_i \bigcap_{j\in S^c} \mathcal{X}^c_j \right).
\end{align*}
\end{small}

\begin{remark}
    This setting can be extended to non-binary prediction set, i.e., $\tT{Im}(\scrFun)=\{1,2,...,K\}$ for some $K$ and the output of the aggregation function is $i\in\tT{Im}(\scrFun)$ whenever a majority of experts vote for $i$. 
    In the event of a tie among the experts, the aggregation function outputs one of the tied items uniformly at random. That is the Bernoulli variable in \eqref{eq:score_agg_fun} will be replaced with a uniform random variable over the set of items that received the highest (tied) number of votes.       
\end{remark}

\textbf{Independent Experts:} Suppose that the experts are independent, i.e., $P(\bigcap_{i\in S} \mathcal{X}_i)=\prod_{i\in S}P(\mathcal{X}_i)$ for all $S\subseteq\mathcal{E}$. In this case, the average scoring function will be,
\begin{small}
\begin{align}\label{eq:equal_weights}
    \mathbb{E}[\scrFun \circ \aggFun]=\sum_{\substack{S\subseteq\mathcal{E} \\ |S|>  N/2  }}\prod_{i\in S}p_i\prod_{j\in S^c}(1-p_j) + \frac{1}{2}\sum_{\substack{S\subseteq\mathcal{E} \\ |S|=N/2  }}\prod_{i\in S}p_i\prod_{j\in S^c}(1-p_j).
\end{align}
\end{small}
Evaluating the above averaging function is combinatorially complex as its complexity grows exponentially with the number of experts $N$. Combinatorial multi-armed bandit (CMAB) have been applied for this purpose in \cite{merlis:2019_batch, chen:2013_combinatorial, wang:2017_improving}. Such methods assume the learning algorithm has access to an offline ($\alpha \beta$)-approximation oracle oracle where, with probability $\beta$, it outputs a solution whose value is at least $\alpha$ proportional of the maximum reward. Specifically, when the reward function is monotone and submodular, such an oracle can be given where $\alpha=1-1/e$. In our setting, the learner does not have access to such an ($\alpha \beta$)-oracle. For instance, adding experts an existing committee does not result in any generalizable behaviour in our majority voting setting. 
Throughout this paper, we assume that the experts are independent. However, as it is discussed in Section \ref{sec:online}, our proposed algorithms assemble their committees solely using the marginal experts' competencies and subsequently, the theoretical guarantees remain valid even if the experts are correlated. 

\textbf{Optimal Egalitarian Committee (OEC):} The optimal egalitarian committee (EC), also known as the binary weighted participation problem, refers to the formation of a committee in which each expert receives exactly one vote. Examples of such committees include corporate boards, academic panels, and juries. In machine learning, similar structures appear in ensemble methods such as Random Forests, boosting algorithms, deep learning ensembles, and Mixture-of-Experts (MoE) models.

While online learning algorithms like \tT{EXP4} typically converge to the best expert in hindsight, prior work has shown that it is often possible to achieve better performance by identifying an optimal egalitarian committee—that is, a subset of experts whose aggregated votes outperform the best expert. In this setting, the final prediction is determined by majority vote, and the resulting combined estimate can exceed the accuracy of any individual expert. We denote this committee as $\mathcal{E}^* \subseteq \setE$. 

When considering the multi-class classification problem, for a majority vote to pass, more than some minimum amount, referred to as a \textit{quota}, of the experts should have voted for the correct answer, each occurring with probability $p_i$.
For majority voting, we set the quota to be half of the number of experts (Optionally, the quota could also be reduced to allow for the experts to arrive at a clear option as a winner from a choice of options \cite{brandt:2016_handbook}). We denote the average scoring function of a candidate committee set $\setExpertCand\subseteq \mathcal{E}$ by $\pMaj(\setExpertCand)$ and is given by,
\begin{align}\label{eq:combo_maj_vote} 
    \pMaj(\setExpertCand) :=\sum_{\substack{S\subseteq\setExpertCand \\ |S|\geq |\setExpertCand|/2}}\prod_{i\in S}p_i\prod_{j\in S^c}(1-p_j)+\frac{1}{2}\sum_{\substack{S\subseteq\setExpertCand \\ |S|=|\setExpertCand|/2  }}\prod_{i\in S}p_i\prod_{j\in S^c}(1-p_j).  
\end{align}
Accordingly, the optimal committee is $\setExpert^*\in\arg\max_{\setExpertCand\subseteq \setExpert}\pMaj(\setExpertCand)$.

\textbf{Successive Expert Elimination Function:} To form th OEC, we can intuitively successively include each expert sorted by their respective $p_i$, to a candidate expert set $\setExpertCand$, given that such an inclusion would improve the overall accuracy of the committee. For any subset $\setExpert' \subseteq  \setExpertCand^c$, we define the advantage function as follows,
\begin{align}
    \advFunc(\setExpertCand, \setExpert') := \pMaj(\setExpertCand \cup \setExpert') - \pMaj(\setExpertCand), \quad \text{and} \quad \pMaj(\emptyset) := 0. \label{eq:succ_add_func_def}
\end{align}
Clearly, when the above function is negative, the addition of $\setExpert'$ to $\setExpertCand$ will not produce $\setExpert^*$, i.e.,
\begin{align*}
    \advFunc(\setExpertCand, \setExpert') \leq 0 \implies \setExpert^* \neq \setExpertCand \cup \setExpert'. \label{eq:expert_set_elimination}
\end{align*}

\begin{lemma} \label{lem:expert_ordinality}
    \textbf{Top-K Ordinality of Experts:} The optimal expert committee $\setExpert^*\subseteq \setExpert$ is a subset of Top-K experts based on their competencies, for $0 < K \leq N$. (Proof can be found in Appendix \ref{prf:expert_ordinality}.)
\end{lemma}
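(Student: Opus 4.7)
My plan is a pairwise exchange argument. By Assumption \ref{ass:eps-increm}, I would first relabel the experts so that $p_1 > p_2 > \cdots > p_N$. Setting $K := |\setExpert^*|$, it then suffices to prove $\setExpert^* = \{1, \ldots, K\}$, i.e.\ that the optimal committee of size $K$ is exactly the top-$K$ experts. I would argue by contradiction: suppose there exist $i \notin \setExpert^*$ and $j \in \setExpert^*$ with $p_i > p_j$, and define the swapped committee $\setExpert' := (\setExpert^* \setminus \{j\}) \cup \{i\}$, which has the same cardinality $K$ and hence the same majority quota.

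Next, letting $S := \setExpert^* \setminus \{j\}$ and writing $X$ for the random number of correct votes from the $K-1$ experts in $S$, the independence assumption lets me condition on the ``swap slot'' and write
\begin{equation*}
\pMaj(\setExpert') - \pMaj(\setExpert^*) \;=\; (p_i - p_j)\,(Q_1 - Q_0),
\end{equation*}
where $Q_1$ (respectively $Q_0$) is the probability that the aggregated vote is correct conditional on the swap-slot expert being correct (respectively incorrect). Consequently the lemma reduces to showing $Q_1 - Q_0 > 0$, since $p_i - p_j > 0$ already.

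The technical heart of the argument is a parity case analysis on $K$. For $K$ odd, the quota is $(K+1)/2$ and unrolling \eqref{eq:score_agg_fun} yields $Q_1 - Q_0 = \Pr[X = (K-1)/2]$, the classical pivotal-voter probability. For $K$ even, the randomized tie-break in \eqref{eq:score_agg_fun} contributes fractional mass; expanding both $Q_1$ and $Q_0$ via \eqref{eq:combo_maj_vote} I expect to obtain $Q_1 - Q_0 = \tfrac{1}{2}\Pr[X = K/2] + \tfrac{1}{2}\Pr[X = K/2 - 1]$. Since every $p_\ell \in (0,1)$ for $\ell \in S$, each pivotal event has strictly positive probability, so $\pMaj(\setExpert') > \pMaj(\setExpert^*)$, contradicting optimality of $\setExpert^*$.

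The main obstacle is the even-$K$ case: the Bernoulli tie-break in \eqref{eq:score_agg_fun} forces me to track two pivotal events rather than one, and it is easy to mis-sign one of the indicator terms and end up with a $Q_1 - Q_0$ that could formally vanish. Once the tie event is split correctly between the ``$X = K/2$'' and ``$X = K/2 - 1$'' cases, the remaining steps follow immediately from independence and the definition of $\pMaj$, and iterating the swap establishes the top-$K$ ordinality.
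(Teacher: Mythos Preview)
Your proposal is correct and follows essentially the same pairwise-exchange argument as the paper: assume an out-of-order expert, swap it for a higher-competency outsider, and show the majority accuracy strictly increases via the pivotal-voter probability. Your version is in fact more complete than the paper's, which only writes out the odd-$K$ case and leaves the even-$K$ tie-break analysis as ``similar''; your explicit factorization $(p_i-p_j)(Q_1-Q_0)$ and the even-$K$ formula are exactly the computations the paper elides.
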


\textbf{Elimination of Experts:} To build the OEC under the perfect information, i.e., experts' competencies are known, we propose a constructive algorithm, where we start with $\setExpertCand=\setExpert$, and successively eliminate admissible subsets $\setExpert'$, until there is no further possible improvement via the $\advFunc(\cdot)$. 
Given the result of Lemma \ref{lem:expert_ordinality}, starting with all experts, $\setExpert$, the set of admissible experts must fall within the set of Top-$K$ experts for some unknown $0\!<\!K\!\leq\! N$, i.e. $\setExpert' \subseteq \tT{TopK}(\setExpert)$. 
The benefit of Lemma \ref{lem:expert_ordinality} allows us to efficiently compute the OEC, avoiding combinatorial search over all possible subsets. Under perfect information, this allows us to construct the OEC via a greedy algorithm. (See Algorithm \ref{alg:greedy_oec} in Appendix.)

\subsection{Weighted Majority Voting} \label{sec:weighted_maj_voting}



We extend from the egalitarian committee in Section \ref{sec:egal_committee} to a scenario where voters in a committee have unequal voting power. Intuitively, more accurate experts should have stronger voting power to sway the accuracy of the majority voting system. In a \textit{weighted majority voting} (WMV) system, the voting mechanism is a decision-making system where each expert has a specific level of influence, determined by an assigned weight. Previously \cite{littlestone:1994_weighted_maj_problem} investigated the problem providing only an optimal error guarantee limited to binary predictions, in the offline setting where voters can only produce one of two outputs $\{0, 1\}$. We aim to extend the setting to multiple class predictions with no-regret guarantees in the online setting.

We denote the weight of expert $i\in\mathcal{E}$ by $\theta_i\in \mathbb{R}$. 
These weights often reflect factors such as expertise, historical performance, or stake in the outcome. Each expert votes for a candidate class, and the total weight of votes for each candidate is calculated. The option with the weight that surpasses a quota $Q$ is chosen as the final decision. This approach extends simple majority voting by allowing experts to have unequal influence, making it useful in contexts like corporate governance, expert panels, and machine learning ensembles, where experts vary in reliability or expertise. Let us define $ \pMaj(\setExpert, \theta)$ as the aggregate scoring function of a committee of voters with unequal weights denoted by $\theta=(\theta_i)_{i\in\mathcal{E}}$.


\noindent
\begin{minipage}{0.55\textwidth}
  \begin{equation}
    \pMaj(\setExpert, \theta)\!:=\!\sum_{S\subseteq\setExpert} \! \feasFunc[S, \theta]{N} \left( \prod_{i \in  S} p_i  \prod_{j\in S^c}  (1-p_j) \right), \label{eq:weighted_combo_maj_vote}
  \end{equation}
\end{minipage}%
\begin{minipage}{0.45\textwidth}
  \quad
  \begin{equation}
    \text{where} \quad \feasFunc[S, \theta]{N}\! :=\! \mathbb{I}\left(\sum_{j\in S}\theta_j>Q\right), \label{eq:quota_indicator_conditions}
  \end{equation}
\end{minipage}

where $\sum_{j\in\emptyset}\theta_j:=0$. In words the above selection function decides whether the configuration $S$ of the committee $\setExpert$ meets the feasibility requirements for the vote to pass. In the special case where all weights are one, then \eqref{eq:weighted_combo_maj_vote} reduces to \eqref{eq:equal_weights} with an appropriate tie-breaking. The weighted majority voting problem with quota $Q$ can be stated as,

\vspace{-1.9em}

\begin{align}\label{eq:wmp}
    \max_{\substack{\theta\in\mathbb{R}_+^{N}, \|\theta\|_1\leq 2Q}}\pMaj(\setExpert, \theta).
\end{align}

\iftoggle{nips}{\vspace{-0.5em}}

In our experiments, for simplicity, we used $Q = N/2$, ensuring majority consensus. This guarantees a clear winner when the majority of experts predict optimally. Should a majority decision be unreachable, a random class is sampled uniformly as the winner. To help further simplify the solution to Eq. \eqref{eq:wmp}, via Lemma \ref{lem:equality_Q}, we can solve for $\theta$ at equality as opposed to constrained cap on $Q$, constituting a more convenient expression encompassing the latter.

\begin{lemma} \label{lem:equality_Q}
    \textbf{Weights Satisfied at Equality:} When solving for the optimal weights for $\theta$-weighted majority voting, the constraint \textup{$Q \leq \|\theta\|_1\leq 2Q$}, from Eq. \eqref{eq:wmp} can be replaced with  \textup{$\|\theta\|_1 = 2Q$}. (Proof is available in Appendix \ref{prf:equality_Q}.)
\end{lemma}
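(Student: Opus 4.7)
The plan is a one-step positive-rescaling argument. Given any feasible $\theta \in \mathbb{R}_+^N$ with $0 < \|\theta\|_1 \leq 2Q$, I would set $\alpha := 2Q/\|\theta\|_1 \geq 1$ and consider the rescaled vector $\theta' := \alpha\theta$, which satisfies $\|\theta'\|_1 = 2Q$. The claim then reduces to showing $\pMaj(\setExpert, \theta') \geq \pMaj(\setExpert, \theta)$, so that the maximum of \eqref{eq:wmp} is always attained on the boundary $\|\theta\|_1 = 2Q$.

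The key observation concerns how the indicator in \eqref{eq:quota_indicator_conditions} transforms under positive scaling. Because the quota test is linear in $\theta$, we have
\[
\mathbb{I}\!\left(\textstyle\sum_{j\in S}\alpha\theta_j > Q\right) = \mathbb{I}\!\left(\textstyle\sum_{j\in S}\theta_j > Q/\alpha\right),
\]
and $\alpha \geq 1$ forces $Q/\alpha \leq Q$. Consequently the family of winning coalitions only grows under rescaling,
\[
\{S \subseteq \setExpert : \textstyle\sum_{j\in S}\theta_j > Q\} \;\subseteq\; \{S \subseteq \setExpert : \textstyle\sum_{j\in S}\theta'_j > Q\}.
\]
Since each summand $\prod_{i\in S}p_i\prod_{j\in S^c}(1-p_j)$ in \eqref{eq:weighted_combo_maj_vote} is nonnegative (as $p_i \in [0,1]$), enlarging the set of feasible coalitions can only weakly increase the sum, giving $\pMaj(\setExpert, \theta') \geq \pMaj(\setExpert, \theta)$.

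Finally I would dispose of the trivial corner case $\|\theta\|_1 = 0$, where the feasibility indicator is identically zero and $\pMaj = 0$; this is dominated by, for instance, assigning all mass $2Q$ to the most competent expert, which attains value $\max_i p_i$. Combining the scaling argument with this observation shows that restricting to $\|\theta\|_1 = 2Q$ loses nothing, proving the lemma. There is no real obstacle here: the argument is essentially the homogeneity of the quota inequality plus nonnegativity of the Bernoulli product weights; the only point that requires a moment's care is checking that $\alpha \geq 1$ produces a weak enlargement (rather than a shrinkage) of the winning-coalition family.
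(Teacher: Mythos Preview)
Your proof is correct and follows essentially the same approach as the paper: both argue that increasing the total weight can only enlarge the family of winning coalitions and hence weakly increase $\pMaj$. Your explicit rescaling argument is in fact a cleaner and more rigorous formulation of the paper's rather informal monotonicity claim.
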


\iftoggle{nips}{\vspace{-0.7em}}


\subsubsection{Solving the Optimal Weighted Voting Committee} 

A key objective is optimizing the parameters, $\theta$, to maximize majority voting accuracy in Eq. \eqref{eq:weighted_combo_maj_vote}. While convex combinations of expert outputs with non-zero weights may yield higher optima, the problem involves iterative combinatorial complexity. For binary outcomes, i.e. only two classes to vote for, optimal weights follow the log-odds ratio: $\theta^*_i = \log(p_i/1-p_i), \, \, \forall \theta_i \in \mathbb{R}$
\cite{berend:2014_binary_maj_voting,nitzan:1982_binary_voting} (see Appendix \ref{sec:binary_weighted_voting} for a discussion and proof). For multi-class settings, the specifications of the problem challenges the aforementioned existing theoretical guarantees. \citeA{everaere:2010_belief_merging_truth_tracking} and \citeA{abramowitz:2022_review_reviewers} provide further insight, yet optimality guarantees still require binary outcomes or large expert pools. \citeA{karge:2022_belief_fusion} provides non-asymptotic bounds via a competency gap, but remains limited for small-scale online settings. Therefore, our problem requires novel approaches balancing accuracy maximization with sample efficiency and expert heterogeneity. First, we leverage potential simplifications in optimal weighted majority voting problem structure.

\begin{proposition} \label{prop:discrete_img}
    \textbf{Discrete Image:} 
    Many configurations of $\theta$ can lead to the same result for $\pMaj(\setExpert, \theta)$. (Proof is provided in Appendix \ref{prf:discrete_img}.).
\end{proposition}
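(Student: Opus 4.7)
The plan is to exploit the fact that $\pMaj(\setExpert, \theta)$ depends on $\theta$ only through the collection of indicator values $\{\feasFunc[S,\theta]{N}\}_{S \subseteq \setExpert}$ appearing in Eq. \eqref{eq:weighted_combo_maj_vote}. Writing $w_S := \prod_{i \in S} p_i \prod_{j \in S^c}(1-p_j)$, which is independent of $\theta$, I would rewrite
\begin{equation*}
\pMaj(\setExpert, \theta) = \sum_{S \subseteq \setExpert} \feasFunc[S,\theta]{N}\, w_S,
\end{equation*}
so that $\pMaj(\setExpert, \cdot)$ is a fixed linear functional of the binary pattern $(\feasFunc[S,\theta]{N})_{S \subseteq \setExpert} \in \{0,1\}^{2^N}$.

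Next, I would observe that this binary pattern is piecewise constant in $\theta$: as a function of $\theta$, the indicator $\feasFunc[S,\theta]{N} = \mathbb{I}(\sum_{j \in S}\theta_j > Q)$ changes value only when $\theta$ crosses the hyperplane $H_S := \{\theta : \sum_{j \in S}\theta_j = Q\}$. Since there are only $2^N$ such hyperplanes (one per subset $S \subseteq \setExpert$), the complement $\mathbb{R}^N \setminus \bigcup_S H_S$ decomposes into finitely many open cells, on each of which the full pattern $(\feasFunc[S,\theta]{N})_S$, and hence $\pMaj(\setExpert, \theta)$, is constant.

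Combining the two steps, the image of $\theta \mapsto \pMaj(\setExpert, \theta)$ is contained in the finite set $\{\sum_S b_S w_S : (b_S) \in \{0,1\}^{2^N}\}$, and is therefore a discrete (in fact, finite) subset of $\mathbb{R}$. Restricted to the feasible set $\{\theta \in \mathbb{R}_+^N : \|\theta\|_1 = 2Q\}$ from Eq. \eqref{eq:wmp}, which by Lemma \ref{lem:equality_Q} is an $(N-1)$-dimensional simplex, the map is still piecewise constant with finitely many attained values, so each value in the image has a preimage containing an open cell of positive $(N-1)$-dimensional measure. In particular, uncountably many $\theta$ yield the same $\pMaj(\setExpert, \theta)$, proving the claim.

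The main obstacle is essentially notational rather than technical: one must be careful about the edge cases where $\theta$ lies on one of the hyperplanes $H_S$, since $\feasFunc[S,\theta]{N}$ uses a strict inequality and so the behavior on $\bigcup_S H_S$ is a measure-zero set that does not affect the conclusion. The crude cardinality bound $2^{2^N}$ on the image can be sharpened using the theory of weighted voting games (only monotone coalition families arise, bounded by the Dedekind numbers), but no such refinement is needed for the stated proposition, which only asserts non-injectivity.
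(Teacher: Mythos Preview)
Your argument is correct and rests on the same key observation as the paper's proof: $\pMaj(\setExpert,\theta)$ depends on $\theta$ only through the finite family of threshold indicators $\feasFunc[S,\theta]{N}$, so it is piecewise constant. The paper presents this via an explicit local perturbation (choosing $\epsilon$ small enough that no subset sum crosses $Q$), while you phrase it globally in terms of the hyperplane arrangement $\{H_S\}_{S\subseteq\setExpert}$ and cells; these are two packagings of the same idea, with your version giving a slightly cleaner bound on the image and the paper's version being more constructive about exhibiting a second $\theta$ with the same value.
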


\iftoggle{nips}{\vspace{-0.5em}}

The results of Prop. \ref{prop:discrete_img} are rather intuitive, stating that many different configurations of weights could lead to the same expected accuracy via majority voting. One can prove this constructively by considering a scenario where two or more experts do not contribute to the optimal solution. So long as their weights do not contribute to swinging a majority vote for any possible outcome (i.e. this could be an infinitesimally small weight), it could be a valid optimal solution. Next, we provide an algorithm which for any given ordering over the competencies, denoted as $\tT{Ord}(p) := (p_1, p_2 , \dots p_N)$, such that $p_1 \geq p_2 \geq \dots p_N$, computes an optimal set of weights.

\iftoggle{nips}{\vspace{-0.5em}}

\paragraph{Mixed Integer Program Formulation:} The exact formulation of the optimal weights in the voting problem can be formulated as a mixed-integer program (MIP), governed by the properties outlined in this section. 
Let \( \boldsymbol{\theta}\! =\! [\theta_1, \theta_2, \dots, \theta_N] \) be a  positive vector, where \( \theta_i\! \geq\! 0 \) for all \( i \). Let \( \mathbf{x}\! =\! [\mathbf{x}_1, \mathbf{x}_2, \dots, \mathbf{x}_{2^N}] \) be a predefined vector where $\mathbf{x}_S\!\in\!\{0,1\}^N$ and represents configuration $S\subseteq \setExpert$, i.e., $[\mathbf{x}_S]_j\!=\!1$ if and only if expert $j$ is voted correctly in configuration $S$. 
We also define $\indicMip_S$
as a binary variable corresponding to $S$, determined by the following if-else condition, 
\begin{equation*}
\indicMip_S := 
\begin{cases} 
1 & \text{if } \mathbf{x}_S \cdot \boldsymbol{\theta} > Q,\\
0 & \text{otherwise},
\end{cases}
\end{equation*}
where $Q \in \mathbb{R}^+$ is the given quota. 
The problem in \eqref{eq:wmp} can be formulated as an MIP with the following objective function,
\begin{equation}
\max_{\theta \in \mathbb{R}^{N}_+} \,\, \sum_{S \subseteq \setExpert} \indicMip_S \prod_{i \in  S} p_i  \prod_{j\in S^c}  (1-p_j). \label{eq:mip_objective_func}
\end{equation}

This MIP computes the expected value over all voting outcomes using a binary selection variable and possibility variable $\indicMip_S$. When maximizing for the MIP objective, we must formulate a set of valid constraints. 
Leveraging Lemma \ref{lem:equality_Q}, we tighten the quota condition, introducing constraint \(\sum_j \theta_j = 2Q\). Assumption \ref{ass:eps-increm} enforces $\theta_i \neq \theta_j$ as a constraint. 
Importantly, we enforce logical consistency by ensuring that complementary voting outcomes cannot simultaneously satisfy both an outcome and its complement
(e.g. if scenario $S = [0,1,1,0]$ is possible and $\indicMip_S = 1$, then its complement $S^c =[1,0,0,1]$ is not, implying  $\indicMip_{S^c} = 0$.). These constraints collectively validate voting configurations, enforce expert competency distinctions, and eliminate contradictory outcomes, while maximizing for a global solution. We provide the details for this MIP formulation in Appendix \ref{sec:mip_solution_appendix}.

\begin{lemma} \label{lem:maj_vote_dominance}
    \textbf{Majority Voting Dominance over Egalitarian Voting:} For any set of experts, the optimal majority weighted voting method will always yield a stronger or equal predictive accuracy than the plurality vote of the committee. (Proof can be found in Appendix \ref{prf:maj_vote_dominance}.) 
\end{lemma}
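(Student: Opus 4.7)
My approach is to show that egalitarian voting on any subset can be emulated by a weighted voting scheme (modulo tie-breaking), so that the WMV optimum of \eqref{eq:wmp} dominates the egalitarian optimum. Let $\setExpertCand^{*} \in \arg\max_{\setExpertCand \subseteq \setExpert} \pMaj(\setExpertCand)$ be an optimal egalitarian committee.

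First, I would define a candidate weight vector $\theta^{0} \in \mathbb{R}_+^N$ by $\theta^{0}_i := 2Q/|\setExpertCand^{*}|$ for $i \in \setExpertCand^{*}$ and $0$ otherwise; then $\|\theta^{0}\|_1 = 2Q$, so $\theta^{0}$ is feasible for \eqref{eq:wmp} by Lemma \ref{lem:equality_Q}. Under this choice, the WMV rule in \eqref{eq:weighted_combo_maj_vote} accepts a configuration $S \subseteq \setExpert$ iff $|S \cap \setExpertCand^{*}| > |\setExpertCand^{*}|/2$, matching the egalitarian rule on $\setExpertCand^{*}$ except on tied configurations (possible only if $|\setExpertCand^{*}|$ is even). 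Marginalising out the experts in $\setExpert \setminus \setExpertCand^{*}$ (whose Bernoulli mass sums to one) gives $\pMaj(\setExpert, \theta^{0}) = A$, while $\pMaj(\setExpertCand^{*}) = A + \tau/2$, where $A$ and $\tau$ denote the strict-majority and tied contributions on $\setExpertCand^{*}$, respectively. When $|\setExpertCand^{*}|$ is odd, $\tau = 0$ and the bound follows immediately.

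The main obstacle is the even case $|\setExpertCand^{*}| = 2k$, where WMV's strict inequality resolves ties as failures rather than by a fair coin. To close this gap, I would introduce a generic perturbation $\mathbf{v} \in \mathbb{R}^N$ supported on $\setExpertCand^{*}$ satisfying $\sum_{j \in \setExpertCand^{*}} v_j = 0$ and $\sum_{j \in U} v_j \neq 0$ for every $U \subseteq \setExpertCand^{*}$ with $|U| = k$; such a $\mathbf{v}$ exists by a dimension argument (almost every Gaussian on the zero-sum hyperplane qualifies). For $\epsilon > 0$ sufficiently small, the perturbed weights $\theta^{\pm} := \theta^{0} \pm \epsilon \mathbf{v}$ remain in $\mathbb{R}_+^N$ with $\|\theta^{\pm}\|_1 = 2Q$, preserve WMV decisions on all non-tied configurations, and, for every previously-tied configuration, exactly one of $\theta^{+}, \theta^{-}$ now passes.

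Combining these observations yields
\begin{equation*}
\pMaj(\setExpert, \theta^{+}) + \pMaj(\setExpert, \theta^{-}) = 2A + \tau = 2\,\pMaj(\setExpertCand^{*}),
\end{equation*}
so $\max\bigl\{\pMaj(\setExpert, \theta^{+}), \pMaj(\setExpert, \theta^{-})\bigr\} \geq \pMaj(\setExpertCand^{*})$. Since both $\theta^{\pm}$ are feasible for \eqref{eq:wmp}, the weighted-voting optimum satisfies $\max_{\theta} \pMaj(\setExpert, \theta) \geq \pMaj(\setExpertCand^{*})$, which is the claimed dominance.
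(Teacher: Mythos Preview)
Your proof is correct and follows the same core strategy as the paper: emulate the optimal egalitarian committee $\setExpertCand^{*}$ by a weight vector that gives equal positive mass to members of $\setExpertCand^{*}$ and zero to non-members, then marginalize out the excluded experts to recover $\pMaj(\setExpertCand^{*})$. The paper simply sets $\theta_i=1$ for $i\in\setExpertCand^{*}$ and asserts equality with $\pMaj(\setExpertCand^{*})$, without confronting the mismatch between the strict inequality in \eqref{eq:quota_indicator_conditions} and the random tie-breaking term in \eqref{eq:combo_maj_vote}. Your treatment is more careful on this point: you normalize to hit $\|\theta\|_1=2Q$ exactly, and for even $|\setExpertCand^{*}|$ you introduce the generic zero-sum perturbation $\pm\epsilon\mathbf{v}$ and use the averaging identity $\pMaj(\setExpert,\theta^{+})+\pMaj(\setExpert,\theta^{-})=2A+\tau$ to conclude. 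This perturbation step is a genuine refinement that closes a gap the paper's proof leaves implicit, at the cost of a slightly longer argument.
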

Lemma \ref{lem:maj_vote_dominance} posits that given the same committee of experts, utilizing the $\theta$-WMV algorithm (Alg. \ref{alg:theta_wmv}), to aggregate experts’ output will always yield a stronger result than using egalitarian voting. Therefore, if the computational resources are available the $\theta$-WMV algorithm will produce a better accuracy. However, this depends on the systems ability to serve all experts and provide a solution to the MIP expressed in Eq. \eqref{eq:mip_objective_func}. If the goal is to learn an efficient representation of experts (i.e. a sparse set of experts) then \tT{SEE} (in Alg. \ref{alg:succEx}) can be a better choice to retain efficiency.


\section{Online Learning} \label{sec:online}

\iftoggle{nips}{\vspace{-0.3em}}

In previous work, the online MoE problem was typically framed as a no-regret problem relative to the best single expert in hindsight (Hannan consistency). Recent work on online MoE learning focuses on identifying top-$K$ experts based on their competencies. A prominent gap remains in efficiently learning optimal aggregation with heterogeneous expert competencies \textit{without} combinatorial explosion. Prior work assumes either pairwise feedback (duelling) or submodularity, yet there remains a focal body of work addressing optimal MoE aggregation in an online learning setting. Key previous approaches include duelling bandits, where pairwise comparisons identify dominant experts \cite{yue:2012_duelling_bandit,dudik:2015contextual_duel}, and submodular bandits, which treat committee formation as a matroid optimization problem with greedy guarantees \cite{chen:2017_submod_bandit,hazan:2012_online_sub}. Alternatively, satisficing methods \cite{niazadeh:2021_online_greedy,feng:2025_satisficing,russo:2022_satisficing} trade optimality for efficiency, while Top-K bandits \cite{evendar:2006_succ_elim,bubeck:2013_CSAR,rejwan:2020_topk_combo_bandit} leverage competency ordering for successive acceptance and elimination of arms. Despite the ample amount of research in the online Top-K arm-selection area, a prominent research gap yet remains which bridges voting theory to online no-regret guarantees.

\iftoggle{nips}{\vspace{-0.6em}}

\paragraph{Notion of Regret:} As our proposed algorithm is a no-regret learner, we formally introduce the notion of regret in Def. \ref{def:regret}. Cumulative regret, $R_T$, is defined as the difference between the expected majority voting aggregate accuracy, $\pMaj(\setExpert, \theta^*)$, under parameters, $\theta^*$, subtracted by the expected aggregate accuracy under the current parameters $\pMaj(\setExpert, \theta^t)$.

\begin{definition} \label{def:regret} \textbf{Regret:}
    The cumulative regret of an online algorithm over a time horizon of length $T$ that selects a set of weights $\theta^t$ at round $t$ is given by 
$
    R_T := \sum_{t=1}^T \pMaj(\setExpert, \theta^*) - \pMaj(\setExpert, \theta^t).
$
\end{definition}

\iftoggle{nips}{\vspace{-0.9em}}

\subsection{Successive Expert Elimination (\tT{SEE})}

\iftoggle{nips}{\vspace{-0.6em}}

The idea of successive elimination in online learning was demonstrated in \citeA{evendar:2006_succ_elim} where the successive elimination bandit algorithm with a PAC bound on sample complexity was proposed, along with both a gap dependent and gap independent versions based on the UCB. \citeA{rejwan:2020_topk_combo_bandit} presents a successive elimination algorithm in the full-bandit-feedback setting (a minor detail is that only the sum of rewards is available, which is an additive result from the individual rewards). Yet, it does not consider the optimal Top-K expert selection (only that K is given).To distinguish, in our situation, the aggregate reward, $\pMaj$ function is not necessarily additive, and the Top-$K$ is not given beforehand and must be determined. We first propose an algorithm based on the UCB, where there is some confidence range between our estimate of $\hat{p}_i$ and the true value for each expert $p^*_i$. This confidence bound should shrink over time with increasing samples.


\iftoggle{nips}{\vspace{-0.6em}}

\tikzset{every picture/.style={line width=0.75pt}} 

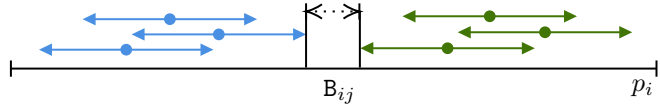
\begin{figure}[ht]
\centering
\begin{tikzpicture}[x=0.55pt,y=0.55pt,yscale=-1,xscale=1]


\draw [color={rgb, 255:red, 74; green, 144; blue, 226 }  ,draw opacity=1 ]   (69.33,40.33) -- (183.33,40.33) ;
\draw [shift={(186.33,40.33)}, rotate = 180] [fill={rgb, 255:red, 74; green, 144; blue, 226 }  ,fill opacity=1 ][line width=0.08]  [draw opacity=0] (8.93,-4.29) -- (0,0) -- (8.93,4.29) -- cycle    ;
\draw [shift={(126.33,40.33)}, rotate = 0] [color={rgb, 255:red, 74; green, 144; blue, 226 }  ,draw opacity=1 ][fill={rgb, 255:red, 74; green, 144; blue, 226 }  ,fill opacity=1 ][line width=0.75]      (0, 0) circle [x radius= 3.35, y radius= 3.35]   ;
\draw [shift={(66.33,40.33)}, rotate = 0] [fill={rgb, 255:red, 74; green, 144; blue, 226 }  ,fill opacity=1 ][line width=0.08]  [draw opacity=0] (8.93,-4.29) -- (0,0) -- (8.93,4.29) -- cycle    ;
\draw [color={rgb, 255:red, 74; green, 144; blue, 226 }  ,draw opacity=1 ]   (103,50.8) -- (217,50.8) ;
\draw [shift={(220,50.8)}, rotate = 180] [fill={rgb, 255:red, 74; green, 144; blue, 226 }  ,fill opacity=1 ][line width=0.08]  [draw opacity=0] (8.93,-4.29) -- (0,0) -- (8.93,4.29) -- cycle    ;
\draw [shift={(160,50.8)}, rotate = 0] [color={rgb, 255:red, 74; green, 144; blue, 226 }  ,draw opacity=1 ][fill={rgb, 255:red, 74; green, 144; blue, 226 }  ,fill opacity=1 ][line width=0.75]      (0, 0) circle [x radius= 3.35, y radius= 3.35]   ;
\draw [shift={(100,50.8)}, rotate = 0] [fill={rgb, 255:red, 74; green, 144; blue, 226 }  ,fill opacity=1 ][line width=0.08]  [draw opacity=0] (8.93,-4.29) -- (0,0) -- (8.93,4.29) -- cycle    ;
\draw [color={rgb, 255:red, 74; green, 144; blue, 226 }  ,draw opacity=1 ]   (39.33,61.33) -- (153.33,61.33) ;
\draw [shift={(156.33,61.33)}, rotate = 180] [fill={rgb, 255:red, 74; green, 144; blue, 226 }  ,fill opacity=1 ][line width=0.08]  [draw opacity=0] (8.93,-4.29) -- (0,0) -- (8.93,4.29) -- cycle    ;
\draw [shift={(96.33,61.33)}, rotate = 0] [color={rgb, 255:red, 74; green, 144; blue, 226 }  ,draw opacity=1 ][fill={rgb, 255:red, 74; green, 144; blue, 226 }  ,fill opacity=1 ][line width=0.75]      (0, 0) circle [x radius= 3.35, y radius= 3.35]   ;
\draw [shift={(36.33,61.33)}, rotate = 0] [fill={rgb, 255:red, 74; green, 144; blue, 226 }  ,fill opacity=1 ][line width=0.08]  [draw opacity=0] (8.93,-4.29) -- (0,0) -- (8.93,4.29) -- cycle    ;
\draw [color={rgb, 255:red, 65; green, 117; blue, 5 }  ,draw opacity=1 ]   (289.33,38.33) -- (403.33,38.33) ;
\draw [shift={(406.33,38.33)}, rotate = 180] [fill={rgb, 255:red, 65; green, 117; blue, 5 }  ,fill opacity=1 ][line width=0.08]  [draw opacity=0] (8.93,-4.29) -- (0,0) -- (8.93,4.29) -- cycle    ;
\draw [shift={(346.33,38.33)}, rotate = 0] [color={rgb, 255:red, 65; green, 117; blue, 5 }  ,draw opacity=1 ][fill={rgb, 255:red, 65; green, 117; blue, 5 }  ,fill opacity=1 ][line width=0.75]      (0, 0) circle [x radius= 3.35, y radius= 3.35]   ;
\draw [shift={(286.33,38.33)}, rotate = 0] [fill={rgb, 255:red, 65; green, 117; blue, 5 }  ,fill opacity=1 ][line width=0.08]  [draw opacity=0] (8.93,-4.29) -- (0,0) -- (8.93,4.29) -- cycle    ;
\draw [color={rgb, 255:red, 65; green, 117; blue, 5 }  ,draw opacity=1 ]   (327.33,49.33) -- (441.33,49.33) ;
\draw [shift={(444.33,49.33)}, rotate = 180] [fill={rgb, 255:red, 65; green, 117; blue, 5 }  ,fill opacity=1 ][line width=0.08]  [draw opacity=0] (8.93,-4.29) -- (0,0) -- (8.93,4.29) -- cycle    ;
\draw [shift={(384.33,49.33)}, rotate = 0] [color={rgb, 255:red, 65; green, 117; blue, 5 }  ,draw opacity=1 ][fill={rgb, 255:red, 65; green, 117; blue, 5 }  ,fill opacity=1 ][line width=0.75]      (0, 0) circle [x radius= 3.35, y radius= 3.35]   ;
\draw [shift={(324.33,49.33)}, rotate = 0] [fill={rgb, 255:red, 65; green, 117; blue, 5 }  ,fill opacity=1 ][line width=0.08]  [draw opacity=0] (8.93,-4.29) -- (0,0) -- (8.93,4.29) -- cycle    ;
\draw [color={rgb, 255:red, 65; green, 117; blue, 5 }  ,draw opacity=1 ]   (260.33,60.33) -- (374.33,60.33) ;
\draw [shift={(377.33,60.33)}, rotate = 180] [fill={rgb, 255:red, 65; green, 117; blue, 5 }  ,fill opacity=1 ][line width=0.08]  [draw opacity=0] (8.93,-4.29) -- (0,0) -- (8.93,4.29) -- cycle    ;
\draw [shift={(317.33,60.33)}, rotate = 0] [color={rgb, 255:red, 65; green, 117; blue, 5 }  ,draw opacity=1 ][fill={rgb, 255:red, 65; green, 117; blue, 5 }  ,fill opacity=1 ][line width=0.75]      (0, 0) circle [x radius= 3.35, y radius= 3.35]   ;
\draw [shift={(257.33,60.33)}, rotate = 0] [fill={rgb, 255:red, 65; green, 117; blue, 5 }  ,fill opacity=1 ][line width=0.08]  [draw opacity=0] (8.93,-4.29) -- (0,0) -- (8.93,4.29) -- cycle    ;
\draw [color={rgb, 255:red, 0; green, 0; blue, 0 }  ,draw opacity=1 ]   (220,35) -- (220,73.6) ;
\draw [color={rgb, 255:red, 0; green, 0; blue, 0 }  ,draw opacity=1 ]   (257,35.03) -- (257,73.6) ;
\draw    (17,74.33) -- (461,74.33) ;
\draw [shift={(461,74.33)}, rotate = 180] [color={rgb, 255:red, 0; green, 0; blue, 0 }  ][line width=0.75]    (0,5.59) -- (0,-5.59)   ;
\draw [shift={(17,74.33)}, rotate = 180] [color={rgb, 255:red, 0; green, 0; blue, 0 }  ][line width=0.75]    (0,5.59) -- (0,-5.59)   ;
\draw  [dash pattern={on 0.84pt off 2.51pt}]  (220,35) -- (256.67,35.03) ;
\draw [shift={(256.67,35.03)}, rotate = 180.05] [color={rgb, 255:red, 0; green, 0; blue, 0 }  ][line width=0.75]    (0,5.59) -- (0,-5.59)(10.93,-4.9) .. controls (6.95,-2.3) and (3.31,-0.67) .. (0,0) .. controls (3.31,0.67) and (6.95,2.3) .. (10.93,4.9)   ;
\draw [shift={(220,35)}, rotate = 0.05] [color={rgb, 255:red, 0; green, 0; blue, 0 }  ][line width=0.75]    (0,5.59) -- (0,-5.59)(10.93,-4.9) .. controls (6.95,-2.3) and (3.31,-0.67) .. (0,0) .. controls (3.31,0.67) and (6.95,2.3) .. (10.93,4.9)   ;

\draw (442,80) node [anchor=north west][inner sep=0.75pt]   [align=left] {$p_i$};
\draw (230,80) node [anchor=north west][inner sep=0.75pt]   [align=left] {$\breaK$};

\end{tikzpicture}
\vspace{-0.5em}

\caption{Illustration of a breakage event $\breaK$.} \label{fig:breakage} 
\end{figure}


\iftoggle{nips}{\vspace{-0.5em}}

\begin{definition}
    \textbf{Breakage Event:} A breakage event at round $t$ denoted as \textup{$\breaK^t$} between experts $i$ and $j$ denotes an event where the upper confidence limit of the empirical estimation of $j$'s competency at round $t$, $\hat{p}_j^t$,  
    is less than the lower confidence limit of $\hat{p}_i^t$, i.e., 
    \[
        \textup{$\breaK^t$}:   \hat{p}_j^t + \ucB_j^t < \hat{p}_i^t - \ucB_i^t.
    \]
\end{definition}

\paragraph{Intuition of Successive Expert Elimination (\tT{SEE}):} The key idea of \tT{SEE} is that we can use the confidence bounds to successively eliminate experts from a candidate committee until we arrive at a final set of optimal candidates, with high confidence. The algorithm begins with the complete set of candidate committee equal to the set of all experts $\setExpertCand \gets \setExpert$ and all experts are played at each round until a breakage event occurs $\breaK^t$. 
At the event of a breakage, we first order the experts whose estimated competencies are lower than expert $j$, say $j_1,j_2,...,j_k:=j$, such that $\hat{p}^t_{j_1} < \hat{p}^t_{j_2}\leq ...\leq \hat{p}^t_{j}$ (pictured as blue points in Diagram \ref{fig:breakage}). Then, we define $\tT{TopK}(\setExpertCand^{\leq j})$ as the set $\big\{\{j_k\}, \{j_k,j_{k-1}\},...,\{j_1,...,j_k\}\big\}$. 
To perform a removal test, for each element $\setExpert'$ in $\tT{TopK}(\setExpertCand^{\leq j})$ 
, we consult the advantage function to see if $\advFunc(\setExpertCand, \setExpert') \leq 0$; that is no advantage can be obtained by including $\setExpert'$ with high confidence. If so, all experts in $\setExpertCand^{\leq j}:=\{j_1,...,j_k\}$ could be subsequently eliminated from $\setExpertCand$ (i.e. removal of blue points from the previous union of blue and green w.r.t Diagram \ref{fig:breakage}).

\begin{algorithm}
\caption{Successive Expert Elimination (\tT{SEE})} \label{alg:succEx}
\begin{algorithmic}[1]
\State $\setExpertCand \gets \setExpert$ \Comment{Set the candidate expert set to the full set of experts.}
\For{$t = 1$ to $T$}
    \State Estimate $\hat{p}^t_i$ and upper confidence bound, $\ucB_i^t$, for each expert in $\setExpertCand$.   
    \If{any $\breaK^t$ event is observed} \Comment{Breakage event occurs.}
        \For{every $\breaK^t$, in descending order}
            
            
            
            \State $\setExpertCand^+ \gets \setExpertCand^{> j}$
            \If{ $\advFunc(\setExpertCand^+, \setExpert') \leq 0, \quad \forall \setExpert' \in \tT{TopK}(\setExpertCand^{\leq j})$} \Comment{Perform removal test.}
                \State $\setExpertCand \gets \setExpertCand \setminus \setExpertCand^{\leq j}.$ \Comment{Expert set truncation.}
            \EndIf
        \EndFor
    \EndIf
\EndFor

\State \Return $\setExpertCand$
\end{algorithmic}
\end{algorithm}

\begin{lemma}\label{lem:pac_sample_complexity_shrink}

\textbf{PAC Sample Complexity:} Suppose that the competency of each expert $i$ is modeled as a sub-Gaussian random variable with mean $p_i$ and variance $\sigma_i^2\leq \sigma^2$.  Under Assumption \ref{ass:eps-increm}, for any two experts $i$ and $j$, if the number of samples $t$ and shrinkage term $\ucB_i^t$ satisfy 
\[
    \ucB_i^t := \sqrt{\frac{2 \sigma_i^2 \log(4/\delta)}{t}}
    , \quad t \geq  t_0:=\frac{32\sigma^2 \log(4/\delta)}{\minPEps^2},
\]
then, with probability at least $1 - \delta$, $\delta \in (0, 1)$, the breakage condition, \textup{$\breaK^t$}, will be met. (Proof provided in Appendix \ref{prf:pac_sample_complexity_shrink})
\end{lemma}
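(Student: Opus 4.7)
The plan is to apply a sub-Gaussian concentration inequality to each expert individually, union-bound across the pair $(i,j)$, and then verify that the stated choice of $t_0$ makes the combined confidence radii strictly smaller than the competency gap $\minPEps$, which forces the breakage condition.

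First, I would invoke the standard sub-Gaussian Hoeffding bound: for an expert $i$ whose per-sample score is sub-Gaussian with variance proxy $\sigma_i^2$, the empirical mean $\hat p_i^t$ after $t$ independent samples satisfies
\[
\proB\!\left(|\hat p_i^t - p_i| \geq \epsilon\right) \;\leq\; 2\exp\!\left(-\tfrac{t\epsilon^2}{2\sigma_i^2}\right).
\]
Setting the right-hand side equal to $\delta/2$ and solving for $\epsilon$ recovers exactly $\epsilon = \sqrt{2\sigma_i^2 \log(4/\delta)/t} = \ucB_i^t$. Hence, with probability at least $1-\delta/2$, the event $\{|\hat p_i^t - p_i| \leq \ucB_i^t\}$ holds.

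Next, I would union-bound this event across the two experts involved in the candidate breakage, yielding probability at least $1-\delta$ that simultaneously $p_i - \ucB_i^t \leq \hat p_i^t$ and $\hat p_j^t \leq p_j + \ucB_j^t$. On this good event, the breakage condition $\hat p_j^t + \ucB_j^t < \hat p_i^t - \ucB_i^t$ is implied by the deterministic inequality $p_j + 2\ucB_j^t < p_i - 2\ucB_i^t$, i.e., by $p_i - p_j > 2(\ucB_i^t + \ucB_j^t)$.

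Finally, I would apply Assumption \ref{ass:eps-increm} to lower-bound $p_i - p_j \geq \minPEps$, and use $\sigma_i^2,\sigma_j^2 \leq \sigma^2$ to upper-bound $\ucB_i^t + \ucB_j^t \leq 2\sqrt{2\sigma^2\log(4/\delta)/t}$. The sufficient condition $\minPEps > 4\sqrt{2\sigma^2\log(4/\delta)/t}$ rearranges to $t > 32\sigma^2\log(4/\delta)/\minPEps^2 = t_0$, matching the stated threshold. Thus the breakage event $\breaK^t$ is triggered with probability at least $1-\delta$ whenever $t \geq t_0$.

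There is no genuine obstacle here; the argument is essentially the standard best-arm identification analysis of \citeA{evendar:2006_succ_elim} adapted to the sub-Gaussian case. The only care needed is constant bookkeeping: splitting $\delta$ evenly across the two experts to get the factor $\log(4/\delta)$ rather than $\log(2/\delta)$, and keeping track of the factor of $2$ that arises both when passing from empirical to true means and again when combining the two confidence radii — these together produce the factor $32$ in $t_0$.
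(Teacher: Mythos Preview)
Your proposal is correct and follows essentially the same route as the paper's proof: apply the sub-Gaussian Hoeffding bound per expert with confidence radius $\ucB_i^t=\sqrt{2\sigma_i^2\log(4/\delta)/t}$, union-bound over the pair to obtain a $1-\delta$ good event, and then verify that $\Delta_{ij}\geq\minPEps$ exceeds $2(\ucB_i^t+\ucB_j^t)$ once $t\geq 32\sigma^2\log(4/\delta)/\minPEps^2$. The only cosmetic difference is that the paper first combines the two radii via $\sqrt{a}+\sqrt{b}\leq\sqrt{2(a+b)}$ to pass through the intermediate bound $16(\sigma_i^2+\sigma_j^2)\log(4/\delta)/\minPEps^2$ before applying $\sigma_i^2\leq\sigma^2$, whereas you bound each $\sigma_i$ by $\sigma$ directly; both paths yield the same constant $32$.
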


\begin{theorem}
\label{thm:regret_bound_succex_log}
\textbf{Regret Bound for \textup{\tT{SEE}}:} The total regret of \textup{\tT{SEE}} Algorithm in \ref{alg:succEx} is bounded by:
\begin{align}
    R_T \in \mathcal{O}\left( N \log(T)/\minPEps^2 \right).
\end{align}
\end{theorem}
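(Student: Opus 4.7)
The plan is to lift the classical successive-elimination / UCB regret argument to the committee-level reward, leveraging the PAC bound of Lemma \ref{lem:pac_sample_complexity_shrink}.

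First, I would define a high-probability \emph{good event} $\mathcal{G}$ on which $|\hat{p}_i^t - p_i| \leq \ucB_i^t$ for every round $t \leq T$ and every expert $i \in \setE$. Choosing the confidence parameter $\delta = 1/(NT^2)$ in Lemma \ref{lem:pac_sample_complexity_shrink} and union-bounding over the at most $NT$ (expert, round) pairs yields $\mathbb{P}(\mathcal{G}^c) \leq 1/T$. On $\mathcal{G}$, whenever the intervals of two experts become disjoint (a breakage event $\breaK^t$), the induced ordering between them is correct with respect to the true competencies.

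Next, I would exploit Assumption \ref{ass:eps-increm}: every pair of distinct experts has competency gap at least $\minPEps$. Plugging this gap into Lemma \ref{lem:pac_sample_complexity_shrink} implies that after $t_0 := \mathcal{O}(\log(NT)/\minPEps^2)$ joint samples, each adjacent pairwise breakage event is guaranteed on $\mathcal{G}$. Because every expert in $\setExpertCand$ is sampled once per round, the true ordering is recoverable within that many rounds. Combined with Lemma \ref{lem:expert_ordinality} (Top-K ordinality of $\setE^*$), the removal test $\advFunc(\setExpertCand^+, \setExpert') \leq 0$ in Alg. \ref{alg:succEx} then identifies exactly the experts outside $\setE^*$ for elimination. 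Formalizing this step requires showing that, under $\mathcal{G}$, the empirical $\advFunc$ concentrates around its true value via Lipschitzness of $\pMaj$ (a polynomial in the $p_i$'s), so that the sign of the removal test agrees with the sign of the true advantage once the radii $\ucB_i^t$ are sufficiently small.

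Finally, I would account the cumulative regret. The per-round regret satisfies $r_t = \pMaj(\setE, \theta^*) - \pMaj(\setE, \theta^t) \leq 1$ trivially, and vanishes once $\setExpertCand = \setE^*$. The algorithm passes through at most $N - |\setE^*| + 1 \leq N$ distinct candidate committees before converging, and each transition is triggered within at most $t_0$ additional rounds by the PAC bound, giving an on-$\mathcal{G}$ regret of $\mathcal{O}(N \log(T)/\minPEps^2)$. Combining with the off-$\mathcal{G}$ contribution of $T \cdot \mathbb{P}(\mathcal{G}^c) = \mathcal{O}(1)$ yields the claimed bound.

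The main obstacle I expect is precisely the robustness of the empirical removal test: when $\advFunc$ is evaluated with $\hat{p}_i^t$ rather than the true $p_i$, one must rule out spurious elimination of members of $\setE^*$. Establishing a uniform Lipschitz bound on $\pMaj$ in the competencies, and tying the perturbation to the shrinking $\ucB_i^t$, should close this gap without affecting the asymptotic rate. A secondary concern is calibrating the union-bound constant so that the $\log(NT)$ factor collapses to $\log(T)$ absorbed by the $\mathcal{O}(\cdot)$, consistent with the stated bound.
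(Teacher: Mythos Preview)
Your proposal is essentially correct and follows the same high-level route as the paper: invoke the PAC sample complexity of Lemma~\ref{lem:pac_sample_complexity_shrink}, set the confidence parameter so that the failure probability is $\mathcal{O}(1/T)$, bound the exploration phase by summing $t_0$ over at most $N-1$ elimination events, and absorb the failure cost $\delta T \bar{\Delta}$ as a constant.

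The one substantive difference is how the robustness of the removal test is handled---precisely the obstacle you flag. You propose to control the sign of the empirical $\advFunc$ via a Lipschitz bound on $\pMaj$ in the competencies. The paper instead introduces a dedicated problem-dependent constant $\minGapMaj$ (a ``consistency gap'' guaranteeing that the conservative advantage function $\advFunc(\underline{\setExpert}_1,\overline{\setExpert}_2)$ has the same sign as the true $\advFunc(\setExpert_1,\setExpert_2)$ once $2\ucB < \minGapMaj$) and then replaces $\minPEps$ in the denominator by $\minGapSEE := \min\{\minPEps, \minGapMaj\}$. Both devices serve the same purpose: they convert concentration of the $\hat{p}_i$ into correctness of the elimination decision. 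Your Lipschitz route is arguably cleaner conceptually but would still require identifying a minimum advantage gap to preserve the sign, which is exactly what $\minGapMaj$ encodes; the paper's route makes that dependence explicit in the final bound rather than hiding it behind a Lipschitz constant. Note that the theorem statement writes $\minPEps^2$, but the paper's own proof actually delivers $\minGapSEE^2$---so neither approach gets a bound purely in terms of $\minPEps$ without an additional instance-dependent quantity governing the advantage-function gap.
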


\textit{Sketch of Proof:} The bound follows a PAC elimination sample complexity (Lemma~\ref{lem:pac_sample_complexity_shrink}), followed by a union bound over \( \mathcal{O}(N^2)\) pairwise comparisons. Next, we set the failure probability \(\delta = 1/T\) to control catastrophic regret. The dominant term arises from aggregating \(N-1\) elimination rounds, each contributing \( \mathcal{O}\left(\log(T)/\minPEps^2\right)\) regret. (For complete proof see Appendix \ref{prf:regret_bound_succex_log}.)


\subsection{Online Weighted Majority Voting Solution (Full Bandit Feedback)}

\iftoggle{nips}{\vspace{-0.5em}}

\textbf{Full Bandit Feedback:} As discussed in previous section, the optimal weighed majority voting problem ($\theta$-WMV) can be cast as an MIP under perfect information. Consequently, this can be integrated into UCB style bandit algorithms. To implement a no-regret learning algorithm, we solve for the best- and worst-case outcomes under uncertainty in the experts' competencies via the MIP, and quantify the corresponding shrinkage in the bounded simple regret.


\begin{algorithm}[H]
\caption{$\theta$-Weighted Majority Voting with Full Bandit Feedback ($\theta$-WMV) } \label{alg:theta_wmv}
\begin{algorithmic}[1]
\Require Set of experts $\setExpert$ with $n \in \{1, \dots, N\}$.
\State $\setExpertCand \gets \setExpert$ \Comment{Set the candidate expert set to the full set of experts.}
\For{$t = 1$ to $T$}
    \State Estimate $\hat{p}^t_i$ and upper confidence bound, $\ucB^t_i$ using the history for all experts in $\setExpertCand$.
    \State Solve for $\bar{\theta}^*$ with optimistic estimates, $\hat{p}^t + \ucB^t_i$, via MIP.
    \State Elicit all responses from $\setExpert$, and combine responses using weighted majority voting with $\bar{\theta}^*$.
    \State Receive feedback from scoring function, and add it to the history.
\EndFor

\State \Return $\bar{\theta}^*$
\end{algorithmic}
\end{algorithm}

\begin{theorem} \label{thm:regret_theta_star_mip}
    \textbf{Regret Bound for $\theta$-WMV:} Under the Assumption \ref{ass:bounding_condition}, Algorithm \ref{alg:theta_wmv} achieves a regret bound of $R_T \in \mathcal{O} \left( \sqrt{N T \log(T)} \right)$. (Proof in Appendix \ref{thm:regret_theta_star_mip_proof}.)
\end{theorem}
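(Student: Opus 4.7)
The plan is a standard UCB / optimism-under-uncertainty regret analysis, adapted to the MIP-induced weighted majority-voting objective. Write $F(\theta, p) := \pMaj(\setExpert, \theta)$ to make the dependence of the aggregate score on the competency vector $p = (p_i)_{i\in\setExpert}$ explicit. The three ingredients are (i) high-probability confidence bounds on the per-round estimates $\hat{p}_i^t$, (ii) an optimism lemma saying that the MIP solution $\bar{\theta}^*$ computed with the plug-in $\hat{p}^t + \ucB^t$ already dominates the true optimum $\theta^*$ at the true $p$, and (iii) Assumption \ref{ass:bounding_condition}, which translates a small $\ell_2$ perturbation in $p$ into a correspondingly small change in $F$.

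First I would construct the good event $E$ on which $|\hat{p}_i^t - p_i| \leq \ucB_i^t$ holds simultaneously for every expert $i$ and every round $t \leq T$. Because Algorithm \ref{alg:theta_wmv} elicits every expert's response at every round, the effective sample count of expert $i$ at round $t$ equals $t$, so sub-Gaussian concentration with $\ucB_i^t = \Theta(\sigma \sqrt{\log T / t})$ together with a union bound over the $NT$ expert-round pairs yields $\Pr[E] \geq 1 - O(1/T)$, and the complement contributes at most $O(1)$ to the expected regret since $F \in [0,1]$. On $E$, the componentwise inequality $\bar{p}^t := \hat{p}^t + \ucB^t \geq p$ holds; because $F(\theta, \cdot)$ is coordinatewise non-decreasing in $p$ for $\theta \in \mathbb{R}_+^N$ (transparent from \eqref{eq:weighted_combo_maj_vote}: raising $p_i$ only shifts probability mass onto feasible, accepted configurations containing $i$), and because $\bar{\theta}^*$ maximizes $F(\cdot, \bar{p}^t)$, one obtains the OFU chain
\[
    F(\bar{\theta}^*, \bar{p}^t) \;\geq\; F(\theta^*, \bar{p}^t) \;\geq\; F(\theta^*, p).
\]
Hence the per-round regret telescopes into an exploration gap $r_t \leq F(\bar{\theta}^*, \bar{p}^t) - F(\bar{\theta}^*, p)$. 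Applying Assumption \ref{ass:bounding_condition} bounds this gap by $L \,\|\bar{p}^t - p\|_2 \leq 2L \sqrt{\sum_i (\ucB_i^t)^2} = O(\sqrt{N \log T / t})$. Summing over $t$ via $\sum_{t=1}^T 1/\sqrt{t} \leq 2\sqrt{T}$ delivers the desired $R_T = O(\sqrt{NT \log T})$.

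The main obstacle is controlling the smoothness constant supplied by Assumption \ref{ass:bounding_condition} so that it remains $O(1)$ in $N$ under the $\ell_2$ norm on $p$. A naïve Lipschitz bound only gives $|\partial F/\partial p_i| \leq 1$ (each partial is a difference of marginal winning-configuration probabilities), which would yield $L \leq \sqrt{N}$ and degrade the rate to $N\sqrt{T \log T}$. The sharper observation that Assumption \ref{ass:bounding_condition} should encode is that for weighted majority voting the partials typically behave like an inverse-binomial density, $\Theta(1/\sqrt{N})$ near the decision boundary, so that $\|\nabla_p F\|_2$ remains dimension-independent. Establishing this uniformly over the MIP-feasible weight polytope — in particular over the small-margin region where a single expert can swing the vote, and noting that Proposition \ref{prop:discrete_img} flattens $F$ on open neighborhoods of $\theta$ that do not cross any quota hyperplane — is the crux of the argument; once it is in place, the confidence step, optimism step, and $1/\sqrt{t}$ integration are entirely standard.
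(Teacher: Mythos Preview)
Your optimism skeleton --- good event, monotonicity of $F(\theta,\cdot)$ in $p$, and the OFU chain $F(\bar\theta^*,\bar p^t)\geq F(\theta^*,\bar p^t)\geq F(\theta^*,p)$ --- is valid and in fact cleaner than the paper's version of the same step (the paper routes through a chain of inequalities on $\mathbf{z}^\top\mathbf{p}$ and a Bonferroni bound). Where you diverge from the paper, and where the gap lies, is in what Assumption~\ref{ass:bounding_condition} actually says and hence in how the $\sqrt{N}$ factor arises.

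Assumption~\ref{ass:bounding_condition} is \emph{not} an $\ell_2$-Lipschitz statement with a dimension-free constant; it is the inequality $\mathbf{z}^\top\bar{\mathbf{p}}-\mathbf{z}^\top\underline{\mathbf{p}}\leq \sum_{i=1}^N(\bar p_i-\underline p_i)$, i.e.\ the MIP objective gap is controlled by the $\ell_1$ gap in the individual competencies. Under that assumption Lemma~\ref{lem:mip_pi_diff_bound} gives $r_t\leq \sum_i\bar p_i-\sum_i\underline p_i$, and if you simply plug in per-expert radii $\ucB_i^t=\Theta(\sqrt{\log T/t})$ you get $r_t=O(N\sqrt{\log T/t})$ and only $R_T=O(N\sqrt{T\log T})$. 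The paper recovers the missing $\sqrt{N}$ not through any gradient-norm argument but by concentrating the \emph{sum} $\sum_i\hat p_i$ directly: treating it as a weighted sum of all $Nt$ Bernoulli observations (each scaled by $1/t$), Hoeffding gives deviation $\sqrt{\tfrac{1}{2}\log(2/\delta)\sum_i 1/t}=\Theta(\sqrt{N\log(1/\delta)/t})$ for the aggregate, and then the $\sum_t 1/\sqrt{t}$ integral finishes as you describe. Your final paragraph --- arguing $\|\nabla_p F\|_2=O(1)$ via inverse-binomial heuristics near the decision boundary --- is therefore neither what the paper assumes nor what it proves, and you should drop that line: replace the $\ell_2$-Lipschitz step by the $\ell_1$ bound from Assumption~\ref{ass:bounding_condition} followed by direct concentration of $\sum_i\hat p_i$.
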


\textit{Sketch of Proof:} We show that the MIP solution to Eq. \eqref{eq:mip_objective_func} under $\bar{\theta}^*$ misspecification is bounded by the difference in outcome scenario probabilities, which is also bounded by the uncertainty estimates of the expert's competencies, $p_i \pm \ucB$ (see Lemma \ref{lem:mip_pi_diff_bound} in Appendix). Subsequently, using standard PAC concentration bound arguments (i.e. Lemma~\ref{lem:pac_sample_complexity_shrink}), we derive the expected cumulative regret. A looser alternative bound under relaxed assumptions is also provided in Appendix \ref{sec:alternative_mip_bound}.

\paragraph{Targeted-$m$ Online Majority Expert Voting:} In MoE applications, computational constraints often limit the number of experts that can be queried at each timestep to at most $m$. We address this extension to the problem by partitioning the $N$ experts into $N/m$ groups and running a windowed phase of $t_0$ rounds per group. This ensures high-probability breakage detection while preserving the original regret bounds, scaled by a multiplicative $N/m$ factor (Lemma \ref{lem:pac_sample_complexity_shrink}). Specifically, the targeted-$m$ variant of \tT{SEE} maintains $\mathcal{O}\left(N^2 \log T/ m \minPEps^2\right)$ regret, matching the original setting's guarantees with respect to $T$ despite the query limitation.

\iftoggle{nips}{\vspace{0.3em}}

\begin{corollary} \label{cor:targeted_m_setting}
\textbf{Targeted-$m$ WMV:} An algorithm exists that allows for the regret bounds of Theorems \ref{thm:regret_bound_succex_log} and \ref{thm:regret_theta_star_mip} to hold with an  multiplicative factor of $N/m$. (Proof can be found in Appendix \ref{prf:targeted_m_setting}.)
\end{corollary}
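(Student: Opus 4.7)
The strategy is a round-robin reduction: partition the $N$ experts into $\lceil N/m \rceil$ disjoint groups $G_1, \ldots, G_{\lceil N/m \rceil}$ of size at most $m$, and run the base routine (\tT{SEE} for Theorem~\ref{thm:regret_bound_succex_log}, or $\theta$-WMV for Theorem~\ref{thm:regret_theta_star_mip}) in sequential windows. In super-round $k$, activate group $G_{((k-1) \bmod \lceil N/m \rceil) + 1}$ for a window of length equal to the single-group sampling horizon required by Lemma~\ref{lem:pac_sample_complexity_shrink}. Within each window, only experts of the active group are queried, and the scoring function is applied to their aggregated vote; between windows, the estimators $\hat{p}^t_i$ and bounds $\ucB^t_i$ for inactive experts are simply frozen.

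The core observation is that after $s$ visits to a given group, each expert in that group has been sampled $s$ times, so the confidence bound $\ucB^s_i$ behaves exactly as in the unrestricted setting with $s$ rounds. Consequently, detecting a breakage event $\breaK$ within any group, or between two experts after both of their groups have been visited $s$ times, requires the same per-expert sample count $t_0 = \Theta(\sigma^2 \log(4/\delta)/\minPEps^2)$ as in Lemma~\ref{lem:pac_sample_complexity_shrink}. The only difference is that collecting $s$ samples per expert now requires $s \cdot \lceil N/m \rceil$ total rounds rather than $s$. Setting $\delta = 1/T$ and union-bounding over the $\mathcal{O}(N^2)$ pairwise comparisons proceeds exactly as in the original proofs.

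Accumulating regret is then mechanical. For \tT{SEE}, each of the $N-1$ elimination rounds still contributes $\mathcal{O}(\log(T)/\minPEps^2)$ per-expert samples, but each sample now costs $\lceil N/m \rceil$ real rounds, yielding $R_T \in \mathcal{O}(N^2 \log(T)/(m \minPEps^2))$. For $\theta$-WMV, the MIP is solved over the active group's estimates at each step and the per-round instantaneous regret remains controlled by Lemma~\ref{lem:mip_pi_diff_bound} applied to the relevant $\ucB$ values; since each expert's bound shrinks at rate $1/\sqrt{s}$ in visits (equivalently, $\sqrt{m/(Nt)}$ in real time), the standard integration gives $R_T \in \mathcal{O}\!\left(\frac{N}{m}\sqrt{NT\log T}\right)$, which is the promised $N/m$ inflation.

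\textbf{Main obstacle.} The delicate part is cross-group reasoning, since the optimal committee $\setExpert^*$ may mix experts from different groups, and a breakage test $\breaK^t$ between $i \in G_a$ and $j \in G_b$ can only be applied once both groups have been activated enough times; decisions made during the intervening windows risk being made with stale estimates for one side. I plan to handle this by synchronizing eliminations to the end of each full round-robin cycle, so that all active experts have an equal number of observations, ensuring that the advantage function test $\advFunc(\setExpertCand^+, \setExpert') \leq 0$ is evaluated on estimators with uniform confidence width. This costs at most an additive cycle-length term per elimination, which is absorbed into the $N/m$ factor and does not inflate the dominant scaling.
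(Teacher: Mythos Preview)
Your proposal is correct and lands on the same headline conclusion as the paper, but the route is somewhat different. The paper's argument is a simpler \emph{burn-in-then-commit} reduction: it partitions the experts into $N/m$ groups, queries each group sequentially for $t_0$ rounds (the PAC sample complexity of Lemma~\ref{lem:pac_sample_complexity_shrink}), and observes that after this $t_0\cdot(N/m)$-round burn-in all breakage events are detectable with high probability, so the base algorithm can be applied directly and the original regret bound picks up only the $N/m$ multiplicative factor from the lengthened burn-in. Your round-robin scheme instead interleaves the groups continuously, freezes inactive estimators, and synchronizes elimination decisions to cycle boundaries; this is a more careful online construction that explicitly handles the cross-group staleness issue the paper's proof simply sidesteps by front-loading all the sampling. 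Both approaches yield the same $N/m$ inflation; yours is more detailed about how the algorithm actually operates under the query constraint, while the paper's is terser and relies on the reader accepting that post-burn-in play incurs no further nontrivial regret.
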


\section{Empirical Study}

\begin{figure*}[!tb]
\centering
\subfloat[Optimal Egalitarian Voting.]{%
  \includegraphics[width=43.5mm]{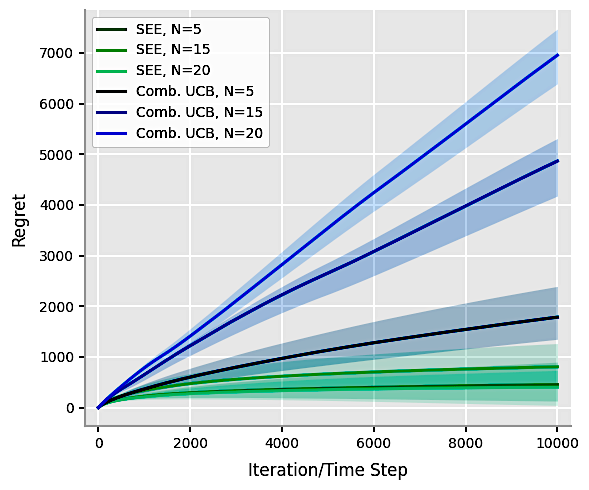}%
  \label{fig:see_bern}%
}\hspace{10pt}
\subfloat[Optimal Weighted Voting.]{%
  \includegraphics[width=43.5mm]{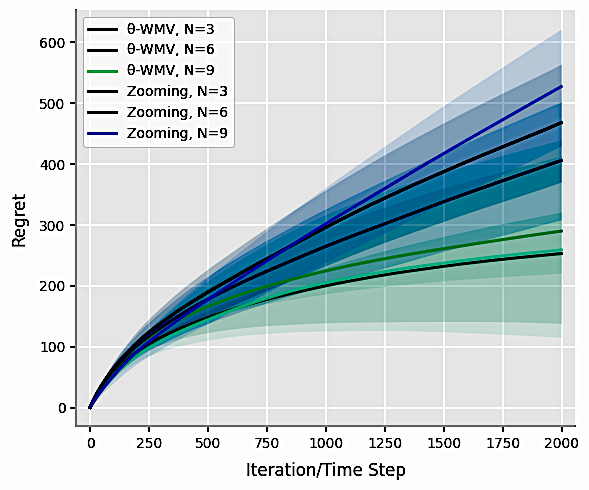}%
  \label{fig:theta_wmv_bern}%
}\hspace{10pt}
\subfloat[LLM Experts.]{%
  \includegraphics[width=43.5mm]{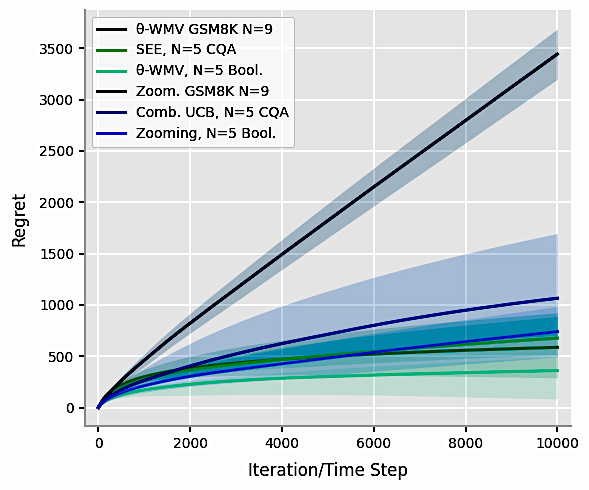}
  \label{fig:llm_experiments}
}
\caption{
\textbf{Comparison of Cumulative Regret for MoE Voting Environments.}
Fig.~\ref{fig:see_bern}: \tT{SEE} on simulated Bernoulli experts; 
Fig.~\ref{fig:theta_wmv_bern}: $\theta$-WMV on simulated Bernoulli experts; 
Fig.~\ref{fig:llm_experiments}: LLM experts across domains (e.g., GSM8k, BoolQ etc.), with tasks sampled from each domain. 
Shaded regions show $\pm \ucB$ uncertainty. Evidently, \tT{SEE} and $\theta$-WMV consistently outperform baselines in regret minimization.
} \label{fig:all_regret_plots_main}
\end{figure*}

\begin{table}[t]
    \centering
    \small
    \begin{tabularx}{\linewidth}{@{}
        >{\hsize=1.1\hsize}X  
        >{\hsize=0.3\hsize}X  
        >{\hsize=0.3\hsize}X  
        >{\hsize=0.4\hsize}X  
        >{\hsize=0.4\hsize}X  
        >{\hsize=0.4\hsize}X  
        >{\hsize=0.4\hsize}X  
        >{\hsize=0.4\hsize}X  
        >{\hsize=1.2\hsize}X@{}  
    }
        \toprule
        \textbf{Algorithm} & $N$ & $|\mathcal{E}^*|$ & $\mathbb{P}^*_{\text{maj}}$ & $\minPEps$ & $R_T$ & $\%R \downarrow$ & $\hat{P}_{\text{maj}}$ & \textbf{Domain} \\
        \midrule
        \tT{SEE} & 20 & 9 & 0.992 & 0.010 & \textbf{546.2} & 0.921 & \textbf{0.980} & Bernoulli \\
        Comb. UCB & 20 & 9 & 0.992 & 0.010 & 6932 & - & 0.370 & Bernoulli \\
        \addlinespace[0.1cm]
        $\theta$-WMV & 9 & 9 & 0.927 & 0.027 & \textbf{587.0} & 0.829 & \textbf{0.927} & GSM8K \\
        Zooming & 9 & 9 & 0.927 & 0.027 & 3443 & - & 0.604 & GSM8K \\
        \addlinespace[0.1cm]
        \tT{SEE} & 5 & 3 & 0.810 & 0.031 & \textbf{676.8} & 0.364 & \textbf{0.810} & CommonsenseQA \\
        Comb. UCB & 5 & 3 & 0.810 & 0.031 & 1065 & - & 0.664 & CommonsenseQA \\
        \addlinespace[0.1cm]
        $\theta$-WMV  & 5 & 5 & 0.735 & 0.013 & \textbf{360.9} & 0.512 & \textbf{0.735} & BoolQ \\
        Zooming & 5 & 5 & 0.735 & 0.013 & 740.3 & - & 0.694 & BoolQ \\
        \bottomrule
    \end{tabularx}
    \vspace{0.3em}
    \caption[Bandit experiment results]{Multi-armed bandit experiment results showing consistent performance across environments, with time horizon $T\leq10^4$. Performance metrics show reduced cumulative regret ($R_T$) and increased empirical accuracy ($\hat{P}_{\text{maj}}$) across all baselines. Extended empirical results can be found in Appendix \ref{sec:extended_empirical_appendix}.}
    \label{tab:transposed_results}
    \vspace{-2.6em}
\end{table}

We evaluate online learning in both model-based and data-derived settings. In the model-based setting, \( N \) Bernoulli experts with fixed success probabilities test each algorithm's ability to distinguish and optimize for varying competencies under noise. In the data-derived setting, we sample tasks from diverse domains: GSM8K~\cite{cobbe2021gsm8k} (mathematical reasoning), CommonsenseQA~\cite{talmor-etal-2019-commonsenseqa} (implicit knowledge), and BoolQ~\cite{clark2019boolq} (evidence-based binary inference), each presenting unique challenges. Experts are instantiated as LLMs with varying reasoning capabilities (see Appendix \ref{sec:llm_competency_app}).

\paragraph{Propose-then-Vote:} Datasets used in our experiments, such as CommonsenseQA \cite{talmor-etal-2019-commonsenseqa}, BoolQ \cite{clark2019boolq} etc., constitute structured labelled data, that is the catalogue of answers is provided to the LLM to choose from (note Remark \ref{rem:catalogue_expert}). But as we tackle unstructured open-ended tasks for experts, we need to standardize the vote aggregation functions with proper structure in order to elicit the appropriate expert feedback. One solution is to use the MoE to generate each individually their top responses to a prompt. By default, this is added to the admissible catalogue and the expert votes for their own response among the candidate responses. This, \textit{propose-then-vote}, mechanism essentially uses the committee to first determine the answer catalogue, and then vote on it. (For additional details and a complete set of comprehensive experiments please refer to Appendix \ref{sec:extended_empirical_appendix}.)

\iftoggle{nips}{\vspace{-0.8em}}

\paragraph{Results Summary:} We analyze \tT{SEE} (Algorithm~\ref{alg:succEx}) and $\theta$-WMV (Algorithm~\ref{alg:theta_wmv}) in terms of expected aggregate accuracy $\pMaj(\setExpert, \theta^T)$ (Eq.~\eqref{eq:weighted_combo_maj_vote}) and cumulative regret $R_T$ (Def.~\ref{def:regret}). As baselines, we compare against combinatorial UCB bandits and the zooming algorithm~\cite{kleinberg:2008_zooming}, which serve as no-regret benchmarks. Further details -- including prompting strategies (chain-of-thought), expert LLM specifications, baseline implementations, and ablation studies -- are provided in Appendix~\ref{sec:experimental_details_app}.

\iftoggle{nips}{\vspace{-0.7em}}

\section{Conclusion}

{We introduce a new no-regret learning framework for online decision-making with voting-based expert ensembles, unifying online learning and social choice theory as a key contribution. Our algorithms—\tT{SEE} for egalitarian voting and $\theta$-WMV for weighted majority voting—provide provable cumulative regret guarantees with UCB-style analysis, accompanied by definitive empirical performance. There are of course, also some limitations. First, although we consider up to a modest number of experts, the $\theta$-WMV algorithm depends on the solution to a mixed-integer program which could become overly complex when a large number of experts is introduced. Second, we did not consider the additional use of contextual and/or side information for the MoE framework to enhance regret minimization performance. Nevertheless, both learning algorithms offer practical advantages for ensemble MoE systems, particularly in LLM applications as demonstrated. This work establishes both theoretical foundations and practical tools for voting-based ensemble learning, opening new directions at the intersection of online learning and collective decision-making.}

\section*{Acknowledgements}

We would like to thank Nicholas Mattei, Ben Abramowitz and Ariel D. Procaccia, as well as the reviewers for their insightful feedback and suggestions for our work. 

\printbibliography

\clearpage


\clearpage

\iftoggle{nips}{
    \section{Checklist} 
\providecommand{\answerYes}[1][]{\textcolor{blue}{[Yes] #1}}
\providecommand{\answerNo}[1][]{\textcolor{orange}{[No] #1}}
\providecommand{\answerNA}[1][]{\textcolor{gray}{[NA] #1}}
\providecommand{\answerTODO}[1][]{\textcolor{red}{\bf [TODO]}}
\providecommand{\justificationTODO}[1][]{\textcolor{red}{\bf [TODO]}}

\section*{NeurIPS Paper Checklist}

\begin{enumerate}

\item {\bf Claims}
    \item[] Question: Do the main claims made in the abstract and introduction accurately reflect the paper's contributions and scope?
    \item[] Answer: \answerYes{} 
    \item[] Justification: The main claims of the paper are clearly presented in the abstract and introduction, with supporting theory and empirical evidence.
    \item[] Guidelines:
    \begin{itemize}
        \item The answer NA means that the abstract and introduction do not include the claims made in the paper.
        \item The abstract and/or introduction should clearly state the claims made, including the contributions made in the paper and important assumptions and limitations. A No or NA answer to this question will not be perceived well by the reviewers. 
        \item The claims made should match theoretical and experimental results, and reflect how much the results can be expected to generalize to other settings. 
        \item It is fine to include aspirational goals as motivation as long as it is clear that these goals are not attained by the paper. 
    \end{itemize}

\item {\bf Limitations}
    \item[] Question: Does the paper discuss the limitations of the work performed by the authors?
    \item[] Answer: \answerYes{} 
    \item[] Justification: The limitations of the work are mentioned in the Conclusion.
    \item[] Guidelines:
    \begin{itemize}
        \item The answer NA means that the paper has no limitation while the answer No means that the paper has limitations, but those are not discussed in the paper. 
        \item The authors are encouraged to create a separate "Limitations" section in their paper.
        \item The paper should point out any strong assumptions and how robust the results are to violations of these assumptions (e.g., independence assumptions, noiseless settings, model well-specification, asymptotic approximations only holding locally). The authors should reflect on how these assumptions might be violated in practice and what the implications would be.
        \item The authors should reflect on the scope of the claims made, e.g., if the approach was only tested on a few datasets or with a few runs. In general, empirical results often depend on implicit assumptions, which should be articulated.
        \item The authors should reflect on the factors that influence the performance of the approach. For example, a facial recognition algorithm may perform poorly when image resolution is low or images are taken in low lighting. Or a speech-to-text system might not be used reliably to provide closed captions for online lectures because it fails to handle technical jargon.
        \item The authors should discuss the computational efficiency of the proposed algorithms and how they scale with dataset size.
        \item If applicable, the authors should discuss possible limitations of their approach to address problems of privacy and fairness.
        \item While the authors might fear that complete honesty about limitations might be used by reviewers as grounds for rejection, a worse outcome might be that reviewers discover limitations that aren't acknowledged in the paper. The authors should use their best judgment and recognize that individual actions in favor of transparency play an important role in developing norms that preserve the integrity of the community. Reviewers will be specifically instructed to not penalize honesty concerning limitations.
    \end{itemize}

\item {\bf Theory assumptions and proofs}
    \item[] Question: For each theoretical result, does the paper provide the full set of assumptions and a complete (and correct) proof?
    \item[] Answer: \answerYes{} 
    \item[] Justification: Assumptions are clearly stated in Section \ref{sec:problem_setting}.
    \item[] Guidelines:
    \begin{itemize}
        \item The answer NA means that the paper does not include theoretical results. 
        \item All the theorems, formulas, and proofs in the paper should be numbered and cross-referenced.
        \item All assumptions should be clearly stated or referenced in the statement of any theorems.
        \item The proofs can either appear in the main paper or the supplemental material, but if they appear in the supplemental material, the authors are encouraged to provide a short proof sketch to provide intuition. 
        \item Inversely, any informal proof provided in the core of the paper should be complemented by formal proofs provided in appendix or supplemental material.
        \item Theorems and Lemmas that the proof relies upon should be properly referenced. 
    \end{itemize}

    \item {\bf Experimental result reproducibility}
    \item[] Question: Does the paper fully disclose all the information needed to reproduce the main experimental results of the paper to the extent that it affects the main claims and/or conclusions of the paper (regardless of whether the code and data are provided or not)?
    \item[] Answer: \answerYes{} 
    \item[] Justification: Experimental results are clearly presented in the paper, with reproduction data in Appendix \ref{sec:experimental_details_app}.
    \item[] Guidelines:
    \begin{itemize}
        \item The answer NA means that the paper does not include experiments.
        \item If the paper includes experiments, a No answer to this question will not be perceived well by the reviewers: Making the paper reproducible is important, regardless of whether the code and data are provided or not.
        \item If the contribution is a dataset and/or model, the authors should describe the steps taken to make their results reproducible or verifiable. 
        \item Depending on the contribution, reproducibility can be accomplished in various ways. For example, if the contribution is a novel architecture, describing the architecture fully might suffice, or if the contribution is a specific model and empirical evaluation, it may be necessary to either make it possible for others to replicate the model with the same dataset, or provide access to the model. In general. releasing code and data is often one good way to accomplish this, but reproducibility can also be provided via detailed instructions for how to replicate the results, access to a hosted model (e.g., in the case of a large language model), releasing of a model checkpoint, or other means that are appropriate to the research performed.
        \item While NeurIPS does not require releasing code, the conference does require all submissions to provide some reasonable avenue for reproducibility, which may depend on the nature of the contribution. For example
        \begin{enumerate}
            \item If the contribution is primarily a new algorithm, the paper should make it clear how to reproduce that algorithm.
            \item If the contribution is primarily a new model architecture, the paper should describe the architecture clearly and fully.
            \item If the contribution is a new model (e.g., a large language model), then there should either be a way to access this model for reproducing the results or a way to reproduce the model (e.g., with an open-source dataset or instructions for how to construct the dataset).
            \item We recognize that reproducibility may be tricky in some cases, in which case authors are welcome to describe the particular way they provide for reproducibility. In the case of closed-source models, it may be that access to the model is limited in some way (e.g., to registered users), but it should be possible for other researchers to have some path to reproducing or verifying the results.
        \end{enumerate}
    \end{itemize}

\item {\bf Open access to data and code}
    \item[] Question: Does the paper provide open access to the data and code, with sufficient instructions to faithfully reproduce the main experimental results, as described in supplemental material?
    \item[] Answer: \answerYes{} 
    \item[] Justification: The methods and procedures to reproduce the results are clearly outlined in Algorithm \ref{alg:succEx} and \ref{alg:theta_wmv}. Moreover, implementation details are outlined in Appendix \ref{sec:experimental_details_app}.
    \item[] Guidelines:
    \begin{itemize}
        \item The answer NA means that paper does not include experiments requiring code.
        \item Please see the NeurIPS code and data submission guidelines (\url{https://nips.cc/public/guides/CodeSubmissionPolicy}) for more details.
        \item While we encourage the release of code and data, we understand that this might not be possible, so “No” is an acceptable answer. Papers cannot be rejected simply for not including code, unless this is central to the contribution (e.g., for a new open-source benchmark).
        \item The instructions should contain the exact command and environment needed to run to reproduce the results. See the NeurIPS code and data submission guidelines (\url{https://nips.cc/public/guides/CodeSubmissionPolicy}) for more details.
        \item The authors should provide instructions on data access and preparation, including how to access the raw data, preprocessed data, intermediate data, and generated data, etc.
        \item The authors should provide scripts to reproduce all experimental results for the new proposed method and baselines. If only a subset of experiments are reproducible, they should state which ones are omitted from the script and why.
        \item At submission time, to preserve anonymity, the authors should release anonymized versions (if applicable).
        \item Providing as much information as possible in supplemental material (appended to the paper) is recommended, but including URLs to data and code is permitted.
    \end{itemize}

\item {\bf Experimental setting/details}
    \item[] Question: Does the paper specify all the training and test details (e.g., data splits, hyperparameters, how they were chosen, type of optimizer, etc.) necessary to understand the results?
    \item[] Answer: \answerYes{} 
    \item[] Justification: We provide full details in Appendix \ref{sec:experimental_details_app}.
    \item[] Guidelines:
    \begin{itemize}
        \item The answer NA means that the paper does not include experiments.
        \item The experimental setting should be presented in the core of the paper to a level of detail that is necessary to appreciate the results and make sense of them.
        \item The full details can be provided either with the code, in appendix, or as supplemental material.
    \end{itemize}

\item {\bf Experiment statistical significance}
    \item[] Question: Does the paper report error bars suitably and correctly defined or other appropriate information about the statistical significance of the experiments?
    \item[] Answer: \answerYes{} 
    \item[] Justification: Uncertainty estimates are provided in all graphs presented in the paper.
    \item[] Guidelines:
    \begin{itemize}
        \item The answer NA means that the paper does not include experiments.
        \item The authors should answer "Yes" if the results are accompanied by error bars, confidence intervals, or statistical significance tests, at least for the experiments that support the main claims of the paper.
        \item The factors of variability that the error bars are capturing should be clearly stated (for example, train/test split, initialization, random drawing of some parameter, or overall run with given experimental conditions).
        \item The method for calculating the error bars should be explained (closed form formula, call to a library function, bootstrap, etc.)
        \item The assumptions made should be given (e.g., Normally distributed errors).
        \item It should be clear whether the error bar is the standard deviation or the standard error of the mean.
        \item It is OK to report 1-sigma error bars, but one should state it. The authors should preferably report a 2-sigma error bar than state that they have a 96\% CI, if the hypothesis of Normality of errors is not verified.
        \item For asymmetric distributions, the authors should be careful not to show in tables or figures symmetric error bars that would yield results that are out of range (e.g. negative error rates).
        \item If error bars are reported in tables or plots, The authors should explain in the text how they were calculated and reference the corresponding figures or tables in the text.
    \end{itemize}

\item {\bf Experiments compute resources}
    \item[] Question: For each experiment, does the paper provide sufficient information on the computer resources (type of compute workers, memory, time of execution) needed to reproduce the experiments?
    \item[] Answer: \answerYes{} 
    \item[] Justification: Experimental hardware specifications can be found in Appendix \ref{sec:experimental_details_app}.
    \item[] Guidelines:
    \begin{itemize}
        \item The answer NA means that the paper does not include experiments.
        \item The paper should indicate the type of compute workers CPU or GPU, internal cluster, or cloud provider, including relevant memory and storage.
        \item The paper should provide the amount of compute required for each of the individual experimental runs as well as estimate the total compute. 
        \item The paper should disclose whether the full research project required more compute than the experiments reported in the paper (e.g., preliminary or failed experiments that didn't make it into the paper). 
    \end{itemize}
    
\item {\bf Code of ethics}
    \item[] Question: Does the research conducted in the paper conform, in every respect, with the NeurIPS Code of Ethics \url{https://neurips.cc/public/EthicsGuidelines}?
    \item[] Answer: \answerYes{} 
    \item[] Justification: Our research complies with all ethical guidelines, in no way do we have intent or motivation to produce harmful or unethical output detrimental to humanity.
    \item[] Guidelines:
    \begin{itemize}
        \item The answer NA means that the authors have not reviewed the NeurIPS Code of Ethics.
        \item If the authors answer No, they should explain the special circumstances that require a deviation from the Code of Ethics.
        \item The authors should make sure to preserve anonymity (e.g., if there is a special consideration due to laws or regulations in their jurisdiction).
    \end{itemize}

\item {\bf Broader impacts}
    \item[] Question: Does the paper discuss both potential positive societal impacts and negative societal impacts of the work performed?
    \item[] Answer: \answerYes{} 
    \item[] Justification: In the introduction of the paper we discuss several applications of the paper, including LLM alignment and achieving better understanding of efficient agent learning. 
    \item[] Guidelines:
    \begin{itemize}
        \item The answer NA means that there is no societal impact of the work performed.
        \item If the authors answer NA or No, they should explain why their work has no societal impact or why the paper does not address societal impact.
        \item Examples of negative societal impacts include potential malicious or unintended uses (e.g., disinformation, generating fake profiles, surveillance), fairness considerations (e.g., deployment of technologies that could make decisions that unfairly impact specific groups), privacy considerations, and security considerations.
        \item The conference expects that many papers will be foundational research and not tied to particular applications, let alone deployments. However, if there is a direct path to any negative applications, the authors should point it out. For example, it is legitimate to point out that an improvement in the quality of generative models could be used to generate deepfakes for disinformation. On the other hand, it is not needed to point out that a generic algorithm for optimizing neural networks could enable people to train models that generate Deepfakes faster.
        \item The authors should consider possible harms that could arise when the technology is being used as intended and functioning correctly, harms that could arise when the technology is being used as intended but gives incorrect results, and harms following from (intentional or unintentional) misuse of the technology.
        \item If there are negative societal impacts, the authors could also discuss possible mitigation strategies (e.g., gated release of models, providing defenses in addition to attacks, mechanisms for monitoring misuse, mechanisms to monitor how a system learns from feedback over time, improving the efficiency and accessibility of ML).
    \end{itemize}
    
\item {\bf Safeguards}
    \item[] Question: Does the paper describe safeguards that have been put in place for responsible release of data or models that have a high risk for misuse (e.g., pretrained language models, image generators, or scraped datasets)?
    \item[] Answer: \answerNA{} 
    \item[] Justification: There exists no potential for high risk misuse. 
    \item[] Guidelines:
    \begin{itemize}
        \item The answer NA means that the paper poses no such risks.
        \item Released models that have a high risk for misuse or dual-use should be released with necessary safeguards to allow for controlled use of the model, for example by requiring that users adhere to usage guidelines or restrictions to access the model or implementing safety filters. 
        \item Datasets that have been scraped from the Internet could pose safety risks. The authors should describe how they avoided releasing unsafe images.
        \item We recognize that providing effective safeguards is challenging, and many papers do not require this, but we encourage authors to take this into account and make a best faith effort.
    \end{itemize}

\item {\bf Licenses for existing assets}
    \item[] Question: Are the creators or original owners of assets (e.g., code, data, models), used in the paper, properly credited and are the license and terms of use explicitly mentioned and properly respected?
    \item[] Answer: \answerYes{} 
    \item[] Justification: Code and methodology can be openly used with proper credits given to authors.
    \item[] Guidelines:
    \begin{itemize}
        \item The answer NA means that the paper does not use existing assets.
        \item The authors should cite the original paper that produced the code package or dataset.
        \item The authors should state which version of the asset is used and, if possible, include a URL.
        \item The name of the license (e.g., CC-BY 4.0) should be included for each asset.
        \item For scraped data from a particular source (e.g., website), the copyright and terms of service of that source should be provided.
        \item If assets are released, the license, copyright information, and terms of use in the package should be provided. For popular datasets, \url{paperswithcode.com/datasets} has curated licenses for some datasets. Their licensing guide can help determine the license of a dataset.
        \item For existing datasets that are re-packaged, both the original license and the license of the derived asset (if it has changed) should be provided.
        \item If this information is not available online, the authors are encouraged to reach out to the asset's creators.
    \end{itemize}

\item {\bf New assets}
    \item[] Question: Are new assets introduced in the paper well documented and is the documentation provided alongside the assets?
    \item[] Answer: \answerNA{} 
    \item[] Justification: No digital assets were used in this paper.
    \item[] Guidelines:
    \begin{itemize}
        \item The answer NA means that the paper does not release new assets.
        \item Researchers should communicate the details of the dataset/code/model as part of their submissions via structured templates. This includes details about training, license, limitations, etc. 
        \item The paper should discuss whether and how consent was obtained from people whose asset is used.
        \item At submission time, remember to anonymize your assets (if applicable). You can either create an anonymized URL or include an anonymized zip file.
    \end{itemize}

\item {\bf Crowdsourcing and research with human subjects}
    \item[] Question: For crowdsourcing experiments and research with human subjects, does the paper include the full text of instructions given to participants and screenshots, if applicable, as well as details about compensation (if any)? 
    \item[] Answer: \answerNA{} 
    \item[] Justification: No crowdsourcing was used in this paper.
    \item[] Guidelines:
    \begin{itemize}
        \item The answer NA means that the paper does not involve crowdsourcing nor research with human subjects.
        \item Including this information in the supplemental material is fine, but if the main contribution of the paper involves human subjects, then as much detail as possible should be included in the main paper. 
        \item According to the NeurIPS Code of Ethics, workers involved in data collection, curation, or other labor should be paid at least the minimum wage in the country of the data collector. 
    \end{itemize}

\item {\bf Institutional review board (IRB) approvals or equivalent for research with human subjects}
    \item[] Question: Does the paper describe potential risks incurred by study participants, whether such risks were disclosed to the subjects, and whether Institutional Review Board (IRB) approvals (or an equivalent approval/review based on the requirements of your country or institution) were obtained?
    \item[] Answer: \answerNA{} 
    \item[] Justification: No study participants were used in this work.
    \item[] Guidelines:
    \begin{itemize}
        \item The answer NA means that the paper does not involve crowdsourcing nor research with human subjects.
        \item Depending on the country in which research is conducted, IRB approval (or equivalent) may be required for any human subjects research. If you obtained IRB approval, you should clearly state this in the paper. 
        \item We recognize that the procedures for this may vary significantly between institutions and locations, and we expect authors to adhere to the NeurIPS Code of Ethics and the guidelines for their institution. 
        \item For initial submissions, do not include any information that would break anonymity (if applicable), such as the institution conducting the review.
    \end{itemize}

\item {\bf Declaration of LLM usage}
    \item[] Question: Does the paper describe the usage of LLMs if it is an important, original, or non-standard component of the core methods in this research? Note that if the LLM is used only for writing, editing, or formatting purposes and does not impact the core methodology, scientific rigorousness, or originality of the research, declaration is not required.
    \item[] Answer: \answerNA{} 
    \item[] Justification: LLM's were not used as any important, original, or non-standard component of the core methods in this research.
    \item[] Guidelines:
    \begin{itemize}
        \item The answer NA means that the core method development in this research does not involve LLMs as any important, original, or non-standard components.
        \item Please refer to our LLM policy (\url{https://neurips.cc/Conferences/2025/LLM}) for what should or should not be described.
    \end{itemize}

\end{enumerate}

    \clearpage    
}

\appendix

\section{Proofs for Core Theory}

\subsection{Proof for Lemma \ref{lem:expert_ordinality}} \label{prf:expert_ordinality}

\textbf{Top-K Ordinality of Experts:} The optimal expert committee $\setExpert^*\subseteq \setExpert$ is a subset of Top-K experts based on their competencies, for $0 < K \leq N$.

\begin{proof}
Suppose there is an expert $i$ in the optimal committee $\setExpert^*$ and another expert $j$ with a higher competency, i.e., $p_i<p_j$ who is  not included in the optimal expert committee. 
In this case, replacing experts $i$ and $j$ results in a new committee with higher average score, leading to a contradiction. We prove it for when the size of the optimal committee is odd. Similar proof can be written for when it is even. 
Let $|\setExpert^*|=2s+1$ for some $s$, $i\in \setExpert^*$ but $j\not\in \setExpert^*$ and $p_i<p_j$. We form a new committee $\setExpert_0=\setExpert^*\cup\{j\}\setminus\{i\}$. 
    Let $S_i(k)$ be the event that precisely $k$ experts in $\setExpert^*$ are correct including expert $i$. 
    We also define $S^c_i(k)$ to be the event that $i$ is incorrect but $k-1$ other experts voted correctly in $\setExpert^*$. 
    Similarly, we define $S^c_j(k)$ and $S^c_j(k)$ in committee $\setExpert_0$. 
    It is straightforward to see that the probability of either $S^c_i(k)$ or $S_i(k)$ occurs in committee $\setExpert^*$ is equal to the probability of either $S^c_j(k)$ or $S_j(k)$ occurs in $\setExpert_0$ for all $k\in\{s+2,...,2s+1\}$. 
    This implies the sum of these events are the same for both committees in \eqref{eq:combo_maj_vote}. 
    Note that when $k=s+1$, both events $S^c_i(k)$ and $S^c_j(k)$ will not appear in the average scoring in \eqref{eq:combo_maj_vote} as in both of these events $s$ experts are correct and $s+1$ are incorrect. Moreover, the probability that event $S_j(s+1)$ occurs is higher than the probability of the event $S_i(s+1)$ since in the former expert $j$ is correct and in the latter expert $i$ and $p_j>p_i$.
\end{proof}

\subsection{Proof for Lemma \ref{lem:maj_vote_dominance}} \label{prf:maj_vote_dominance}

\textbf{Majority Voting Dominance over Egalitarian Voting:} For any set of experts, the optimal majority weighted voting method will always yield a stronger or equal predictive accuracy then the plurality vote of the committee. 

\begin{proof}
    Suppose $\setExpert^*$ denotes the OEC where $|\setExpert^*|=M$. In this case, we define $\theta$ such that $\theta_i=1$ if $i\in\setExpert^*$ and zero, otherwise.  
    Next, we show that $\pMaj(\setExpert, \theta)\geq \pMaj(\setExpert^*)$, where the quota is $Q$ rather than $|\setExpert^*|/2$ as in \eqref{eq:combo_maj_vote}. Let $S_1=S\cap \setExpert^*$, $S_2=S\cap (\setExpert^*)^c$, then,
    
    \begin{align*}
            \pMaj(\setExpert, \theta)&=\!\sum_{S\subseteq\setExpert} \! \feasFunc[S, \theta]{N} \left( \prod_{i \in  S} p_i  \prod_{j\in S^c}  (1-p_j) \right)\\
            &=\sum_{S_1\subseteq\setExpert^*}\sum_{S_2\subseteq(\setExpert^*)^c} \! \feasFunc[S_1\cup S_2, \theta]{N} \left( \prod_{i \in  S_1} p_i  \prod_{j\in \setExpert^*\setminus S_1}  (1-p_j) \right)\left( \prod_{i \in  S_2} p_i  \prod_{j\in (\setExpert^*)^c\setminus S_2}  (1-p_j) \right)
    \end{align*}
    Note that,
    \begin{align*}
        \feasFunc[S, \theta]{N}\!=\! \mathbb{I}\left(\sum_{j\in S}\theta_j>Q\right)=1 \Leftrightarrow |S\cap\setExpert^*|=|S_1|>Q,
    \end{align*}
    This implies that the nonzero terms in the above summation are those with $|S_1|>Q$. Therefore, we can simplify the summation as
        \begin{align*}
            \pMaj(\setExpert, \theta)&=\sum_{\substack{S_1\subseteq\setExpert^*\\ |S_1|>Q}}\sum_{S_2\subseteq(\setExpert^*)^c}  \left( \prod_{i \in  S_1} p_i  \prod_{j\in \setExpert^*\setminus S_1}  (1-p_j) \right)\left( \prod_{i \in  S_2} p_i  \prod_{j\in (\setExpert^*)^c\setminus S_2}  (1-p_j) \right)\\
            &=\sum_{\substack{S_1\subseteq\setExpert^*\\ |S_1|>Q}} \left( \prod_{i \in  S_1} p_i  \prod_{j\in \setExpert^*\setminus S_1}  (1-p_j) \right)\sum_{S_2\subseteq(\setExpert^*)^c} \left( \prod_{i \in  S_2} p_i  \prod_{j\in (\setExpert^*)^c\setminus S_2}  (1-p_j) \right)
        \end{align*}
        \begin{align*}
            =\sum_{\substack{S_1\subseteq\setExpert^*\\ |S_1|>Q}} \left( \prod_{i \in  S_1} p_i  \prod_{j\in \setExpert^*\setminus S_1}  (1-p_j) \right)= \pMaj(\setExpert^*).
    \end{align*}
\end{proof}

\subsection{Proof for Proposition \ref{prop:discrete_img}} \label{prf:discrete_img}

\textbf{Discrete Image:} 
    Many configurations of $\theta$ can lead to the same result for $\pMaj(\setExpert, \theta)$. 
\begin{proof}
    $\pMaj(\setExpert, \theta)$ is a function with finite number of discontinuities (jumps), which is bounded by $2^N$, the number of possible committees. 

    Let $\theta=[\theta_1,...,\theta_N]$ be a set of weights and $\mathcal{S}=\{S_1,..., S_k\}$ be all the set of configurations for which $\sum_{j\in S_i}\theta_j>Q$ for all $i\in[K]$. Let $S_1$ be the configuration which has the lowest sum of weights, i.e., $S_1=\arg\min_i \sum_{j\in S_i}\theta_j$. 
    Then, define $\theta^{\epsilon}$ to be a new set of weights in which some of the weights in $S_1$ are disturbed by $\epsilon$ such that $\sum_{j\in S_1}\theta_j=\sum_{j\in S_1}\theta^\epsilon_j$ and the remaining weights are left unchanged.
    After this distribution, some weights in $S_1$ are reduced by $\epsilon$, some increased, and some are left unchanged. 
    Let $S_0=\arg\max_{S\not\in \mathcal{S}}\sum_{j\in S}\theta_j$. Then, set $\epsilon \leq \min\{\sum_{j\in S_1}\theta_j -Q , Q- \sum_{j\in S_0}\theta_j\} /N$. 
    Clearly, by this choice of $\epsilon$, all the configurations in $\mathcal{S}$ will pass the aggregation criterion and no other configurations will be added to the $\mathcal{S}$, that is $\pMaj(\setExpert, \theta)=\pMaj(\setExpert, \theta^\epsilon)$.
\end{proof}

\subsection{Remark \ref{rem:wedge}: Wedge Property} \label{sec:wedge}

\begin{remark} \label{rem:wedge}
    \textbf{Wedge Property:} Provided expert competencies, $p_1, p_2 \dots, p_N$, with the ordering $p_1 \geq p_2 \dots \geq p_N$, there exists an optimal solution with parameters $\theta^*$ that maximizes $\pMaj(\setExpert, \theta)$ and are constrained by the ordering,  $\theta_1  \geq \dots, \geq \theta_N$. 
\end{remark} 

The intuition behind Remark \ref{rem:wedge} hinges on the idea that more competent voters would contribute more in terms of voting power in the expert committee compared to less competent experts. Voting power can be measured by various metrics such as the Shapley-Shubik Index or Banzhaf Power Index, which measures the amount of contribution that each voter contributes to swinging the collective decision. Having more weight, or allocated votes, to the expert reflects this property. Therefore we can expect a monotone increasing wedge property on the expert weight $\theta_i$ proportional to the ordering on $p_i$. Enabling this as a constraint for any optimization program will tighten the search space for optimal solutions, making it more computationally efficient.


\subsection{Proof for Lemma \ref{lem:equality_Q}} \label{prf:equality_Q}

\textbf{Weights Satisfied at Equality:} When solving for the optimal $\theta^*$ for $\theta$-weighted majority voting, we can replace the constraint \textup{$Q \leq \sum_{i=1}^N \theta_i$}, from Eq. \eqref{eq:quota_indicator_conditions}, with a binding equality \textup{$\sum_{i=1}^N \theta_i = 2Q$}, to obtain a valid solution. (Proof is available in Appendix \ref{prf:equality_Q}.)

\begin{proof}


    We can begin by considering $2Q$, or two times the quota, as the total number of votes in an electorate. We would like to allocate the total number of allowable votes among the voters to ensure total representation, allocating any less could end up with suboptimalities. If the quota, $Q$, is too high in comparison to the total number of votes allotted, then it could be the case that some participants who would have contributed optimally to the voting committee, are unable to participate due to quota limitations. In order to ensure majority voting, more than half of the total votes must pass for any certain choice - by definition. Therefore, between $Q$ and $2Q$, the objective $\pMaj(\theta)$ is strictly non-decreasing, as more useful  voters could potentially be included in the voting pool without sacrificing exclusion of others. Since the objective function is strictly non-decreasing w.r.t. in put $\theta$, then for any mixed-integer program, the constraint $Q \leq \sum_{i=1}^N \leq 2Q$ can be replaced by a binding constraint $\sum_{i=1}^N \theta_i = 2Q$.
    
\end{proof}

\subsection{Proof for Lemma \ref{lem:pac_sample_complexity_shrink}} \label{prf:pac_sample_complexity_shrink}

\textbf{PAC Sample Complexity:} For any two distributions for experts $i$ and $j$, if the number of samples $t$ and shrinkage term $\ucB_i(t)$, satisfies,
\[
    \ucB_i^t = \sqrt{\frac{2 \sigma_i^2 \log(4/\delta)}{t}}
    , \quad t_{i,j} \geq  \frac{16 (\sigma_i^2 + \sigma_j^2) \log(4/\delta)}{\minPEps^2}.
\]
then, with probability at least $1 - \delta$, $\delta \in (0, 1)$, the breakage condition, $\breaK^t$, will be met.

\begin{proof}
Without loss of generality, let us consider $i=1$ and $j=2$. Let there exist two sub-Gaussian distributions parameterized with means $p_1, p_2$ and variances $\sigma_1^2, \sigma_2^2$, respectively. Given $\delta \in (0,1)$, we seek the minimal number of samples $t$ required to distinguish between the two distributions with probability at least $1 - \delta$. Due to Hoeffding's inequality, with respect to the number of samples $t$, the empirical means $\hat{p}_1^t$ and $\hat{p}_2^t$ must satisfy, 
\[
\mathbb{P}\left(|\hat{p}_i^t - p_i| \geq \ucB_i^t\right) \leq 2 \exp\left(-\frac{t \,(\ucB_i^t)^2}{2 \sigma_i^2}\right), \quad i \in \{1,2\}.
\]
Let us  set the confidence bound shrinkage parameter $\ucB_i^t$ as,
\[
\ucB_i^t := \sqrt{\frac{2 \sigma_i^2 \log(4/\delta)}{t}}, \quad i \in \{1,2\}.
\]
The condition for breakage, $\breaK^t$, is indicated by non-overlapping confidence intervals, can be expressed as,
\[
|\hat{p}_1^t - \hat{p}_2^t| > \epsilon_1(t) + \epsilon_2(t).
\]
Therefore, with probability at least $1 - \delta$, the intervals are separated when
\[
\Delta_{12} \geq 2\sqrt{\frac{4 (\sigma_1^2 + \sigma_2^2) \log(4/\delta)}{t}}\geq 2\left(\epsilon_1(t)+\epsilon_2(t)\right).
\]
Solving for $t_{i,j}$ yields the sample complexity:
\[
t_{i,j}\geq \frac{32 \sigma^2 \log(4/\delta)}{\minPEps^2} \geq \frac{16 (\sigma_i^2 + \sigma_j^2) \log(4/\delta)}{\minPEps^2}.
\]
where $\Delta_{ij}:=|p_i-p_j|$, $\minPEps=\min_{i\neq j}\Delta_{ij}$, and $\sigma_i\geq\sigma$ for all $i$.
\end{proof}

\subsection{Proof for Theorem \ref{thm:regret_bound_succex_log}} \label{prf:regret_bound_succex_log}

\textbf{Regret Bound for \tT{SEE} Algorithm:} The \tT{SEE} algorithm achieves a total regret $R_T$ bounded by,
\begin{align*}
    R_T \in \mathcal{O}\left( \frac{N}{\minPEps^2} \log T \right).
\end{align*}
\vspace{-.7cm}
\iftoggle{nips}{}{
    \textcolor{red}{
    We have missed something here. So the issue is that even if all the  breakages occur correctly after $t$ iterations, then when the SEE checks for elimination, it might eliminate a wrong set of experts and this may produce constant regret. 
    For example, let say we have only three experts with $p_1>p_2>p_3$ such that 
    $$
    A := p_1(1-p_2)(1-p_3) - (1-p_1)p_2p_3.
    $$
    Then it is easy to check when $A>0$, then $\setExpert^*=\{1\}$ otherwise $\setExpert^*=\{1,2,3\}$. 
    $$
    \hat{A} := \hat{p}_1(1-\hat{p}_2)(1-\hat{p}_3) - (1-\hat{p}_1)\hat{p}_2\hat{p}_3.
    $$
    Now suppose, after $t$ rounds, we have good estimations of the competencies. but what decides whether SEE suffers regret or not is $\hat{A}$. If both $A$ and $\hat{A}$ have the same sign, then no regret otherwise, there will be a small regret depending on $|\pMaj(\{1\})-\pMaj(\{1,2,3\})|$.
    }
    \textcolor{red}{
    Another thing is about the current version of SEE. It starts with $\setExpertCand=\setExpert$ which means $\advFunc(\setExpertCand, \setExpert')=0$ always because $\setExpertCand\cup \setExpert'=\setExpertCand$. This means that at the first round, line 7 will be executed. This should not be the case. So the condition in 6 of SEE should be $<0$ and also in the main text line 141, it should be $    \advFunc(\setExpertCand, \setExpert') < 0 \implies \setExpert^* \neq \setExpertCand \cup \setExpert'. $
    }
    \ja{
    Since, we eventually require that for all experts their estimated competencies are accurate enough such that all breakages (if any) occur with high confidence, then we have to apply the union bound over all experts. This leads to 
    \begin{align*}
    \mathbb{P}\Big( \left|  \hat{p}^t_i - p_i \right| \leq \ucB_i^t , \forall i\in\{1,...,N\}\Big) \geq 1 - 2\sum_{i=1}^N \exp\left( -\frac{t (\ucB_i^t)^2}{2\sigma^2} \right).
    \end{align*}
    This changes the regret bound by an addition $N$ inside the logarithm. 
    More precisely, by setting
    $\ucB_i^t\leq \minPEps/4$ for all $i$, we get
    \begin{align*}
    \mathbb{P}\Big( \left|  \hat{p}^t_i - p_i \right| \leq \minPEps/4 , \forall i\in\{1,...,N\}\Big) \geq 1 - 2N \exp\left( -\frac{t \minPEps^2}{32\sigma^2} \right).
    \end{align*}
    On the other hand, for any given candidate set $\setExpertCand$ (assuming $|\setExpertCand|$ is odd, similar line of reasoning also works for the even case), when $|p_i-\hat{p}^t_i|\leq \epsilon$ for all experts, we could write
    \begin{align*}
        &|\pMaj(\setExpertCand) -\widehat{\pMaj}^t(\setExpertCand)|\\
        &=\left|\sum_{\substack{S\subseteq\setExpertCand \\ |S|\geq |\setExpertCand|/2}}\prod_{i\in S}p_i\prod_{j\in S^c}(1-p_j)-\sum_{\substack{S\subseteq\setExpertCand \\ |S|\geq |\setExpertCand|/2}}\prod_{i\in S}\hat{p}_i\prod_{j\in S^c}(1-\hat{p}_j)\right|\\
        &\leq 2^{|\setExpertCand|}\max_{S\subseteq\setExpert}\left|\prod_{i\in S}p_i\prod_{j\in S^c}(1-p_j)-\prod_{i\in S}\hat{p}_i\prod_{j\in S^c}(1-\hat{p}_j)\right|\leq 2^{|\setExpertCand|}\sum_{i=1}^N|p_i-\hat{p}_i|\leq 2^N N\epsilon.
    \end{align*}
    Requiring the above probability to be at least $1-\delta=1-1/T$ and using the above inequalities give us
    \begin{align*}
       \mathbb{P}\left(|\pMaj(\setExpertCand) -\widehat{\pMaj}^t(\setExpertCand)|\leq \epsilon_0(t)\right) >1-\frac{1}{T},
    \end{align*}
    where 
    \begin{align*}
        \epsilon_0(t)=\sqrt{\frac{4^NN^2\sigma^2}{t}\log(2NT)}
    \end{align*}
    Recall that \tT{SEE} performs an elimination when 
    \begin{align*}
        \advFunc(\setExpertCand, \setExpert') < 0, \quad \forall \setExpert' \in \tT{TopK}(\setExpertCand^{\leq j})
    \end{align*}
    There are at most $N$ sets in $\tT{TopK}(\setExpertCand^{\leq j})$ and in order for the algorithm to perform the elimination correctly, all these sets should 
    Defining $t_0:=$, and keeping in mind that $\pMaj(\setExpert, \theta^*) - \pMaj(\setExpert, \theta_t)\leq 1$, result in 
    \begin{align*}
        R_T\leq \sum_{t=1}^{t_0}1 + \frac{1}{T} \sum_{t=t_0+1}^T 1+ (1-\frac{1}{T})\sum_{t=t_0+1}^T 0\in\mathcal{O}\left(\frac{1}{\minPEps^2}\log\big(NT\big)\right).
    \end{align*}
    Note that 
    \begin{align*}
        \max_{S\subseteq\setExpert}\left|\prod_{i\in S}p_i\prod_{j\in S^c}(1-p_j)-\prod_{i\in S}\hat{p}_i\prod_{j\in S^c}(1-\hat{p}_j)\right|\leq\sum_{i=1}^N|p_i-\hat{p}_i|.
    \end{align*}
    we need to bound
    \begin{align*}
        |\pMaj(\setExpertCand\cup\{k\}) -\pMaj(\setExpertCand)|=\left|\frac{1}{2}-p_k\right|\cdot\sum_{\substack{S\subseteq\setExpertCand \\ |S|= |\setExpertCand|/2}}\prod_{i\in S}p_i\prod_{j\in S^c}(1-p_j)
    \end{align*}
    }
}
\begin{proof}
\textbf{Pairwise Comparison Regret:} During execution, there exist $\binom{N}{2}$ possible pairwise comparisons leading to expert elimination, but the algorithm would encounter at most $N-1$ elimination events. For each elimination round, we can expect the most samples to be drawn when $\Delta_{ij}$ between experts, $i$ and $j$, is minimized, therefore, the maximum sample complexity to achieve full breakage among all experts is upper-bounded by the sum over $t_{i,i+1}$ from $i$ goes from 1 to $N-1$. Furthermore, we also impose conditions to limit the sub-optimal inclusion of experts under conservative advantage function estimates (Def. \ref{def:conservative_exclusion}), so long as the $\ucB$ is sufficiently sampled such that $2 \, \ucB < \minGapMaj$ (illustrated by Condition \ref{cond:consistent_inclusion}). As a consequence Lemma \ref{lem:pac_sample_complexity_shrink} will still hold by the same derivation, but we take the smaller of two gaps, where we define $\minGapSEE$ as, 
\begin{align*}
    \minGapSEE := \min \{ \minPEps, \minGapMaj \}.
\end{align*}
where $\minPEps = \min_{i\neq j}\Delta_{ij}$ and $\minGapMaj$ can be computationally derived per Def. \ref{def:minGapMaj}. Leveraging the sample complexity result from Lemma \ref{lem:pac_sample_complexity_shrink}, the regret decomposes as,

{\small
\begin{align}
    R_T \leq \sum_{i=1}^{N-1}\left(\underbrace{\frac{16(\sigma_i^2 + \sigma_{i+1}^2)\log(4/\delta)}{\minGapSEE^2}}_{\text{Exploration Cost}} + \underbrace{\delta T\bar{\Delta}}_{\text{Failure Cost}}\right) \label{eq:see_pac_regret_expr_delta}.
\end{align}
}
Here, we apply the fact that the optimality gap is upper bounded, $\pMaj(\setExpert, \theta^*) - \pMaj(\setExpert, \theta_t)\leq 1$.

In order to translate the sample complexity result from Lemma \ref{lem:pac_sample_complexity_shrink} into a regret bound, we must realize that we expect to play suboptimally with probability $1-\delta$. Next, $\delta$ represents the probability where a catastrophic breakage even could happen (i.e. the algorithm eliminated expert $i$ from the committee, even though they were supposed to be part of the optimal expert committee). If catastrophic breakage were to occur, we could obtain a upper bounded worst-case regret of $T\bar{\Delta}$, where $\bar{\Delta}\leq1$ (the maximum expert competency difference) - so essentially a regret of $T$. Therefore, we select $\delta = 1/T$ to offset this issue. By selecting $\delta = 1/T$, the failure cost is effectively an upper bounded by a constant (in this case 1), and the exploration cost scales logarithmically with the number of samples $T$. We can further express Eq. \eqref{eq:see_pac_regret_expr_delta} as,
\begin{align*}
    R_T &\leq \sum_{i=1}^{N-1} \left( \frac{32\sigma^2}{\minGapSEE^2} \log(4T) + 1 \right)\in \bigO \left( \frac{N\sigma^2}{ \minGapSEE^2} \log(T) \right).
\end{align*}
\end{proof}

\clearpage

\subsection{Mixed Integer Program Formulation for Optimal Weights} \label{sec:mip_solution_appendix}





\paragraph{Problem Formulation:} Let \( \boldsymbol{\theta} = [\theta_1, \theta_2, \dots, \theta_N] \) be a continuous positive vector, where \( \theta_i \geq 0 \) for all \( i = 1, 2, \dots, 2^N \). Let \( \mathbf{x} = [x_1, x_2, \dots, x_{2^N}] \) be a given fixed vector. Define \( \mathbf{z} = [\indicMip_1, \indicMip_2, \dots \indicMip_{2^N}] \) as a binary vector corresponding to each scenario $S$, as determined by the following if-else condition,

\[
\indicMip_S = 
\begin{cases} 
1 & \text{if } \mathbf{x}_i \cdot \boldsymbol{\theta} > Q \\
0 & \text{otherwise}.
\end{cases}
\]
where $Q \in \mathbb{R}^+$ is a given constant, and $\indicMip_S$ is the indicator of the specific scenario. Then, the objective in \eqref{eq:mip_objective_func} can be written  as the dot product \( \mathbf{z}^\top \mathbf{p} \), where \( \mathbf{p}=[p_1, \dots, p_{|\setS|}]\) and $p_S$,  with a slight abuse of notation, is defined by,
\[ 
    p_S := \prod_{i \in  S} p_i  \prod_{j\in S^c}  (1-p_j), \quad S\in \setS,
\]

where $\setS$ is the size of all scenarios of size $2^N$.

\paragraph{Mixed Integer Linear Program Formulation:} The problem can be formulated as a mixed-integer linear program (MIP) as follows:
\begin{align}    
    \max_{\theta \in \mathbb{R}^{N}} \,\, \sum_{S \in \setS} \indicMip_S \prod_{i \in  S} p_i  \prod_{j\in S^c}  (1-p_j) \label{eq:mip_indicator_condition}
\end{align}
subject to: 
\begin{align}
    &\sum_{j=1}^N x_{j} \theta_j - Q \geq \epsilon_{\tT{MIP}} \indicMip_S, && \forall S \in \setS \label{eq:constraint1} \\
    &\sum_{j=1}^N x_{j} \theta_j - Q \leq M \indicMip_S, && \forall S \in \setS \label{eq:constraint2} \\
    &\sum_{j=1}^N \theta_j = 2Q, \label{eq:constraint3} \\
    &\theta_j - \theta_k \geq \minPEps, && \forall j, k \in \{1, 2, \dots, N\}, j < k, \label{eq:constraint4} \\
    &\indicMip_S + \indicMip_{S^c} \leq 1, && \forall S \in \setS, \text{ where } \mathbf{x}_k = \mathbf{1} - \mathbf{x}_i,  \label{eq:constraint6} \\
    &\indicMip_S \in \{0, 1\}, && \forall S \in \setS . \label{eq:constraint8}
\end{align}
As standard notation for indicator variable $\indicMip_s$, we define $M$ as a very large number, and $\epsilon_{\tT{MIP}}$ as a very small number, such that the indicator condition in Eq. \eqref{eq:mip_indicator_condition} will hold as specified.

\paragraph{Description of Constraints}

This MIP is formulated with the objective variable as the dot product between a binary selection variable which dictates the possible voting outcomes, and a \textit{possibility variable} $z_i$ which dictates the possibility of this outcome among the constraints . This produces a valid sum over all possibilities and their individual expected values, over which the summation over indicates the expectation over the configuration provided a set of weights $\theta$.

\begin{itemize}
    \item \textbf{Correct Vote Indicators:} The two constraints, \eqref{eq:constraint1} and \eqref{eq:constraint2}, ensure that \(z_i = 1\) if \(\sum_{j=1}^N x_{ij} \theta_j \geq Q + \epsilon\), or \(z_i = 0\) otherwise.
    \item \textbf{Electorate Approval Condition:} the two constraints  \eqref{eq:constraint1} and \eqref{eq:constraint8}, the variable \(z_i\) is binary and is determined by the condition \(\sum_{j=1}^N x_{ij} \theta_j \geq Q + \epsilon\).
    \item \textbf{Equality at Quota:} The constraint \eqref{eq:constraint3} ensures the total sum of \(\theta_j\) is fixed to $2Q$. Lemma \ref{lem:equality_Q} provides constraint tightening which allows to solve address the quota inequality with a more convenient constraint.
    \item \textbf{$\minPEps$-Incrementality (Heterogeneous Expert Competencies) and $\theta$-Wedge Property:} \eqref{eq:constraint4} enforces the Assumption \ref{ass:eps-increm}, and the $\theta$-Wedge property in Remark \ref{rem:wedge}, imposing a minimum difference between all pairs of \(\theta_i, \theta_j\).
    \item \textbf{Proper Outcome Complements:} The constraint \eqref{eq:constraint6} ensures that complementary binary vectors \(\mathbf{x}_i\) and \(\mathbf{x}_k\) cannot both have \(z_i = 1\) and \(z_k = 1\). (e.g. a specific outcome where expert 1,3,4 votes correctly, cannot have any of 1,3, or 4 also voting incorrectly)
\end{itemize}



\subsection{Assumption on Summation of Competencies}

 Let us denote in shorthand, $\mathbf{z}( \mathbf{p} ) := \arg\max_{\mathbf{z}} \mathbf{z}^\top \mathbf{p}$ such that $\mathbf{z}( \mathbf{p} )$ is the value of $\mathbf{z}$ that maximizes the dot product of any given $\mathbf{p}$. Let $\bar{\mathbf{p}}$ denote the optimistic estimate estimates and $\underline{\mathbf{p}}$ represent the pessimistic estimates of scenario probabilities based on $\ucB$ uncertainty for expert competencies.

    \begin{assumption} \label{ass:bounding_condition}
        For scenario probabilities $\mathbf{p}$ and scenario indicator vector $\mathbf{z} \in \{0, 1\}^{|\setS|}$, the inequality holds such that,
        \begin{align}
            \sum_{i=1}^N \bar{p}_i - \mathbf{z}^\top \bar{\mathbf{p}} \geq \sum_{i=1}^N \underline{p}_i - \mathbf{z}^\top\underline{\mathbf{p}}, \label{eq:bounding_condition}
        \end{align}
        where, $\bar{p}_i$ and  $\underline{p}_i$ refer to the maximum and minimum probabilities subject to $\ucB$ bounds over competency estimates, i.e. $p_i \pm \ucB, \, \forall p_i \in p_{1}, \dots, p_N$.
    \end{assumption}

    To provide additional discourse on Assumption \ref{ass:bounding_condition}, we assume that the variation in $\mathbf{z}(\bar{\mathbf{p}})^\top \bar{\mathbf{p}} - \mathbf{z}(\underline{\mathbf{p}})^\top\underline{\mathbf{p}}$ is sufficiently small compared to the fluctuation in the shrinking $\ucB$. In the worst case scenario, for each $2^{N-1}$ scenario probabilities, we could add a deviation term of $N \times \ucB$ for each scenario (although this is a highly conservative estimate).
    
    \textbf{Bonferroni Expansion:} The \textit{Bonferroni inequalities} provide a sequence of upper and lower bounds on the probability of the union of \(N\) events, refining the standard inclusion-exclusion principle. These inequalities alternate between over- and under-estimates as more terms are included, ultimately converging to the exact probability when all \(N\) terms are considered.

    For events \( A_1, A_2, \dots, A_N \), the exact probability of their union is given by:
    \[
    P\left( \bigcup_{i=1}^N A_i \right) = \sum_{k=1}^N (-1)^{k+1} \tT{S}_k,
    \]
    where \(\tT{S}_k\) represents the sum of the probabilities of all \(k\)-wise intersections:
    \[
    \tT{S}_k = \sum_{1 \leq i_1 < i_2 < \dots < i_k \leq N} P\left( A_{i_1} \cap A_{i_2} \cap \dots \cap A_{i_k} \right).
    \]

    \textbf{General \(k\)-th Order Bonferroni Inequality:}  The Bonferroni inequalities state that truncating the inclusion-exclusion series at the \(k\)-th term yields an upper or lower bound depending on whether \(k\) is odd or even. If \(k\) is odd, the partial sum is an upper bound,
    \[
    P\left( \bigcup_{i=1}^N A_i \right) \leq \sum_{j=1}^k (-1)^{j+1} \tT{S}_j.
    \]
    If \(k\) is even, the partial sum is a lower bound,
    \[
    P\left( \bigcup_{i=1}^N A_i \right) \geq \sum_{j=1}^k (-1)^{j+1} \tT{S}_j.
    \]
    Let $(A_1, A_2, A_3, \dots, A_N)$ be $N$ independent events with success probabilities $(P(A_1), P(A_2), P(A_3), \dots, P(A_N))$,

    \small
    \begin{align*}
        P\left( \bigcup_{i=1}^N A_i \right) &= P(A_1) + P(A_2) + P(A_3) \dots + P(A_N) 
        - P(A_1 \cap A_2) - P(A_1 \cap A_3) - P(A_2 \cap A_3) \dots \\
        &+ P(A_1 \cap A_2 \cap A_3) \dots 
        - \dots 
        \pm P(A_1 \cap A_2 \cap A_3 \dots \cap A_N) 
    \end{align*}
    \normalsize

    Given expert competencies, we could express,
    \begin{align*}
        P\left( \bigcup_{i=1}^N A_i \right) 
        = \underbrace{p_1 + p_2 + p_3 \dots + p_N}_{\tT{S}_1}
        - \underbrace{p_1p_2 - p_1p_3 - p_2p_3 \dots}_{\tT{S}_2} 
        + \underbrace{p_1 p_2 p_3 \dots}_{\tT{S}_3} 
        - \dots 
        \pm \underbrace{\prod^N_{i=1} p_i}_{\tT{S}_N}
    \end{align*}
    If we were to vary $p_i \pm \ucB$, we would incorporate this modification into all of the terms in the Bonferroni expansion. Given the alternating $\pm$ structure of the Bonferroni expansion, Ass. \ref{ass:bounding_condition} posits that the difference between, any two MIP solutions under $\bar{p}_i$ versus $\underline{p}_i$ would entail a difference of,
    \begin{align*}
        \mathbf{z}^\top \mathbf{\bar{p}} - \mathbf{z}^\top \mathbf{\underline{p}}  
        &= \sum_{i=1}^N \indicMip_s \bar{p}_i \,\,
            - \indicMip_s \bar{p}_1 \bar{p}_2 - \indicMip_s \bar{p}_1 \bar{p}_3 - \indicMip_s \bar{p}_2 \bar{p}_3 \dots 
            + \indicMip_s \bar{p}_1 \bar{p}_2 \bar{p}_3 \dots 
            - \dots
            \pm \indicMip_s \prod^N_{i=1}  \bar{p}_i
        \\
        &- 
        \sum_{i=1}^N \indicMip_s \underline{p}_i \,\,
        \underbrace{
            + \indicMip_s \underline{p}_1 \underline{p}_2 + \indicMip_s \underline{p}_1 \underline{p}_3 + \indicMip_s \underline{p}_2 \underline{p}_3 \dots 
            - \indicMip_s \underline{p}_1 \underline{p}_2 \underline{p}_3 \dots 
            + \dots
            \pm \indicMip_s \prod^N_{i=1} \underline{p}_i
        }_{\text{Higher order terms.}}
    \end{align*}

    Where $\bar{p}_i = p_i + \ucB$, and $\underline{p}_i = p_i - \ucB$, and $\indicMip_s$ is the indicator variable for whether the specific scenario is possible, and equivalent in both the upper and lower bounding formulations. Ass. \ref{ass:bounding_condition} posits that the higher order terms of the Bonferroni expansion serve to cancel each other out, such that as $T \to \infty$, and $\ucB \to 0$, 
    \begin{align}
        \mathbf{z}^\top \mathbf{\bar{p}} - \mathbf{z}^\top \mathbf{\underline{p}} 
        \leq \sum_{i=1}^N \bar{p}_i - \sum_{i=1}^N \underline{p}_i.
    \end{align}
    This is true in most cases where the values of the expert competencies are spread out over $[0,1]$ and not highly concentrated around a single number, which is supported by Ass. \ref{ass:eps-increm}. It also tends to be true as we have a modest size of experts, as in our setting. It is possible to check, whether Ass. \ref{ass:bounding_condition} holds provided $\bar{p}_i$ and $\underline{p}_i$ as $\ucB$-based estimates input into the MIP. Our empirical analysis has demonstrated that this assumption holds in all tested configurations of expert competencies.


\subsection{Bound on Max-Min Probabilities for Weighted Majority Voting}

\begin{lemma} \label{lem:mip_pi_diff_bound}
    Under the Assumption \ref{ass:bounding_condition}, the regret of any learning algorithm (Def. \ref{def:regret}) which provides bounds on the individual agent competencies $p_i$ in a majority voting setting can be bounded by,
    \begin{align}
        \reg_t\leq \sum^N_{i=1} \bar{p}_i - \sum^N_{i=1} \underline{p}_i.
    \end{align}   
\end{lemma}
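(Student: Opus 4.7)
The plan is to express the per-round regret in the scenario-probability form used by the MIP and then sandwich it between quantities computed under the optimistic and pessimistic competency estimates. Let $\mathbf{p}$ be the vector of true scenario probabilities $p_S = \prod_{i\in S}p_i\prod_{j\in S^c}(1-p_j)$, and let $\bar{\mathbf{p}}$, $\underline{\mathbf{p}}$ be the corresponding vectors built from $\bar{p}_i = \hat p_i^t+\ucB_i^t$ and $\underline{p}_i = \hat p_i^t-\ucB_i^t$. Following Algorithm~\ref{alg:theta_wmv}, write the weights chosen at round $t$ as $\theta^t \in \arg\max_\theta \mathbf{z}(\theta)^\top \bar{\mathbf{p}}$, so that $\reg_t = \mathbf{z}(\theta^*)^\top \mathbf{p} - \mathbf{z}(\theta^t)^\top \mathbf{p}$.

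First, I would establish a monotonicity fact: for any fixed weight vector $\theta$, the set of ``winning'' configurations $\mathcal{W}(\theta) = \{S : \sum_{i\in S}\theta_i > Q\}$ is upward-closed, so $\mathbf{z}(\theta)^\top \mathbf{p}$ is the probability of a majority of correct voters under independent Bernoullis with parameters $p_i$. A standard coupling argument (using $U_i\sim\text{Unif}[0,1]$ and declaring expert $i$ correct iff $U_i \le p_i$) then shows this probability is componentwise nondecreasing in each $p_i$. Applying this with $p_i \uparrow \bar p_i$ yields $\mathbf{z}(\theta^*)^\top \bar{\mathbf{p}} \geq \mathbf{z}(\theta^*)^\top \mathbf{p}$, and with $p_i \downarrow \underline p_i$ yields $\mathbf{z}(\theta^t)^\top \underline{\mathbf{p}} \leq \mathbf{z}(\theta^t)^\top \mathbf{p}$.

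Next, I would exploit the optimistic choice of $\theta^t$: by definition $\mathbf{z}(\theta^t)^\top \bar{\mathbf{p}} \geq \mathbf{z}(\theta^*)^\top \bar{\mathbf{p}}$. Chaining the three inequalities,
\begin{align*}
\reg_t
&= \mathbf{z}(\theta^*)^\top \mathbf{p} - \mathbf{z}(\theta^t)^\top \mathbf{p}
\leq \mathbf{z}(\theta^*)^\top \bar{\mathbf{p}} - \mathbf{z}(\theta^t)^\top \underline{\mathbf{p}}
\leq \mathbf{z}(\theta^t)^\top \bar{\mathbf{p}} - \mathbf{z}(\theta^t)^\top \underline{\mathbf{p}}.
\end{align*}
Finally, applying Assumption~\ref{ass:bounding_condition} with the specific indicator $\mathbf{z} = \mathbf{z}(\theta^t)$ and rearranging gives $\mathbf{z}(\theta^t)^\top \bar{\mathbf{p}} - \mathbf{z}(\theta^t)^\top \underline{\mathbf{p}} \leq \sum_i \bar p_i - \sum_i \underline p_i$, which is the claimed bound.

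The main obstacle is justifying the upper-bound step $\mathbf{z}(\theta^*)^\top \mathbf{p} \leq \mathbf{z}(\theta^*)^\top \bar{\mathbf{p}}$ (and its pessimistic counterpart), because $\bar{\mathbf{p}}$ does not dominate $\mathbf{p}$ entrywise — the $(1-p_j)$ factors in non-winning positions actually shrink under inflation of $p_j$. The resolution is that we do not need componentwise domination of $\mathbf{p}$; we only need that the aggregate majority-vote probability (the sum over the upward-closed $\mathcal{W}(\theta)$) is monotone in each $p_i$, which is exactly what the coupling argument delivers. Everything else is a mechanical chaining of inequalities, with the final step deferring to Assumption~\ref{ass:bounding_condition} rather than reproving its Bonferroni-style justification.
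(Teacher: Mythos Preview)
Your proof is correct and follows the same high-level strategy as the paper: sandwich the per-round regret between optimistic and pessimistic MIP values and then invoke Assumption~\ref{ass:bounding_condition}. The execution differs in two places. First, the paper simply asserts the chain $\mathbf{z}(\mathbf{p})^\top \underline{\mathbf{p}} \leq \mathbf{z}(\mathbf{p})^\top \mathbf{p}^* \leq \max_{\mathbf{z}} \mathbf{z}^\top \mathbf{p}^* \leq \mathbf{z}(\bar{\mathbf{p}})^\top \bar{\mathbf{p}}$ without justifying the outer inequalities, whereas you supply the missing argument via the coupling/monotonicity observation that $\mathcal{W}(\theta)$ is upward-closed (for $\theta\geq 0$), so $\mathbf{z}(\theta)^\top\mathbf{p}$ is nondecreasing in each $p_i$; this is the right justification and is implicit but unstated in the paper. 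Second, the paper inserts a first-order Bonferroni step ($\mathbf{z}^\top\mathbf{p}\leq P(\bigcup_i A_i)\leq\sum_i p_i$) before invoking the assumption, while you route through the optimism inequality $\mathbf{z}(\theta^t)^\top\bar{\mathbf{p}}\geq \mathbf{z}(\theta^*)^\top\bar{\mathbf{p}}$ so that the \emph{same} indicator $\mathbf{z}(\theta^t)$ appears on both sides of the final difference, allowing Assumption~\ref{ass:bounding_condition} to be applied directly in its rearranged form $\mathbf{z}^\top\bar{\mathbf{p}}-\mathbf{z}^\top\underline{\mathbf{p}}\leq\sum_i\bar p_i-\sum_i\underline p_i$. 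Your chain is tighter and more self-contained; the paper's Bonferroni detour is really motivation for the assumption rather than a logically necessary step, so bypassing it loses nothing.
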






\begin{proof}

    Under full-bandit feedback, as we draw more samples the estimate of each individual expert's competencies improve, and the confidence bound shrinks, and with high confidence the estimate of each competency is bounded by, $p_i \pm \ucB$. Recall that the probability of each scenario $S \in \setS$ denoted by $p_S$ is given by,
    \[ 
        p_s = \prod_{i \in  S} p_i  \prod_{j\in S^c}  (1-p_j), \quad S\subseteq \setS,
    \]
    

    where $\setS$ contains $2^N$ elements. Let $\mathbf{z}^*$ be the binary vector that maximizes \(\mathbf{z}^\top \mathbf{p}\) for a given \(\mathbf{p}\). We further subject $\mathbf{z}$ to a constraint,
    
    \begin{align}
        \sum^{|\setS|}_s \indicMip_S \leq \frac{1}{2} |\setS|, \quad \forall \indicMip_S \in \mathbf{z} \label{eq:sc_half_constraint}
    \end{align}

    The constraint in Eq \eqref{eq:sc_half_constraint} represents the restriction on $\indicMip_S$, such that if $\indicMip_S = 1$ then its complement $\indicMip_{S^c} = 0$. Which, giving winning condition for tie-free weighted majority voting $\sum^N_{i=1} \theta_i > Q$, and $Q \geq N/2$, implies that the summation of $\indicMip_S$ over all scenarios cannot be greater than half of all scenarios $0.5|\setS|$, because its scenario complement, $S^c$, must be not be possible. This constraint is imposed because $\indicMip_S \in \{0, 1\}^{|\setS|}$ represents the possibility of the scenario in the set of all possible scenarios, and any possible scenario always excludes its complement, $S^c$, therefore $\indicMip_{S^c} = 0$, and the sum over $\setS$ is at most one half of $|\setS|$. Let $\mathbf{p}$ represent the probability vector of $2^N$ possible scenarios in $\setS$. Therefore, the regret is expressed as,
    
    \begin{align}
        \reg_t = \max_{\mathbf{z}} \mathbf{z}^\top \mathbf{p}^* - \mathbf{z}(\mathbf{p})^\top \mathbf{p}^* 
    \end{align}

    \textbf{Establishing the Inequality:} We establish the key inequality to bound our regret,
    \begin{align}
       \mathbf{z}(\mathbf{p})^\top \underline{\mathbf{p}} 
       \leq \mathbf{z}(\mathbf{p})^\top \mathbf{p}^* 
       \leq \max_{\mathbf{z}} \mathbf{z}^\top \mathbf{p}^* 
       \leq \mathbf{z}(\bar{\mathbf{p}})^\top \bar{\mathbf{p}}.
    \end{align}

    We consider the Bonferroni inequalities, via first order Bonferroni inequality,
    \begin{align}
        \max_{\mathbf{z}} \mathbf{z}^\top \mathbf{p}
        \leq P\left( \bigcup_{i=1}^{N} A_i \right) 
        \leq \sum^N_{i=1} p_i \label{eq:bonferroni_event_bound}
    \end{align}
    Where $A_i$ are defined as probability of success of expert $i$, for $N$ experts. The probability of the union of $A_i$, that is at least one experts predicts correctly, is always greater than or equal to the probability that at least $N/2$ experts predict correctly (majority condition). Therefore, for any vector $\mathbf{z}$ adhering to constraints from Eq. \eqref{eq:sc_half_constraint}, we can use the union bound over all expert competencies to upper bound the majority correct vote probability, leading to upper bounds, 
    \begin{align}
        \mathbf{z}^\top \bar{\mathbf{p}} 
        \leq \sum^N_{i=1} \bar{p}_i, 
        \qquad  \ \mathbf{z}^\top \underline{\mathbf{p}} 
        \leq \sum^N_{i=1} \underline{p}_i.
    \end{align}
    From the bound on Eq. \eqref{eq:bonferroni_event_bound}, combined with constraints on the sensitivity of the competency estimates as limited by Assumption \ref{ass:bounding_condition} we can obtain bounds on the simple regret of the algorithm under uncertainty of expert competencies,
    \begin{align}
        \reg_t\leq \sum^N_{i=1} \bar{p}_i - \sum^N_{i=1} \underline{p}_i.
    \end{align}    
    
    


\end{proof}

\subsection{Proof of Theorem \ref{thm:regret_theta_star_mip}} \label{thm:regret_theta_star_mip_proof}

\textbf{Regret Bound for $\theta$-WMV Algorithm:} An algorithm based on $\theta$-MIP weight assignment, such as in Algorithm \ref{alg:theta_wmv}, achieves a regret bound of $R_T \in \bigO \left( \sqrt{N T \log(T)} \right)$.


\begin{proof}
Provided the result from Lemma \ref{lem:mip_pi_diff_bound}, we simply  account for the uncertainty difference between the two estimated sums for $\sum_{n=1}^N p_i$ and $\sum_{n=1}^N \bar{p}_i$. 
    Let $S_\Sigma = \sum_{n=1}^N p_i$, denotes the true summed competencies, thus any deviation from $S_\Sigma$ via estimating $p_i$ will give the PAC bound of $\sum_{i=1}^N (\overline{p}_i - p_i)$. Let us express the uncertain quantity as $\ucB$, where, 

    \[
        \ucB_\Sigma = \left| \sum_{i=1}^N \hat{p}_i - S_\Sigma \right|.
    \]
    
    \paragraph{PAC Bounds for Summation of Binomial Trials:} 
    The $\ucB_\Sigma$ represents the radius that the estimate $\sum_{i=1}^N \hat{p}_i$ should fall within with probability $1-\delta$. Consider \( N \) independent binomial experiments, where the \( i \)-th experiment consists of \( n_i \) trials with success probability \( p_i \). Let \( \hat{p}_i = \frac{k_i}{n_i} \) denote the empirical probability of success, where \( k_i \) is the number of successes observed in the \( i \)-th experiment. We are interested in deriving a \( 1-\delta \) PAC bound for the sum of the true probabilities \( S_\Sigma = \sum_{i=1}^N p_i^* \). 
    
    \paragraph{Hoeffding's Inequality for Unequal  $n_i$:}
    Using Hoeffding's inequality, the deviation of the empirical sum \( \sum_{i=1}^N \hat{p}_i \) from the true sum \( S_\Sigma \) can be bounded. Let \( \{X_{i,j}\}_{j=1}^{n_i} \) be independent Bernoulli random variables with parameter \( p_i \in [0,1] \), for \( i = 1, \ldots, N \). That is, for each fixed \( i \),
    
    \[
    X_{i,j} \sim \text{Bernoulli}(p_i), \quad \text{i.i.d. for } j = 1, \ldots, n_i,
    \]
    and the samples are independent across different \(i\). Our goal is to establish a concentration inequality for the sum of these empirical means, $S_\Sigma$, around its expectation, $\mu$. Where,
    \begin{align}
        \hat{p}_i := \frac{1}{n_i} \sum_{j=1}^{n_i} X_{i,j}, \qquad S_\Sigma := \sum_{i=1}^N \hat{p}_i, \qquad \mu := \sum_{i=1}^N p_i.
    \end{align}
    Each \(\hat{p}_i\) is an average of \(n_i\) independent Bernoulli random variables bounded in \([0,1]\). Hence, \(\hat{p}_i\) itself lies in the interval \([0,1]\). We can write,
    \[
    \sum_{i=1}^N \hat{p}_i = \sum_{i=1}^N \frac{1}{n_i} \sum_{j=1}^{n_i} X_{i,j}.
    \]
    Consider the collection of all \(M := \sum_{i=1}^N n_i\) independent random variables \(\{X_{i,j}\}\). The sum \(S\) is a weighted sum of these variables, where each \(X_{i,j}\) is multiplied by \(1/n_i\). 

    \textbf{Hoeffding-Type Concentration Bound:} For any \(\epsilon > 0\), it holds that (see Lemma \ref{lem:scaling_hoeffding_1_over_n}), 
    \[
    \mathbb{P}\left( \left| \sum_{i=1}^N \hat{p}_i - \sum_{i=1}^N p_i \right| \geq \epsilon \right) \leq 2 \exp \left( - \frac{2 \epsilon^2}{\sum_{i=1}^N \frac{1}{n_i}} \right).
    \]
    To achieve a \( 1-\delta \) PAC bound, we set the right-hand side to \( \delta \) and solve for \( \epsilon \):
    \begin{align}
        \delta = 2 \exp\left( -\frac{2 \epsilon^2}{\sum_{i=1}^N \frac{1}{n_i}} \right), \qquad
        \epsilon = \sqrt{\frac{\ln(2/\delta)}{2} \cdot \sum_{i=1}^N \frac{1}{n_i}}.
    \end{align}
    Thus, with probability at least \( 1-\delta \):
    \begin{align}
        \left| \sum_{i=1}^N \hat{p}_i - S_\Sigma \right| \leq \sqrt{\frac{\ln(2/\delta)}{2} \cdot \sum_{i=1}^N \frac{1}{n_i}}.
    \end{align}
    The term \( \sum_{i=1}^N \frac{1}{n_i} \) captures the variability introduced by the unequal number of trials \( n_i \). Experiments with smaller \( n_i \) contribute more to the bound due to their higher uncertainty, while experiments with larger \( n_i \) contribute less due to their lower uncertainty.
    
    Suppose in the complete full bandit feedback, we always play every single arm,, therefore at each time interval, we would play each expert exactly $N$ times, therefore $n_i = t$, for all experts at round $t$. We can then use an integral to bound a summation as,
    \[
    \sum_{t=1}^{T} \sqrt{\frac{\ln(2/\delta)}{2} \cdot \sum_{i=1}^N \frac{1}{n_i}} = \sum_{t=1}^{T}\sqrt{\frac{\ln(2/\delta)}{2} \cdot \sum_{i=1}^N \frac{1}{t}} \leq  \sqrt{N\frac{\ln(2/\delta)}{2}} \int_{1}^{T} \sqrt{\frac{1}{t}} \, dt,
    \]
    
    To solve for the definite integral,
    \begin{align}
        \int_{1}^{T} \frac{1}{\sqrt{t}} \, dt &= \int_{1}^{T} t^{-1/2} \, dt \\
         \left[ 2t^{1/2} \right]_{1}^{T} &= 2T^{1/2} - 2(1)^{1/2} =  2\sqrt{T} - 2  
    \end{align}
    Putting this back into the original inequality,
    \begin{align}
        \sqrt{\frac{\ln(2/\delta)}{2} \cdot \sum_{i=1}^N \frac{1}{n_i}} \leq  \sqrt{\frac{\ln(2/\delta)}{2}} 2\sqrt{N} \left( \sqrt{T} - 1 \right) \label{eq:pi_ucb_uncertainty}
    \end{align}
    Eq. \eqref{eq:pi_ucb_uncertainty} bounds the high probability estimation uncertainty of the algorithm, where at each time step the regret can be bounded in the high probability regime. Therefore, we can express the expected regret from 1 to $T$ as,
    \begin{align}
        R_T \leq  \sqrt{2N\ln(2/\delta) } \left( \sqrt{T} - 1 \right)  + T \bar{\Delta} \delta \label{eq:pi_ucb_uncertainty}
    \end{align}
    We can choose $\delta = \frac{1}{T}$, allowing it to vanish. Therefore, the expected regret is bounded by,

    \begin{align}
        R_T \in \bigO \left( \sqrt{N T \log(T)} \right) \label{eq:pi_ucb_uncertainty}
    \end{align}
\end{proof}

\textbf{Worst-Case Bound:} It is possible to check, whether Ass. \ref{ass:bounding_condition} holds provided $\bar{p}_i$ and $\underline{p}_i$, and should this condition not hold, the algorithm will still retain its no-regret properties, however, it will incur a looser regret bound, illustrated in Appendix \ref{sec:alternative_mip_bound}.

\iftoggle{nips}{}
{
    \hlinE
    
    \paragraph{Tighter Bounds Using Bernstein's Inequality}
    For tighter bounds, especially when \( p_i^* \) is close to 0 or 1, Bernstein's inequality can be used:
    \begin{align}
    \mathbb{P}\left( \left| \sum_{i=1}^N \hat{p}_i - S \right| \geq \epsilon \right) \leq 2 \exp\left( -\frac{\epsilon^2}{2 \sum_{i=1}^N \frac{p_i^* (1 - p_i^*)}{n_i} + \frac{2 \epsilon}{3} \sum_{i=1}^N \frac{1}{n_i}} \right).
    \end{align}
    Solving for \( \epsilon \) to achieve a \( 1-\delta \) PAC bound yields:
    \begin{align}
    \epsilon \leq \sqrt{2 \ln(2/\delta) \cdot \sum_{i=1}^N \frac{p_i^* (1 - p_i^*)}{n_i}} + \frac{\ln(2/\delta)}{3} \cdot \sum_{i=1}^N \frac{1}{n_i}.
    \end{align}
    This bound is tighter than Hoeffding's inequality when the variances \( p_i^* (1 - p_i^*) \) are small.

    The derived PAC bounds explicitly account for the imbalance in the number of trials \( n_i \) through the term \( \sum_{i=1}^N \frac{1}{n_i} \). For tighter bounds, Bernstein's inequality can be used to incorporate the variance of each \( \hat{p}_i \). These results are particularly useful in settings where the number of trials varies significantly across experiments. 
    
    \hlinE
}

\subsection{Proof for Corollary \ref{cor:targeted_m_setting}} \label{prf:targeted_m_setting}

\textbf{Targeted-$m$ Setting:} An algorithm exists that allows for the regret bounds of Theorems \ref{thm:regret_bound_succex_log} and \ref{thm:regret_theta_star_mip} to hold with an additional multiplicative $N/m$ factor (e.g., \tT{SEE}'s regret becomes $\mathcal{O}\left( \frac{N^2}{m\minPEps^2} \log T \right)$).

\begin{proof}
    The algorithms presented in the previous sections were developed under the assumption that all experts could be queried at each round. However, this assumption may be restrictive in certain applications where only 
    $m$ experts are available at any  round. We refer to this setting as targeted-$m$, and in the following, we propose an extension of the previous algorithm to address this constraint
    
    A simple modification of the previous algorithm can be applied to the targeted-$m$ setting. Specifically, we partition the $N$ experts into $(N/m)$ groups of size $m$ each. Then, we run a burn-in period during which all experts within each group are queried for $t_0$ rounds, where $t_0$ is defined in Lemma \ref{lem:pac_sample_complexity_shrink}. 
    This results in the total $t_0\cdot(N/m)$ number of queries. 
    The purpose of the burn-in period is to ensure that any breakage event, if it occurs, does so with high probability. 
    Afterwards, we apply Algorithms \ref{alg:succEx} or \ref{alg:theta_wmv} for assembling the OEC or finding the optimal weights in the weighted majority voting problem, respectively. 
    Therefore, the regret bounds of Theorems \ref{thm:regret_bound_succex_log} and \ref{thm:regret_theta_star_mip} remain valid with an additional factor of $N/m$, e.g., the regret of \tT{SEE} in the Targeted-$m$ setting will be $\mathcal{O}( \frac{N^2}{m\minPEps^2} \log T )$.
\end{proof}

\subsection{Binary Outcome Expert Weighting} \label{sec:binary_weighted_voting}

\begin{lemma}
    \textbf{Optimal Majority Voting Expert Weights for Binary Classification:} From \citeA{nitzan:1982_binary_voting}, in a weighted majority voting scenario, as described in Sec. \ref{sec:weighted_maj_voting}, when the space of outcomes is binary (binary classification), the optimal weighting that maximizes the combined predictive accuracy for all experts combined is,
    \begin{align}
        \beta_i = \log \frac{p_i}{1-p_i}.
    \end{align}
    Where $p_i$ is the competency of expert $i$ (the probability that the expert predicts correctly.)
\end{lemma}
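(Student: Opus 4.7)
The plan is to prove this classical result via a Neyman--Pearson / maximum-a-posteriori (MAP) argument. First, I would recast the voting process as a binary hypothesis test: given a true label $y \in \{0,1\}$ drawn from a uniform prior, each expert's vote $V_i \in \{0,1\}$ is conditionally independent given $y$ with $\Pr(V_i = y \mid y) = p_i$. Under a $0$-$1$ loss and a symmetric prior, the accuracy-maximizing classifier is precisely the Bayes/MAP rule, which by conditional independence reduces to thresholding the log-likelihood ratio $\log \Pr(V \mid y=1) / \Pr(V \mid y=0)$ at zero. Optimality of this rule (by the Neyman--Pearson lemma, or equivalently Bayes-risk minimization) immediately implies that no other decision rule --- and hence no other weighting --- can yield strictly higher accuracy.

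Second, I would compute the log-likelihood ratio explicitly via the Bernoulli product form:
\[
\log \frac{\Pr(V \mid y=1)}{\Pr(V \mid y=0)} = \sum_{i=1}^N \Bigl[ V_i \log \tfrac{p_i}{1-p_i} + (1-V_i) \log \tfrac{1-p_i}{p_i} \Bigr] = \sum_{i=1}^N (2V_i - 1) \log \tfrac{p_i}{1-p_i}.
\]
Thresholding this expression at zero is algebraically equivalent to the weighted majority rule $\sum_i \beta_i V_i > \tfrac{1}{2}\sum_i \beta_i$ with $\beta_i = \log(p_i/(1-p_i))$, which is exactly the claimed log-odds weighting.

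Third, I would connect this back to the paper's WMV formulation in Section \ref{sec:weighted_maj_voting}. Choosing quota $Q = \tfrac{1}{2}\sum_i \beta_i$ recovers precisely the Bayes-optimal decision boundary, so $\beta$ attains the maximum in Eq.~\eqref{eq:wmp} for the binary-outcome restriction of $\pMaj(\setExpert, \theta)$. Proposition \ref{prop:discrete_img} further clarifies that the log-odds form is canonical but not unique: any positive rescaling of $\beta$ induces the same aggregate classifier and therefore the same accuracy.

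The main obstacle is handling corner cases cleanly rather than the core identity: (i) experts with $p_i = 1/2$ correctly receive $\beta_i = 0$; (ii) experts with $p_i \to 1$ produce $\beta_i \to \infty$, which requires interpreting the optimization domain of Eq.~\eqref{eq:wmp} projectively (or truncating weights to satisfy $\|\theta\|_1 = 2Q$ via rescaling, which by Proposition \ref{prop:discrete_img} preserves the induced rule); and (iii) the symmetric-prior assumption shifts only the additive threshold, not the weight ratios, so the per-expert log-odds form remains invariant under asymmetric priors. These are the standard caveats of the Nitzan--Paroush argument, and none of them affect the explicit log-likelihood-ratio identity that drives the proof.
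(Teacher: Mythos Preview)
Your proposal is correct and arrives at the same log-odds identity as the paper, but the route differs in how optimality is established. You invoke Neyman--Pearson / Bayes-risk minimization as a known black box: under conditional independence and a symmetric prior, the MAP rule is optimal, and then the explicit LLR computation yields the weights. The paper instead gives a self-contained combinatorial argument: it pairs each vote profile $y$ with its bitwise complement, observes that any deterministic aggregation rule must output $1$ on exactly one member of each pair, and argues directly that $\mathbb{E}[f(y)]$ is maximized by always selecting the more probable member. This yields the same inequality $\prod_{i\in A} p_i \prod_{j\in B}(1-p_j) \geq \prod_{i\in A}(1-p_i)\prod_{j\in B} p_j$ that you obtain as the LLR threshold, and taking logarithms finishes identically. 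Your approach is shorter and more standard in the statistics literature; the paper's is more elementary in that it re-derives the optimality of the likelihood-ratio test for this special case rather than citing it. Your discussion of the corner cases ($p_i = 1/2$, $p_i \to 1$, asymmetric priors) and the connection to Proposition~\ref{prop:discrete_img} on non-uniqueness goes beyond what the paper's proof addresses.
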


\begin{proof}

    We begin with a few key assumptions. First as stated, that the space of predictive outcomes is binary, the expert predicts true, they receive a value of $1$, and false a value of $0$. Let $y$ denote a set of outcomes (i.e. $y = [1, 0, 1, 1, 0]$ indicates that experts 1, 3, 5 selected the correct answer out of 5 experts). Due to the binary aspect of the outcomes, if an expert consistently predicts the incorrect outcome, we can always negate their prediction (predict the opposite) to obtain a equally consistently correct outcome. Further, let us define an aggregation function $f(y): y \mapsto \{ 0, 1 \}$ that takes any outcome and aggregates all individual outcomes to form a single outcome in $\{ 0, 1 \}$.
    
    \textbf{Pairing with Compliments:} Suppose we impose an arbitrary pairing over outcomes. Let $A(y)$ denote the selection over $y$ where we impose as the positive outcomes voters. Let $B(y)$ denote its compliment. Thus for any outcome $y = A(y) \cup B(y)$. This implies that if all outcomes are 1 in $A(y)$ the outcome of $f(y)$ will result in an overall positive outcome $y=1, \, \forall y \in A(y) \implies f(y) = 1$. The compliment, $B(y)$ is by definition mutually exclusive from $A(y)$ therefore it is not possible to include both in the selection voting criteria (either $A(y)$ or $B(y)$ for any outcome $y$.) We also can see that there is a deterministic relations that maps any selection $A(y)$ to its compliment $B(y)$ (i.e. if $A(y) = [1, 0, 1, 1, 0] \implies B(y) = [0, 1, 0, 0, 1]$.) Therefore, the set of all outcomes $\setY$ is paired in such a way. Let us denote this as $(A_k(y), B_k(y))$ for each pair.
    
    \textbf{Maximizing $\expeC[f(y)]$:} In order to maximize the probability that $f(y) = 1$, for each pair $(A_k(y), B_k(y))$ we must determine a selection rule that selects the greater of $P(A_k(y))$ or $P(B_k(y))$. For notation purposes, we impose that,
    \begin{align}
        P(A_k(y)) \geq P(B_k(y)), \quad \forall k \in \seT{K} \implies \expeC[f(y)] = \expeC[f^*(y)] \label{eq:selec_greater_than_compliment}
    \end{align} 
    Should a selection rule be found such that Eq. \eqref{eq:selec_greater_than_compliment} holds, then the value $\expeC[f(y)]$ is maximized. This simply holds because all outcomes are assigned to pairs, and events in each are mutually exclusive (or condition), and events between pairs are not (and condition). We can express this probability as, 
    \begin{align}
        P(A(y)) = \prod_{i \in A} p_i \prod_{j \in B} (1-p_j) \geq  \prod_{i \in A} (1-p_i) \prod_{j \in B} p_j = P(B(y)). \label{eq:compliment_prob_ineq}
    \end{align}
    This forms a sufficient condition, so long as the selection rule fulfills Eq. \eqref{eq:compliment_prob_ineq}.

    \textbf{Selection Rule:} It just so happens that our selection rule $f(y)$ selects $f(y) = 1$, then the weighted sum of outcomes in $A(y)$ must exceed that of $B(y)$ by definition. Thus,
    \begin{align}
        \sum_{i \in A} \beta_i y_i \geq \sum_{j \in B} \beta_j \texttt{Flip}(y_j) \label{eq:selection_rule_binary_weights}
    \end{align} 
    Where $\texttt{Flip}$ represents the \textit{bitwise not operator} (for example $[1,1,0] \to [0,0,1]$). This forms a necessary condition for the selection rule to output $f(y) = 1$. 
    
    \textbf{Log-Odds Weights:} Given the sufficient and necessary conditions to produce $\expeC[f^*(y)]$ outlined in Eq. \eqref{eq:compliment_prob_ineq} and Eq. \eqref{eq:selection_rule_binary_weights}. We can readjust Eq. \eqref{eq:compliment_prob_ineq} such that,
    \begin{align}
        \prod_{i \in A}\frac{ p_i}{(1-p_i)} \geq  \prod_{j \in B}  \frac{ p_j}{(1-p_j)}
    \end{align}

    Taking the logarithm, we obtain,
    \begin{align}
        \sum_{i \in A} \log \frac{p_i}{(1-p_i)} \geq  \sum_{j \in B}  \log \frac{ p_j}{(1-p_j)}
    \end{align}
    The optimal weights therefore are equivalent to,
    \begin{align}
        \beta_i = \log \frac{p_i}{(1-p_i)} \label{eq:opt_log_odds_weights}
    \end{align}
    Likewise for $j$ as well, mutatis mutandis. To summarize, the sufficient and necessary conditions from Eq. \eqref{eq:compliment_prob_ineq} and Eq. \eqref{eq:selection_rule_binary_weights} enforce a single unique solution for the optimality condition of $\expeC[f(y)]$ expressed in \eqref{eq:opt_log_odds_weights}.
    
    
    
    
    
    
    
    
\end{proof}

\subsection{Relation Between the Optimal Binary Classification and Weighted Majority Voting}\label{sec:disc}

The setting of \citeA{nitzan:1982_binary_voting} considers first, arbitrary neutral decision rules, whereas we restrict our analysis to a specific class of majority voting systems operating over sub-committees. Consequently, the binary voting framework of \citeA{nitzan:1982_binary_voting}   encompasses a strictly larger set of decision rules, which allows the approach to achieve superior performance only in the binary classification setting.

For example, consider a simple setting in which there are only two experts with competencies $\{p_1>p_2>0.5\}$. According to \citeA{nitzan:1982_binary_voting}, the optimal neutral decision rule is given by $f(x_1,x_2)= \operatorname{sign}(w_1 x_1 + w_2 x_2)$, where $w_i=\log(p_i/(1-p_i))>0$ and $x_i\in\{1,-1\}$ is the expert $i$'s decision. 
The success probability of this decision rule will be,

\begin{align*}
    P\left(f(x_1,x_2)=1\right)&= P\left(\operatorname{sign}(w_1 x_1 + w_2 x_2) = 1\right) = P\left(x_1=1, x_2=1\right) + P\left(x_1=1, x_2=-1\right)\\
    &=p_1p_2+p_1(1-p_2) = p_1.
\end{align*}

This performance can be achieved using the WMV scheme by considering  $\theta_1=1$ and $\theta_2=0$ and quota $Q = 0.5$  in \eqref{eq:weighted_combo_maj_vote}. In this case, \eqref{eq:weighted_combo_maj_vote} will be
\begin{align*}
    \pMaj(\setExpert, \theta) = p_1p_2 + p_1(1-p_2) = p_1.
\end{align*}

Now, suppose $\{p_1>0.5>p_2\}$ such that $w_1<|w_2|$. For example,  $p_1=0.6$ and $p_2=0.1$. In this case, $f(x_1,x_2)$ will always flip the decision of expert 2 and ignore expert 1 which leads to the success probability of $1-p_2=0.9$. But this performance cannot be achieved by the WMV. Because, if the optimal committee is only expert 1, then the success probability is $0.6$. If the optimal committee contains both of them, then the success is $0.6 \cdot 0.1+0.6 \cdot 0.9+0.4\cdot0.1=0.64$.

\subsection{Non-Contradiction of Expert Inclusion}


\begin{definition} \label{def:inclusion_sig}
    \textbf{Inclusion Signal:} The inclusion signal of the advantage function $\advFunc(\setExpert_1, \setExpert_2)$ is defined as,
    \begin{align*}
        \mathbbm{1}(\advFunc(\setExpert_1, \setExpert_2) > 0).
    \end{align*}
\end{definition}

Given $\setExpert_1$ and $\setExpert_2$ are disjoint subsets of $\setExpert$, $\mathbbm{1}(\advFunc(\setExpert_1, \setExpert_2) > 0)$ serves an indicator representing whether the advantage function is positive. This indicates if it is advantageous to retain the expert set $\setExpert_2$ in the original $\setExpert = \setExpert_1 \cup \setExpert_2$ or truncate it to $\setExpert_1$. 

\begin{definition} \label{def:conservative_exclusion}
    \textbf{Conservative Advantage Function:} Given two mutually exclusive subsets $\setExpert_1 \subseteq \setExpert$ and  $\setExpert_2 \subseteq \setExpert$, where $\setExpert_1 \cap \setExpert_1 = \emptyset$, for optimistic and pessimistic estimates expert competencies of $p_i$, denoted as $\bar{p}_i$ and $\underline{p}_i$ respectively, the conservative advantage function is defined as
    \begin{align*}
        \advFunc(\underline{\setExpert}_{1}, \overline{\setExpert}_{2}),
    \end{align*}
    such that,
    \begin{align*}
        \overline{\setExpert} = \{\bar{p}_1,...,\bar{p}_N\}, \qquad \underline{\setExpert} = \{\underline{p}_1,...,\underline{p}_N\}.
    \end{align*}
\end{definition}

That is, $\overline{\setExpert}$ represents the egalitarian voting committee substituting the optimistic estimates of the expert competencies $\bar{p}_i$, and $\underline{\setExpert}$ represents the  egalitarian voting committee substituting the pessimistic estimates of expert competencies, $\underline{p}_i$.


\begin{remark}
    \textbf{Guarantee on Consistent Exclusion:} Given a bisection $\biseC$, that divides two an ordering over a set of experts, $\setExpert$, into two ordered sets, $\setExpert_1$ and $\setExpert_2$, where $p_i > p_j$ for all $i \in \setExpert_1$ and $j \in \setExpert_2$, it follows that,
    \begin{align*}
        \advFunc(\underline{\setExpert}_1, \overline{\setExpert}_2) \leq 0 \implies \advFunc(\setExpert_1, \setExpert_2) \leq 0.
    \end{align*}
\end{remark}

\cm{We begin by asserting that any committee with dominant competencies for all experts (i.e. $N$ experts, all experts have higher competencies in one committee than another) will result in $\pMaj(\setExpert) \leq \pMaj(\bar{\setExpert})$, and it follows, $\pMaj(\setExpert_1, \setExpert_2) \leq \pMaj(\setExpert_1, \bar{\setExpert}_2)$. When we are provided a condition $\advFunc(\setExpert_1, \overline{\setExpert}_2) \leq 0$ this implies $\pMaj(\setExpert_1, \bar{\setExpert}_2) \leq \pMaj(\setExpert_1)$, by the definition from Eq. \eqref{eq:succ_add_func_def}. Therefore, $\pMaj(\setExpert_1, \setExpert_2) \leq \pMaj(\setExpert_1, \bar{\setExpert}_2) \leq \pMaj(\setExpert_1) \implies \pMaj(\setExpert_1, \setExpert_2) \leq \pMaj(\setExpert_1) \implies \advFunc(\setExpert_1, \setExpert_2) \leq 0$. We can conclude that, $\advFunc(\setExpert_1, \bar{\setExpert_2}) \leq 0 \implies \advFunc(\setExpert_1, \setExpert_2) \leq 0$.} Notice that during breakage $\breaK$, the bisection property $\biseC$ naturally divides $\setExpert$ into two sets at the barrier between $i$ and $j$ (see Diagram \ref{fig:breakage}), and we can be free to select $\underline{\setExpert}_1$ from within $\setExpert_1$ without contradicting the bisection property (i.e. $p_i > p_j$ for all $i \in \setExpert_1$ and $j \in \setExpert_2$). Therefore, it holds that,
\begin{align*}
    \advFunc(\underline{\setExpert}_1, \overline{\setExpert}_2) \leq 0 \implies \advFunc(\setExpert_1, \setExpert_2) \leq 0.
\end{align*}
In other words, if a conservative advantage function is applied, when this conservative advantage function, $\advFunc(\underline{\setExpert}_1, \overline{\setExpert}_2)$ signals to exclude $\setExpert_2$ from $\setExpert_1 \cup \setExpert_2$, then with high probability $1-\delta$, this exclusion of $\setExpert_2$ from the optimal committee is valid. 
\begin{condition} \label{cond:consistent_inclusion}
    \textbf{Consistent Inclusion:} Provided an egalitarian voting committee with experts $\setExpert$, and their respective competencies $p_1 \dots p_N$, where two mutually exclusive subsets $\setExpert_1, \setExpert_2$ are formed from the breakage event, $\breaK$, such that $p_i > p_j, \,\, \forall i \in \setExpert_1, \forall j \in \setExpert_2$ (see Diagram \ref{fig:breakage}), and sufficient samples are drawn, such that \textup{$2 \ucB < \minGapMaj$}, it follows that the inclusion signal (see Def. \ref{def:inclusion_sig}) of the conservative advantage function, \cm{$\advFunc(\underline{\setExpert}_1, \overline{\setExpert}_2)$, must be consistent with the inclusion signal computed under the true competencies,  $\advFunc(\setExpert_1, \setExpert_2)$, i.e.,}
    
    \begin{align*}
        \advFunc(\underline{\setExpert}_1, \overline{\setExpert}_2) > 0 \iff \advFunc(\setExpert_1, \setExpert_2) > 0.
    \end{align*}

    
\end{condition}

Note that the expert elimination condition checks the condition $\advFunc(\underline{\setExpert}_1, \overline{\setExpert}_2) \leq 0$ for any candidate set, where $p_i > p_j, \,\, \forall i \in \underline{\setExpert}_1, \forall j \in \bar{\setExpert}_2$ with high probability, $1-\delta$. It is therefore not possible to falsely eliminate $\setExpert_2$ from $\setExpert$ with high confidence, as we are stipulating the lowest estimates of competence for $\setExpert_1$ (group with higher competence) and highest estimates of competence for $\setExpert_2$ (group with lower competence) - if no advantage exists here, then we can confidently eliminate $\setExpert_2$ from $\setExpert$. In the reverse situation without additional constraints, the inclusion signal under the conservative estimate of $\advFunc(\underline{\setExpert}_1, \overline{\setExpert}_2)$ may be inconsistent with the inclusion signal under true competencies $\advFunc(\setExpert_1, \setExpert_2)$ (i.e. the signs of the two signals are different). Therefore, we next define $\minGapMaj$ as a property of the expert configuration such that consistency is achieved, per Condition \ref{cond:consistent_inclusion}.

\begin{definition} \label{def:minGapMaj}
    \textbf{\textup{$\minGapMaj$-Consistent Gap:}} We denote a consistency gap, $\minGapMaj \in [0, 1]$, as the maximum value such that Condition \ref{cond:consistent_inclusion} holds. 
\end{definition}

For any configuration of experts, there must exist a $\minGapMaj$ such that Condition \ref{cond:consistent_inclusion} holds, as there are no constraints on how small $\minGapMaj$ can be. When we draw enough samples such that $2 \ucB < \minGapMaj$, then we will obtain a consistent elimination function, when our confidence of expert competencies is less than the gap needed for always computing inclusion-consistent advantage functions with high confidence. Given the properties of the problem, i.e. expert competency arrangements $\{ p_i \}$, an exact computation can be performed to determine $\minGapMaj$, and the value of $\minGapMaj$ is independent of the number of the number of experts $N$ or number of samples $T$. \cm{Let $\{\biseC\}$ denote the set of bisections of $\setExpert$ (i.e. division of the set of experts into two sets post breakage, where $p_i > p_j$ for all $i \in \setExpert_1$ and $j \in \setExpert_2$). The range of admissible $\minGapMaj$ values can be expressed as,}
\begin{align}
    \mathcal{G}(\setExpert) := \left\{ \ucB \in \mathbb{R}^+ |\operatorname{Sign} \left( \advFunc(\underline{\setExpert}_1, \bar{\setExpert}_2) \right) = \operatorname{Sign} \left( \advFunc(\setExpert_1, \setExpert_2) \right) \right\}, \quad \forall (\setExpert_1, \setExpert_2) \in \{\biseC\}
\end{align}
We express $\minGapMaj$ as,
\begin{align}
    \minGapMaj = \argmax_{\ucB \in \mathcal{G}(\setExpert)} \, \epsilon.
\end{align}



    \subsection{Alternative Bound for Theorem \ref{thm:regret_theta_star_mip}} \label{sec:alternative_mip_bound}

    We provide an alternative upper bound for Theorem \ref{thm:regret_theta_star_mip} which upper bounds the regret of $\theta$-WMV algorithm. As preliminaries, we introduce the definition for $\mathcal{S}(\theta)$ and Lemma \ref{lemma:app_0} which stipulate that any two sets in $\mathcal{S}(\theta)$ cannot be disjoint.
    
    \begin{definition}
    For a given set of weights, we denote by $\mathcal{S}(\theta)$ the collection $\{S_1,...,S_k\}$ of all configurations of experts such that $\sum_{k\in S_i}\theta_k>Q$ for all $i$ and $S_i\not\subseteq S_j$ for all $i\neq j$.
    \end{definition}

    \begin{lemma}\label{lemma:app_0}
    Any two sets in $\mathcal{S}(\theta)$ have non-empty intersection. 
    \end{lemma}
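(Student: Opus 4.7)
The plan is a direct proof by contradiction that leverages Lemma \ref{lem:equality_Q}. Suppose, toward contradiction, that there exist two disjoint configurations $S_i, S_j \in \mathcal{S}(\theta)$, i.e., $S_i \cap S_j = \emptyset$. By the definition of $\mathcal{S}(\theta)$, each configuration satisfies the strict quota condition individually: $\sum_{k \in S_i} \theta_k > Q$ and $\sum_{k \in S_j} \theta_k > Q$. Disjointness permits additive decomposition of the weights over their union, so
\[
\sum_{k \in S_i \cup S_j} \theta_k \;=\; \sum_{k \in S_i} \theta_k + \sum_{k \in S_j} \theta_k \;>\; 2Q.
\]

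Next, I would invoke Lemma \ref{lem:equality_Q}, which allows us to assume without loss of generality that the optimal weights satisfy $\|\theta\|_1 = 2Q$. Since $\theta_k \geq 0$ for every expert (the MIP enforces $\boldsymbol{\theta} \in \mathbb{R}^N_+$) and $S_i \cup S_j \subseteq \setExpert$, we have the obvious upper bound
\[
\sum_{k \in S_i \cup S_j} \theta_k \;\leq\; \sum_{k \in \setExpert} \theta_k \;=\; 2Q,
\]
which contradicts the strict inequality above. Hence no two elements of $\mathcal{S}(\theta)$ can be disjoint, establishing the claim.

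I do not expect any serious obstacle: the only subtlety is justifying the normalization $\|\theta\|_1 = 2Q$, which is exactly the content of Lemma \ref{lem:equality_Q}, and the non-negativity of the weights, which is part of the MIP formulation in Section \ref{sec:weighted_maj_voting}. The argument is a one-line pigeonhole once these ingredients are in place, and the minimality condition $S_i \not\subseteq S_j$ from the definition of $\mathcal{S}(\theta)$ is not even needed for this particular claim (it only rules out redundant containments).
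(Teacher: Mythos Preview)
Your argument is correct and essentially identical to the paper's proof: assume two disjoint sets in $\mathcal{S}(\theta)$, add their weights to exceed $2Q$, and contradict the constraint $\|\theta\|_1 \leq 2Q$. The only minor difference is that you invoke Lemma~\ref{lem:equality_Q} to get equality, whereas the paper simply uses the weaker inequality $\|\theta\|_1 \leq 2Q$ from \eqref{eq:wmp} directly---which already suffices for the contradiction.
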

    \begin{proof}
        Otherwise, we have at least $i$ and $j$ such that $S_i\cap S_j=\emptyset$. Then, $\sum_{k\in S_i\cup S_j}\theta_k>2Q$ while due to constraint $\|\theta\|\leq 2Q$. This is a contradiction. 
    \end{proof}
    
    \begin{theorem} \label{thm:regret_theta_star_mip_v2}
        \textbf{Regret Bound for $\theta$-WMV:} An algorithm based on $\theta$-MIP weight assignment, such as in Algorithm \ref{alg:theta_wmv}, achieves a regret bound of $R_T \in \bigO\left(2^{N}\sqrt{NT\log\big(NT\big)}\right)$. 
    \end{theorem}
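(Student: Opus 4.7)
The plan is to replicate the UCB optimism argument used for Theorem~\ref{thm:regret_theta_star_mip}, but replace the delicate Bonferroni-cancellation argument underpinning Assumption~\ref{ass:bounding_condition} with a brute-force telescoping bound that absorbs at most a $2^{N}$ factor in exchange for dropping the assumption entirely. The overall regret decomposition is the standard ``optimism gap times number of rounds'' template; the only nontrivial work is in bounding the gap.

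First I would establish that $\pMaj(\setExpert,\theta;p)$ is coordinate-wise monotone non-decreasing in each $p_k$. Expanding $\partial \pMaj/\partial p_k$ and pairing each configuration $S\not\ni k$ with its augmentation $S\cup\{k\}$, the constraint $\theta_k\ge 0$ forces that if $\sum_{j\in S}\theta_j > Q$ then also $\sum_{j\in S\cup\{k\}}\theta_j > Q$, so the paired contributions collapse to a sum of non-negative terms $(\prod_{i\in S}p_i)(\prod_{j\in S^c\setminus\{k\}}(1-p_j))\,[\mathbb{I}(S\cup\{k\}\text{ wins})-\mathbb{I}(S\text{ wins})]$. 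On the good event $|\hat p_i^{\,t} - p_i| \le \ucB_i^{\,t}$ for all $i,t$---which has probability at least $1-\delta$ by a standard sub-Gaussian union bound with $\ucB_i^{\,t}=\Theta(\sqrt{\sigma^2\log(NT/\delta)/t})$---the optimistic envelope $\bar p_i:=\hat p_i^{\,t}+\ucB_i^{\,t}$ satisfies $\bar p_i\ge p_i$, so monotonicity plus the MIP optimality of $\bar\theta^*$ produces the optimism chain
\begin{align*}
\pMaj(\setExpert,\bar\theta^*;\bar p)\;\ge\;\pMaj(\setExpert,\theta^*;\bar p)\;\ge\;\pMaj(\setExpert,\theta^*;p),
\end{align*}
and consequently the per-round regret satisfies $r_t\le \pMaj(\setExpert,\bar\theta^*;\bar p)-\pMaj(\setExpert,\bar\theta^*;p)$.

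Next I would bound this gap uniformly in $\theta$. Only configurations $S\in\mathcal{S}(\theta)$ with $\indicMip_S=1$ contribute to $\pMaj(\setExpert,\theta;\cdot)$, and $|\mathcal{S}(\theta)|\le 2^{N}$ trivially (the tighter bound from Lemma~\ref{lemma:app_0} would only sharpen constants). For each such $S$, writing the product difference as a telescoping sum across the $N$ factors---each of which lies in $[0,1]$---yields
\begin{align*}
\Bigl|\prod_{i\in S}\bar p_i\prod_{j\in S^c}(1-\bar p_j)\;-\;\prod_{i\in S}p_i\prod_{j\in S^c}(1-p_j)\Bigr|\;\le\;\sum_{k=1}^{N}\bigl|\bar p_k-p_k\bigr|,
\end{align*}
so $|\pMaj(\setExpert,\theta;\bar p)-\pMaj(\setExpert,\theta;p)|\le 2^{N}\sum_{k=1}^{N}|\bar p_k-p_k|$. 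Applying the scaled Hoeffding bound of Lemma~\ref{lem:scaling_hoeffding_1_over_n} \emph{directly} to the aggregated deviation (rather than summing $N$ per-coordinate confidence widths) and choosing $\delta=1/T$ gives $\sum_{k=1}^{N}|\bar p_k-p_k|\in\mathcal{O}(\sqrt{N\log(NT)/t})$, hence $r_t\in\mathcal{O}(2^{N}\sqrt{N\log(NT)/t})$.

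Finally, summing over $t=1,\ldots,T$ via the integral bound $\sum_{t=1}^{T}t^{-1/2}\le 2\sqrt T$ and absorbing the $\delta T\bar\Delta\in\mathcal{O}(1)$ failure-event contribution from $\delta=1/T$ delivers $R_T\in\mathcal{O}(2^{N}\sqrt{NT\log(NT)})$, matching the stated bound. I expect the main obstacle to be precisely the last concentration step: a naive per-coordinate union bound on the telescoping sum costs an extra $\sqrt{N}$ factor. Preserving the sharp $\sqrt{N}$ scaling requires invoking the joint sub-Gaussian concentration on $\sum_{k}(\bar p_k - p_k)$ and absorbing the signed telescoping coefficients into the norm---leveraging that they are $[-1,1]$-valued and determined by $\theta$ and $S$ rather than by the estimator randomness. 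If this fails, the fallback yields $\mathcal{O}(2^{N}N\sqrt{T\log(NT)})$, which remains $T^{1/2}$ no-regret.
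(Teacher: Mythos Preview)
Your optimism-plus-monotonicity framework is sound: the coordinate-wise monotonicity of $\pMaj(\setExpert,\theta;\cdot)$ follows exactly from your pivotal-voter pairing, and the chain $r_t\le \pMaj(\setExpert,\bar\theta^*;\bar p)-\pMaj(\setExpert,\bar\theta^*;p)$ is valid. The gap is the concentration step. Lemma~\ref{lem:scaling_hoeffding_1_over_n} controls $\bigl|\sum_k(\hat p_k-p_k)\bigr|$ by exploiting sign cancellation; it says nothing about $\sum_k|\bar p_k-p_k|$, and the distinction is fatal here. Writing $\bar p_k=\hat p_k+\ucB_k^{t}$ gives $\sum_k(\bar p_k-p_k)=\sum_k(\hat p_k-p_k)+\sum_k\ucB_k^{t}$, and the second term is a \emph{deterministic} padding of order $N\ucB^{t}=\Theta\!\bigl(N\sqrt{\log(NT)/t}\bigr)$ that no sub-Gaussian argument can shrink. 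Your signed-coefficient salvage does not escape this either: the telescoping coefficients depend on intermediate products involving $\bar p$ and are therefore data-dependent, and even if they were fixed in $[-1,1]$, the $\sum_k\ucB_k^{t}$ contribution survives with the same order. What your argument actually delivers is the acknowledged fallback $\mathcal{O}\!\bigl(2^{N}N\sqrt{T\log(NT)}\bigr)$, a $\sqrt N$ factor off the target.

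The paper recovers that $\sqrt N$ combinatorially, not via concentration. It works with the empirical $\hat p$ (not $\bar p$), bounds $\bigl|\pMaj(\setExpert,\theta;p)-\widehat{\pMaj}^{t}(\setExpert,\theta)\bigr|$ uniformly in $\theta$ by expanding $\pMaj$ via inclusion--exclusion over the \emph{minimal} winning coalitions $\mathcal{S}(\theta)$, and then invokes Lemma~\ref{lemma:app_0} (any two elements of $\mathcal{S}(\theta)$ intersect) together with a Sperner-type antichain count to cap the number of repetitions of each $p_j$ at $\binom{N-1}{\lceil(N-1)/2\rceil}\approx 2^{N}/\sqrt N$. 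This yields a sensitivity bound $2^{N}\sqrt N\,\epsilon$ with $\epsilon$ the per-coordinate Hoeffding radius; a standard case analysis (if $r_t>2\epsilon_0(t)$ then $\widehat{\pMaj}^{t}(\theta^t)<\widehat{\pMaj}^{t}(\theta^*)$, contradicting the MIP choice of $\theta^t$) then plays the role your optimism chain does. To hit the stated bound within your template you would need to replace the crude $2^{N}$ count of winning configurations by this intersecting-family bound on minimal winning coalitions.
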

    

    \begin{proof}
        Let $\hat{p}^t_i$ be the empirical estimate of $p_i$ after collecting $t$ i.i.d. samples, then due to Hoeffding's inequality, we have,
        \begin{align*}
        \mathbb{P}\left( \left|  \hat{p}^t_i - p_i \right| \geq \epsilon \right) \leq 2 \exp\left( -\frac{t \epsilon^2}{2\sigma_i^2} \right)\leq 2 \exp\left( -\frac{t \epsilon^2}{2\sigma^2} \right).
        \end{align*}
        where  $\sigma=\max_i \sigma_i$. 
        Recall that at each round, we collect samples from all experts. Thus, after $t$ rounds, we have collected precisely $t$ samples from each expert. 
        Using Union bound, we have
        \begin{align}\label{eq:p_1}
            \mathbb{P}\Big( \left|  \hat{p}^t_i - p_i \right| \leq \epsilon , \forall i\in\{1,...,N\}\Big) \geq 1 - 2N \exp\left( -\frac{t \epsilon^2}{2\sigma^2} \right).
        \end{align}
        Using the inclusion-exclusion principle, and the definition of $\mathcal{S}(\theta)=\{S_1,...,S_k\}$, we have
        \begin{align*}
            \pMaj(\setExpert, \theta)= \sum_{i_1=1}^k\prod_{j\in S_{i_1}}p_{j} - \sum_{i_1\neq i_2=1}^k\prod_{j\in S_{i_1}\cup S_{i_2}}p_{j}+...
        \end{align*}
        When $|p_i-\hat{p}^t_i|\leq \epsilon$ for all experts, i.e.,   $i\in[N]$, for any given $\theta$, we obtain,
        \begin{align*}
        &\left|\pMaj(\setExpert, \theta) - \widehat{\pMaj}^t(\setExpert, \theta)\right|=\\
        &\left|\Big(\sum_{i_1=1}^k\prod_{j\in S_{i_1}}p_{j} - \sum_{i_1\neq i_2=1}^k\prod_{j\in S_{i_1}\cup S_{i_2}}p_{j}+...\Big)-\Big(\sum_{i_1=1}^k\prod_{j\in S_{i_1}}\hat{p}^t_{j} - \sum_{i_1\neq i_2=1}^k\prod_{j\in S_{i_1}\cup S_{i_2}}\hat{p}^t_{j}+...\Big)\right|\\
        &\leq \Big(\sum_{i=1}^k\sum_{j\in S_i}p_{j}\Big)\epsilon\leq  \binom{N-1}{\lceil \frac{N-1}{2}\rceil} \Big(\sum_{j\in S_1\cup...\cup S_k} p_j\Big)\epsilon,
        \end{align*}

        where $\lceil a \rceil$ denotes the ceiling of real number $a$.  As shown in Remark \ref{rem:app} for a simple scenario, the first inequality is tight. 
        The last inequality is due to Lemma \ref{lemma:app_0} which implies that the maximum size of $\mathcal{S}(\theta)$ is $\binom{N}{\lceil\frac{N+1}{2}\rceil]}$ and consequently, we have at most $\binom{N-1}{\lceil\frac{N-1}{2}\rceil]}$ repetitions of each $p_j$ in the above double summation. 
        Knowing that $\sum_{j\in S_1\cup...\cup S_k} p_j\leq N$, leads to the bound $\binom{N-1}{\lceil\frac{N-1}{2}\rceil}N\epsilon$. 
        Applying the Stirling's approximation, we can achieve the following,
        \begin{align*}
        &\left|\pMaj(\setExpert, \theta) - \widehat{\pMaj}^t(\setExpert, \theta)\right|\leq \frac{2^N}{\sqrt{N}}N\epsilon=2^N\sqrt{N}\epsilon.
        \end{align*}

        
        Applying the concentration result in Eq. \eqref{eq:p_1}, we obtain,
        \begin{align*}
           \mathbb{P}\left(\left|\pMaj(\setExpert, \theta) - \widehat{\pMaj}^t(\setExpert, \theta)\right|\leq 2^N\sqrt{N}\epsilon\right)\geq \mathbb{P}\Big( \left|  \hat{p}^t_i - p_i \right| \leq \epsilon , \forall i\in[N]\Big) \geq 1 - 2N \exp\left( -\frac{t \epsilon^2}{2\sigma^2} \right). 
        \end{align*}
        Now, requiring the probability to be at least $1-\delta$, we obtain,
        \begin{align}\label{eq:con2}
            \mathbb{P}\left(\left|\pMaj(\setExpert, \theta) - \widehat{\pMaj}^t(\setExpert, \theta)\right|\leq \epsilon_0\right)  \geq 1 - \delta. 
        \end{align}
        where,
        \begin{align*}
            \epsilon_0(t):=\sqrt{\frac{2N4^{N}\sigma^2}{t}\log\big(\frac{2N}{\delta}\big)}.
        \end{align*}
        
        Let us consider two possible cases at round $t$; (i) $\pMaj(\setExpert, \theta^*)-\pMaj(\setExpert, \theta^t)> 2\epsilon_0(t)$ and (ii) $\pMaj(\setExpert, \theta^*)-\pMaj(\setExpert, \theta^t)\leq 2\epsilon_0(t)$. 
        Next, 
        we show that case $i$ won't happen with high probability. To this end, using the concentration in \eqref{eq:con2}, we imply that the following events hold with probability at least $1-2\delta$.
        \begin{align*}
            &\left|\pMaj(\setExpert, \theta^*) - \widehat{\pMaj}^t(\setExpert, \theta^*)\right|\leq \epsilon_0(t),\\
            &\left|\pMaj(\setExpert, \theta^t) - \widehat{\pMaj}^t(\setExpert, \theta^t)\right|\leq \epsilon_0(t).
        \end{align*}
        Combining the above inequalities with the assumption of case (i), i.e., $1\geq\pMaj(\setExpert, \theta^*)-\pMaj(\setExpert, \theta^t)> 2\epsilon_0(t)$, implies $\widehat{\pMaj}^t(\setExpert, \theta^t)< \widehat{\pMaj}^t(\setExpert, \theta^*)$. This is a contradiction with $\theta^t\in\arg\max_\theta\widehat{\pMaj}^t(\setExpert, \theta)$.
        Thus, with probability $1-2\delta$, case (ii) occurs. By letting $\delta=1/(2T)$, we could bound the regret as follows,
        \begin{align*}
            R_T&= \sum_{t=1}^T\Big( \pMaj(\setExpert, \theta^*)-\pMaj(\setExpert, \theta^t)\Big)\leq\sum_{t=1}^{t_0} 1+\sum_{t=t_0+1}^T \Big(\pMaj(\setExpert, \theta^*)-\pMaj(\setExpert, \theta^t)\Big)\\
            &\leq t_0+ \frac{1}{T}(T-t_0)+(1-\frac{1}{T})\sum_{t=t_0+1}^T 2\epsilon_0(t) \\
            &\leq t_0+1 + (1-\frac{1}{T})\sqrt{8N4^{N}\sigma^2\log\big(4TN\big)}\int_{t_0}^T\frac{1}{\sqrt{t}}dt\in\bigO\left(2^{N}\sqrt{NT\log\big(NT\big)}\right).
        \end{align*}
        
        where $t_0:=2N4^{N}\sigma^2\log\big(4TN\big)$. Note that during the initial phase, when $t\leq t_0$, we bound the instantaneous regrets by their worst scenario that is 1 as the utility function is a probability.
    \end{proof}

    \begin{remark}\label{rem:app}
    Let consider $N=3$ experts with competencies $p_1,p_2,p_3$ and weights $\theta=(1,1,1)$. In this case, $\mathcal{S}(\theta)=\{\{1,2\}, \{1,3\}, \{2,3\}\}$. Moreover, let $\hat{p}_i=p_i+\epsilon$ for all $i$, where $\epsilon\leq\min_i p_i$ then 
    \begin{align*}
        &|\pMaj(\setExpert, \theta)-\widehat{\pMaj}(\setExpert, \theta)|=|p_1p_2-\hat{p}_1\hat{p}_2+p_1p_3-\hat{p}_1\hat{p}_3+p_2p_3-\hat{p}_2\hat{p}_3-2(p_1p_2p_3-\hat{p}_1\hat{p}_2\hat{p}_3)|\\
        &=|(p_1+p_2)\epsilon+(p_1+p_3)\epsilon+(p_2+p_3)\epsilon+3\epsilon^2-2(p_1p_2+p_1p_3+p_2p_3)\epsilon-2(p_1+p_2+p_3)\epsilon^2-2\epsilon^3|\\
        &=|2\big(p_1+p_2+p_3-(p_1p_2+p_1p_3+p_2p_3)\big)\epsilon-\big(2(p_1+p_2+p_3)-3\big)\epsilon^2-2\epsilon^3|.
    \end{align*}
    The introduced bound is $\binom{N-1}{\lceil\frac{N-1}{2}\rceil]} \Big(\sum_{j\in S_1\cup S_2\cup S_3} p_j\Big)\epsilon= 2(p_1+p_2+p_3)\epsilon$. 
    Now, lets compare the exact value and the bound for when $p=(.02,.03,.04)$. In this case, 
    \begin{align*}
        \binom{N-1}{\lceil\frac{N-1}{2}\rceil]} \Big(\sum_{j\in S_1\cup S_2\cup S_3} p_j\Big)\epsilon-|\pMaj(\setExpert, \theta)-\widehat{\pMaj}(\setExpert, \theta)|=0.18\epsilon-|0.1748\epsilon+2.82\epsilon^2-2\epsilon^3|
    \end{align*}
    Note that the above expression can be arbitrary small as $p_i$s tends to zero.
    \end{remark}

\iftoggle{nips}{}{
    
    \hlinE
    
        \lar{we can also address the scenario with $2^{N-1}$ scenarios. Where for each scenario,}
    
        \begin{align}
            \bar{p}_s - \underline{p}_s 
            = \prod_{i \in  S} \bar{p}_i  \prod_{j\in S^c}  (1-\bar{p}_j) 
            - \prod_{i \in  S} \underline{p}_i  \prod_{j\in S^c}  (1-\underline{p}_j) 
            \leq 2 N \, \ucB \label{eq:worst_case_pi_bound_product}
        \end{align}
    
        The result of Eq. \eqref{eq:worst_case_pi_bound_product} can be obtained by performing a setting $\bar{p}_i = p_i + \ucB$ and $\underline{p}_i = p_i - \ucB$, and setting any higher order $\ucB$ terms to a multiple of $\ucB$. Therefore the maximum loss would be over $2^{N-1}$ selection events, where due to specification, the absolute worst case outcome becomes,
    
        \begin{align}
            \mathbf{z}^\top \mathbf{\bar{p}} - \mathbf{z}^\top \mathbf{\underline{p}} 
            \leq N 2^N \, \ucB
        \end{align}
    
        \lar{Although here we consider a highly conservative upper-bound, and in most scenarios, when expert competencies are evenly spread out, i.e. $\minPEps$ from Ass. \ref{ass:eps-increm} is sufficiently wide, or when $N$ is large, then Ass. \ref{ass:bounding_condition} will hold.}   
    }
    
\iftoggle{nips}{}{
    \ja{Next lemmas are useful to establish a tighter bound, i.e., $\mathcal{O}\left(e^{c(\minPEps)N}N\log\big(NT\big)\right)$ which for now, we should not include in this version.  }
    
    \begin{lemma}\label{lemma:a_0}
        Under Assumption \ref{ass:eps-increm}, function $\pMaj(\setExpert, \theta)$ is piecewise continuous with jumps (magnitude of discontinuity) bounded by $\exp\big(-N\cdot c(\minPEps)\big)$, where $N$ is number of experts and $c(\minPEps)\geq0$ depending on $\minPEps$.
    \end{lemma}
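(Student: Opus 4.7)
The plan is to first unpack the piecewise structure of $\pMaj(\setExpert, \theta)$ and then bound each possible jump using the $\epsilon_p$-separation assumption.

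First, I would observe from \eqref{eq:weighted_combo_maj_vote} that $\pMaj(\setExpert, \theta)$ is a finite linear combination of indicator functions of open half-spaces, namely $\mathbb{I}(\sum_{j \in S}\theta_j > Q)$ for $S \subseteq \setExpert$, each weighted by the nonnegative coefficient $\alpha_S := \prod_{i \in S} p_i \prod_{j \in S^c}(1-p_j)$. Consequently $\pMaj$ is piecewise constant (in particular piecewise continuous) in $\theta$, and all discontinuities occur on the finite hyperplane arrangement $\bigcup_S \{\theta : \sum_{j \in S}\theta_j = Q\}$. At a boundary point where a subfamily $\mathcal{S}' \subseteq 2^{\setExpert}$ of indicators flips simultaneously, the magnitude of the jump is at most $\sum_{S \in \mathcal{S}'} \alpha_S$, and in particular, bounded above by $\max_S \alpha_S$.

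Next comes the core inequality. I would bound
\[
\max_{S \subseteq \setExpert} \alpha_S \;=\; \max_{S} \prod_{i \in S} p_i \prod_{j \in S^c}(1-p_j) \;\leq\; \prod_{i=1}^N \max(p_i, 1-p_i),
\]
since for every $i$, either factor $p_i$ or $1-p_i$ appears in any given $S$, and each is upper bounded by $\max(p_i, 1-p_i)$. Writing $m_i := \min(p_i, 1-p_i)\in[0,1/2]$ and using $\log(1-x)\leq -x$, this gives $\max_S \alpha_S \leq \exp\bigl(-\sum_i m_i\bigr)$.

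The key technical step, and the main obstacle, is to use Assumption~\ref{ass:eps-increm} to lower-bound $\sum_i m_i$ by something proportional to $N$ (with a constant depending only on $\epsilon_p$). I would sort $p_{(1)} < \dots < p_{(N)}$ with gaps $\geq \epsilon_p$, and let $k$ denote the number of $p_{(i)} \leq 1/2$. The separation yields $p_{(i)} \geq (i-1)\epsilon_p$ for $i \leq k$ and $1 - p_{(i)} \geq (N-i)\epsilon_p$ for $i > k$, so
\[
\sum_{i=1}^N m_i \;\geq\; \epsilon_p \left( \tfrac{k(k-1)}{2} + \tfrac{(N-k)(N-k-1)}{2} \right) \;\geq\; \tfrac{\epsilon_p\, N(N-2)}{4},
\]
after minimizing the convex quadratic in $k$ at $k = N/2$. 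The delicate part is handling the constraint that all $p_i$ lie in $[0,1]$, which forces $(N-1)\epsilon_p \leq 1$ and hence $N$ is itself bounded in terms of $\epsilon_p$; this is what lets us replace the $N^2$ with $N$ up to a factor depending only on $\epsilon_p$. Concretely, for $N \geq 4$ we have $N(N-2)/4 \geq N/2$, so $\sum_i m_i \geq (\epsilon_p/2)\cdot N$, and for $N \leq 3$ the claim holds trivially with $c(\epsilon_p) = 0$.

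Combining these pieces, the jump magnitude is bounded by
\[
\max_S \alpha_S \;\leq\; \exp\!\bigl(-\tfrac{\epsilon_p}{2}\,N\bigr) \;=\; \exp\bigl(-N\cdot c(\epsilon_p)\bigr),
\]
with $c(\epsilon_p) := \epsilon_p/2$ (or $c(\epsilon_p)=0$ in the degenerate small-$N$ regime), completing the proof. I would close by noting that since the jump at any boundary point is a signed sum of at most $\max_S \alpha_S$ contributions with matching signs in the worst case, the same bound controls the magnitude of every discontinuity, as claimed.
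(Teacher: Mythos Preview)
Your argument establishes an \emph{upper} bound on the jumps of $\pMaj(\setExpert,\theta)$, but the lemma is meant (and used) as a \emph{lower} bound on the magnitude of each discontinuity. The phrasing ``bounded by'' in the statement is admittedly ambiguous, but the usage in Lemma~\ref{lemma:a_1} resolves it: from $\theta^*\neq\theta^t$ the paper deduces $\pMaj(\setExpert,\theta^*)-\pMaj(\setExpert,\theta^t)\geq\exp\bigl(-Nc(\minPEps)\bigr)$, which requires every nonzero jump to be at \emph{least} $\exp\bigl(-Nc(\minPEps)\bigr)$. The paper's route is accordingly the opposite of yours: it lower-bounds $\min_{S\subseteq\setExpert}\prod_{i\in S}p_i\prod_{j\in S^c}(1-p_j)$ by ordering the competencies, using $p_{(i)}\gtrsim i\,\minPEps$ below $1/2$ and $1-p_{(j)}\gtrsim(N-j+1)\,\minPEps$ above, arriving at a factorial-times-$\minPEps^N$ expression, and then applying Stirling to extract the form $\exp(-Nc(\minPEps))$.

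Independently of the direction issue, your reduction ``the jump is at most $\sum_{S\in\mathcal{S}'}\alpha_S$, and in particular bounded above by $\max_S\alpha_S$'' is backwards: the sum dominates the maximum, so the second clause does not follow from the first. Even if an upper bound were the goal, you would need to control how many indicators can flip simultaneously (or simply use $\sum_S\alpha_S=1$, which is vacuous), rather than collapse to a single term. Your closing remark about ``a signed sum of at most $\max_S\alpha_S$ contributions'' does not repair this.
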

    \begin{proof}
        According to the MIP formulation,  we have $\pMaj(\setExpert, \theta)=\mathbb{Z}^\top \mathbf{P}$, where $\mathbf{P}=[\prod_{i \in  S} p_i  \prod_{j\in S^c}  (1-p_j): S\subseteq\setExpert]$. Thus, the minimum jump in   $\pMaj(\setExpert, \theta)$ is at least $\min_{S\subseteq\setExpert}\max\{0,\prod_{i \in  S} p_i  \prod_{j\in S^c}  (1-p_j)\}$.
        Without loss of generality, suppose that the competencies of the experts are ordered as $p_1<p_2<...<p_N$. 
        Moreover, let $k$ to be the first expert for which $p_k\geq0.5$, i.e., $p_1<p_2<...<p_{k-1}<0.5\leq p_k<...<p_N$. 
        Hence, under the Assumption \ref{ass:eps-increm} and assuming that $N\minPEps\leq 1$, we can write
        \begin{align*}
            \min_{S\subseteq \setExpert}\max\{0,\prod_{i \in  S} p_i  \prod_{j\in S^c}  (1-p_j)\}\geq \prod_{i=1}^{k-1} p_i  \prod_{j=k}^{N}  (1-p_j)\geq (k-1)!(N-k+1)!\minPEps^N
        \end{align*}
        Note that Binomial coefficients $\binom{N}{k-1}$ is maximized at $k-1=[N/2]$, which means that  $(k-1)!(N-k+1)!\geq([N/2]!)^2$. Substituting this result in the above inequality yields
        \begin{align*}
            &\min_{S\subseteq \setExpert}\max\{0,\prod_{i \in  S} p_i  \prod_{j\in S^c}  (1-p_j)\} \geq ([N/2]!)^2\minPEps^N\\
            &\qquad\geq \Big(\big(\frac{[N/2]}{e}\big)^{[N/2]}\Big)^2\minPEps^N\approx \Big(\frac{N}{2e}\minPEps\Big)^N =\exp\left(-N\log\left(\frac{2e}{N\minPEps}\right)\right).
        \end{align*}
        Note that $N\minPEps\leq 1$ and $\log(x)\leq x-1$ for $x\geq1$. This implies 
        \begin{align*}
            &\min_{S\subseteq \setExpert}\max\{0,\prod_{i \in  S} p_i  \prod_{j\in S^c}  (1-p_j)\} \geq \exp\left(-N\left(\frac{2e}{N\minPEps}-1\right)\right)\geq \exp(-Nc(\minPEps)).
        \end{align*}
        The lats inequality is by assuming that $1/s\leq N\minPEps\leq 1$ for some constant $s$. In this case, we have $c(\minPEps)=2se -1$. 
    \end{proof}

    \begin{lemma}\label{lemma:a_1}
        If $|\pMaj(\setExpert, \theta)-\widehat{\pMaj}^t(\setExpert, \theta)|<0.5\exp\big(-Nc(\minPEps)\big)$, for all $\theta$, then $\theta^*:=\arg\max_\theta \pMaj(\setExpert, \theta)=\arg\max_\theta \widehat{\pMaj}^t(\setExpert, \theta):=\theta^t$.
    \end{lemma}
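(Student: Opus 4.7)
The plan is to leverage the minimum-jump bound from Lemma~\ref{lemma:a_0} together with a standard argmax-stability argument. The key observation is that $\pMaj(\setExpert,\theta)$ takes only finitely many values (Prop.~\ref{prop:discrete_img}), and Lemma~\ref{lemma:a_0} lower-bounds the minimum positive separation between distinct attained values by $\exp(-Nc(\minPEps))$. Since the empirical estimator $\widehat{\pMaj}^t$ is assumed to approximate $\pMaj$ uniformly with error strictly less than half this separation, the relative ordering of the true optimum against every strictly smaller value must survive the perturbation.

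Concretely, I would argue by contradiction. Let $\theta^*$ be a true maximizer and $\theta^t$ an empirical maximizer, and suppose $\pMaj(\setExpert,\theta^t) < \pMaj(\setExpert,\theta^*)$. Lemma~\ref{lemma:a_0} then implies
\[
\pMaj(\setExpert,\theta^*) - \pMaj(\setExpert,\theta^t) \ \geq\ \exp\bigl(-Nc(\minPEps)\bigr).
\]
On the other hand, the telescoping decomposition
\[
\pMaj(\theta^*) - \pMaj(\theta^t) = \bigl[\pMaj(\theta^*)-\widehat{\pMaj}^t(\theta^*)\bigr] + \bigl[\widehat{\pMaj}^t(\theta^*) - \widehat{\pMaj}^t(\theta^t)\bigr] + \bigl[\widehat{\pMaj}^t(\theta^t)-\pMaj(\theta^t)\bigr]
\]
bounds the left-hand side strictly above by $\exp(-Nc(\minPEps))$: the middle bracket is non-positive because $\theta^t$ maximizes $\widehat{\pMaj}^t$, and each outer bracket is strictly less than $0.5\exp(-Nc(\minPEps))$ by hypothesis. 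This is a direct contradiction, so $\pMaj(\setExpert,\theta^t)=\pMaj(\setExpert,\theta^*)$, i.e., $\theta^t$ is also a true maximizer.

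The main obstacle is justifying that \emph{distinct} attained values of $\pMaj$ actually differ by at least $\exp(-Nc(\minPEps))$. Lemma~\ref{lemma:a_0} directly bounds the smallest nonzero entry of the probability vector $\mathbf{P}=[\prod_{i\in S}p_i\prod_{j\in S^c}(1-p_j)]_S$, which controls the change in $\pMaj$ under a single-bit flip of the feasibility indicator $\mathbf{z}$. If two distinct attained values correspond to $\mathbf{z}$-configurations differing in multiple bits, one must rule out cancellations among $\mathbf{P}$-entries that could shrink the gap below $\exp(-Nc(\minPEps))$. A clean way to do this is to observe that the maximizing configuration $\mathbf{z}(\theta^*)$ collects exactly those coordinates whose inclusion is feasible under the quota constraint, so any suboptimal $\mathbf{z}$ differs from it by omitting at least one strictly positive $\mathbf{P}$-entry; the gap is thus a sum of nonnegative $\mathbf{P}$-entries, at least one of which exceeds $\exp(-Nc(\minPEps))$. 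Once this refinement of Lemma~\ref{lemma:a_0} is in place, the rest is the three-line triangle-inequality argument above. Finally, the identity $\theta^*=\theta^t$ must be read up to the value $\pMaj$ assigns (by Prop.~\ref{prop:discrete_img}, the argmax is generically non-singleton); what the proof actually delivers is $\theta^t\in\arg\max_\theta\pMaj(\setExpert,\theta)$.
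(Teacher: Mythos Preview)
Your approach is essentially the same as the paper's: assume the true and empirical maximizers differ in value, invoke Lemma~\ref{lemma:a_0} to get a gap of at least $\exp(-Nc(\minPEps))$, and then use the uniform $0.5\exp(-Nc(\minPEps))$ approximation at $\theta^*$ and $\theta^t$ to force $\widehat{\pMaj}^t(\setExpert,\theta^*)>\widehat{\pMaj}^t(\setExpert,\theta^t)$, contradicting the definition of $\theta^t$. Your telescoping decomposition is just a repackaging of the same two applications of the hypothesis plus the optimality of $\theta^t$ for $\widehat{\pMaj}^t$.

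Where you go beyond the paper is in flagging the subtlety about whether Lemma~\ref{lemma:a_0} (which literally bounds single jumps, i.e., one-bit flips of $\mathbf{z}$) really yields a lower bound on the gap between \emph{any} two distinct attained values of $\pMaj$. The paper's proof simply asserts this implication without comment. Your proposed patch---that the optimal $\mathbf{z}(\theta^*)$ dominates any suboptimal $\mathbf{z}$ by a sum of nonnegative $\mathbf{P}$-entries---is not quite airtight as stated, since a suboptimal $\mathbf{z}(\theta^t)$ can also contain coordinates absent from $\mathbf{z}(\theta^*)$, so cancellations are not immediately ruled out; a fully rigorous argument would need to use the structure of feasible $\mathbf{z}$-vectors induced by the quota constraint. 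But this is a refinement both you and the paper leave at the same level of rigor. Your final remark that the conclusion should be read as $\theta^t\in\arg\max_\theta\pMaj(\setExpert,\theta)$ rather than literal equality of parameters is correct and more precise than the paper's phrasing.
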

    \begin{proof}
        If $\theta^*\neq \theta^t$, then using the result of Lemma \ref{lemma:a_0}, we should have $\pMaj(\setExpert, \theta^*)- \pMaj(\setExpert, \theta_t)\geq \exp\big(-Nc(\minPEps)\big)$. 
    On the other hand, from the assumption of this Lemma, we have
    \begin{align*}
        & |\pMaj(\setExpert, \theta^*)-\widehat{\pMaj}^t(\setExpert, \theta^*)|<0.5\exp\big(-Nc(\minPEps)\big),\\
        &|\pMaj(\setExpert, \theta^t)-\widehat{\pMaj}^t(\setExpert, \theta^t)|<0.5\exp\big(-Nc(\minPEps)\big).
    \end{align*}
    Therefore, we have
    \begin{align*}
        \widehat{\pMaj}^t(\setExpert, \theta^t)\leq \widehat{\pMaj}^t(\setExpert, \theta^*).  
    \end{align*}
    Recall that $\theta^t\in\arg\max_\theta \widehat{\pMaj}^t(\setExpert, \theta)$. 
    The above inequality contradicts with $\theta^*\neq \theta^t$, as it implies that $\theta^*\in\arg\max_\theta \widehat{\pMaj}^t(\setExpert, \theta)$.
    \end{proof}

}

\subsection{Scaling Variance of $1/n_i$ for Bernoulli Variables}

\begin{lemma} \label{lem:scaling_hoeffding_1_over_n}
    For estimates sums of Bernoulli variables $X_i \in \{0,1\}$, where $p_i$ is the probability of success for variable $X_i$, $\hat{p}_i = \frac{k_i}{n_i}$ is the empirical estimate, and $n_i$ is the empirical count of successes, it holds that,
    
    \[
    \mathbb{P}\left( \left| \sum_{i=1}^N \hat{p}_i - \sum_{i=1}^N p_i \right| \geq \epsilon \right) \leq 2 \exp \left( - \frac{2 \epsilon^2}{\sum_{i=1}^N \frac{1}{n_i}} \right).
    \]
\end{lemma}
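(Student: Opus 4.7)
The plan is to recognize the quantity $\sum_{i=1}^N \hat{p}_i$ as a weighted sum of $M := \sum_{i=1}^N n_i$ independent bounded random variables and then invoke the standard (weighted) Hoeffding inequality. Concretely, I would write
\[
\sum_{i=1}^N \hat{p}_i \;=\; \sum_{i=1}^N \sum_{j=1}^{n_i} Y_{i,j}, \qquad Y_{i,j} := \frac{X_{i,j}}{n_i},
\]
where the $X_{i,j} \sim \text{Bernoulli}(p_i)$ are mutually independent across all pairs $(i,j)$. Each summand $Y_{i,j}$ lies in the interval $[0, 1/n_i]$, so its range is $b_{i,j} - a_{i,j} = 1/n_i$. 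Linearity of expectation gives $\mathbb{E}\bigl[\sum_{i,j} Y_{i,j}\bigr] = \sum_{i=1}^N p_i$, which is the centering we need.

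Next, I would apply Hoeffding's inequality for sums of independent bounded random variables: for independent $Y_k \in [a_k, b_k]$ with $S := \sum_k Y_k$,
\[
\mathbb{P}\bigl(|S - \mathbb{E}[S]| \geq \epsilon\bigr) \leq 2\exp\!\left(-\frac{2\epsilon^2}{\sum_k (b_k-a_k)^2}\right).
\]
Plugging in the ranges $1/n_i$ and collecting the inner sum,
\[
\sum_{i=1}^N \sum_{j=1}^{n_i} \left(\frac{1}{n_i}\right)^{\!2} \;=\; \sum_{i=1}^N \frac{n_i}{n_i^2} \;=\; \sum_{i=1}^N \frac{1}{n_i},
\]
which yields exactly the exponent $-2\epsilon^2 / \sum_i (1/n_i)$ in the claimed bound.

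There is no real obstacle here; the proof is an essentially mechanical application of Hoeffding once the double-indexed representation is set up. The only small subtlety worth spelling out is that independence holds jointly over all $M$ variables $\{X_{i,j}\}$ (not merely within each $i$), which is exactly the hypothesis stated in the main text just before this lemma; this is what permits the single application of Hoeffding across the pooled collection rather than a looser union bound across the $N$ groups. With that noted, the conclusion follows in one line.
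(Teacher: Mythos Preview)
Your proposal is correct and follows essentially the same approach as the paper: write $\sum_i \hat{p}_i$ as a double sum of the scaled variables $X_{i,j}/n_i \in [0,1/n_i]$, compute $\sum_{i,j}(1/n_i)^2 = \sum_i 1/n_i$, and apply Hoeffding's inequality for sums of independent bounded variables. Your explicit remark about joint independence across all $(i,j)$ is a nice addition, but otherwise the arguments are identical.
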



\begin{proof}

\textbf{Scaling $1/n_i$:} Since each \(X_{i,j} \in [0,1]\), the range of each scaled variable \(\frac{X_{i,j}}{n_i}\) is contained in \( \{0, 1/n_i \}\). Each original Bernoulli random variable \(X_{i,j}\) takes values in \(\{0,1\}\), thus,
    \[
    X_{i,j} \in [0,1].
    \]
    When forming the empirical mean,
    \[
    \hat{p}_i = \frac{1}{n_i} \sum_{j=1}^{n_i} X_{i,j},
    \]
    each summand inside the sum is scaled by \(1/n_i\). Therefore,
    \[
    \frac{X_{i,j}}{n_i} \in \left\{0, \frac{1}{n_i}\right\} \subseteq [0, 1/n_i].
    \]
    This implies the range of each scaled variable is reduced from length \(1\) to length \(1/n_i\). This scaling is allows us to applying Hoeffding's inequality, as the concentration bound depends on the sum of the squared lengths of these ranges,
    
    \[
    \sum_{i=1}^N \sum_{j=1}^{n_i} \left(\frac{1}{n_i} - 0\right)^2 = \sum_{i=1}^N n_i \cdot \frac{1}{n_i^2} = \sum_{i=1}^N \frac{1}{n_i}.
    \]
    Hence, the variance proxy in the concentration bound is governed by \(\sum_i \frac{1}{n_i}\).
    
    Applying Hoeffding's inequality for sums of independent bounded variables, for any \(\epsilon > 0\):
    \[
    \mathbb{P}\left( \left| S_\Sigma - \mu \right| \geq \epsilon \right) 
    \leq 2 \exp \left( - \frac{2 \epsilon^2}{\sum_{i=1}^N n_i \left( \frac{1}{n_i} - 0 \right)^2} \right).
    \]
    
    Simplifying the denominator,
    \[
    \sum_{i=1}^N n_i \left( \frac{1}{n_i} \right)^2 = \sum_{i=1}^N \frac{1}{n_i}.
    \]
    
    Hence,
    \[
    \mathbb{P}\left( \left| \sum_{i=1}^N \hat{p}_i - \sum_{i=1}^N p_i \right| \geq \epsilon \right) \leq 2 \exp \left( - \frac{2 \epsilon^2}{\sum_{i=1}^N \frac{1}{n_i}} \right).
    \]

\end{proof}

\clearpage

\section{Algorithms}

\subsection{Greedy Algorithm for Constructing Optimal Egalitarian Committee} 

\begin{algorithm} 
\small
\caption{Greedy Algorithm - Construct Optimal Committee with Accuracy $\pMaj$}
\label{alg:greedy_oec}
\begin{algorithmic}[1]
\Require Set of experts $\setExpert$ and $\setExpertCand = \emptyset$
\Ensure Optimal $\pMaj$ and $\setOec$
\State Sort all experts by their accuracy measure $p_i$, resulting in $\tT{sorted}(\{p_i\})$.
\State $\setExpertCand \gets e_1$.
\For{$j = 2$ to $N$}
    \If{ $\advFunc(\setExpertCand, \setExpert') \geq 0, \quad \forall \setExpert' \in \tT{TopK}(\setExpertCand^{\leq j})$} \Comment{Check the advantage function.}
        \State $\setExpertCand \gets \setExpert \cup \setExpert'$. 
        \State $\pMaj \gets \pMaj(\setExpertCand)$
    \EndIf
\EndFor
\State $\setOec \gets \setExpertCand$
\State \Return $\pMaj, \setOec$
\end{algorithmic}
\end{algorithm}


\subsection{Zooming Algorithm}

\begin{algorithm}[H]
\small
\caption{Zooming Algorithm for Lipschitz Bandits}
\label{alg:zooming}
\begin{algorithmic}[1]
\State \textbf{Input:} Metric space \( (X, d) \), Lipschitz constant \( L > 0 \), time horizon \( T \)
\State \textbf{Initialization:} Set active arms \( \text{Active} = \emptyset \)
\For{\( t = 1, 2, \dots, T \)}
    \State \textbf{Update Confidence Intervals:}
    \For{\( x \in \text{Active} \)}
        \State Compute empirical mean reward:
        \[
        \hat{\mu}_t(x) = \frac{1}{n_t(x)} \sum_{s=1}^{n_t(x)} r_s(x)
        \]
        \State Compute confidence radius:
        \[
        r_t(x) = \sqrt{\frac{2 \log T}{n_t(x)}}
        \]
    \EndFor
    \State \textbf{Activate New Arms:}
    \For{\( x \in X \setminus \text{Active} \)}
        \If{\( \min_{y \in \text{Active}} d(x, y) > r_t(y) \)}
            \State Add \( x \) to \( \text{Active} \)
        \EndIf
    \EndFor
    \State \textbf{Select Arm to Play:}
    \[
    x_t = \arg\max_{x \in \text{Active}} \left( \hat{\mu}_t(x) + r_t(x) \right)
    \]
    \State \textbf{Play Arm \( x_t \):}
    \State Observe reward \( r_t \) and update \( n_t(x_t) \)
\EndFor
\end{algorithmic}
\end{algorithm}

The zooming algorithm from \citeA{kleinberg:2008_zooming} adaptively explores and exploits strategies in a metric space by maintaining confidence intervals around the estimated rewards of active arms, serving as an appropriate benchmark bandit algorithm for when the reward function lacks convexity. It dynamically activates new arms that are sufficiently far from existing ones (based on their confidence radii) and prioritizes arms with high upper confidence bounds. In non-convex spaces, the algorithm leverages the Lipschitz condition to generalize observations across nearby arms, ensuring efficient exploration while adapting to the structure of the problem instance. 


\clearpage

\section{Experiment Details} \label{sec:experimental_details_app}

\subsection{LLM Offline Performance Benchmarking (i.e. Expert Competencies)} \label{sec:llm_competency_app}

\begin{figure}[h]
    \centering
    \includegraphics[width=0.8\linewidth]{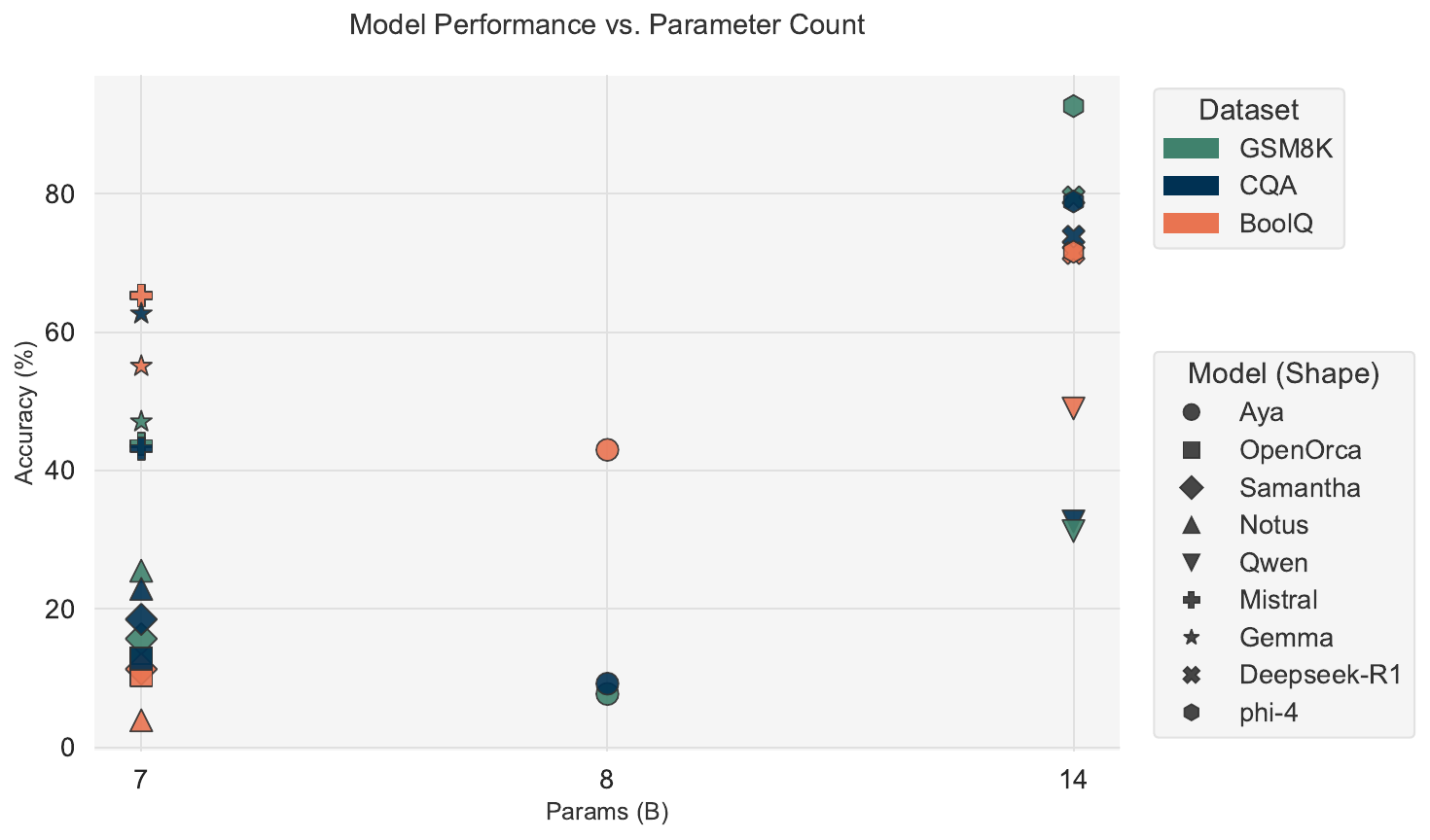}
    \caption{Visualization of the performance comparison of language models across three benchmarks (GSM8K, CQA, BoolQ) as a function of parameter count (7B, 8B, 14B).}
    \label{fig:enter-label}
\end{figure}


\begin{table}[h!]
    \centering
    \small
    \begin{tabular}{lrrrrr}
    \toprule
    Model & GSM8K Acc. (\%) & CQA Acc. (\%) & BoolQ Acc. (\%) & Params. (B) \\
    \midrule
    Aya & 7.71 & 9.20 & 43.00 & 8 \\
    OpenOrca & 12.90 & 12.80 & 10.40 & 7 \\
    Samantha & 15.69 & 18.50 & 11.30 & 7 \\
    Notus & 25.53 & 22.90 & 3.91 & 7 \\
    Qwen & 31.25 & 32.66 & 48.98 & 14 \\
    Mistral & 43.88 & 43.20 & 65.31 & 7 \\
    Gemma & 47.07 & 62.68 & 55.10 & 7 \\
    Deepseek-R1 & 79.52 & 73.83 & 71.43 & 14 \\
    phi-4 & 92.69 & 78.90 & 71.63 & 14 \\
    \bottomrule
    \end{tabular}
    \vspace{0.8em}
    \caption{\textbf{Model performance ranking.} Models listed by their accuracy across various domains in the offline setting, alongside their respective parameter count (billions).}
    \label{tab:model_ranking}
\end{table}

\subsection{Description of Datasets for Task Generation in Online Simulation}

\renewcommand{\arraystretch}{1.2}
\begin{table}[H]
\renewcommand{\arraystretch}{1.2}
\centering
\small

\begin{tabular}{@{}p{2.3cm}p{5.6cm}p{5.6cm}@{}}
\toprule
\textbf{Dataset} & \textbf{Description} & \textbf{Testing Capabilities and Limitations} \\
\midrule
GSM8K  & Collection of 8.5k middle school-level math problems from \citeA{cobbe2021gsm8k} solvable by elementary operations and logical reasoning. & The collections of questions are relatively basic and relevant to mathematics. The expert is allowed to propose any answer they would like (in this case limited to integers). \\
\addlinespace[0.1cm]
CommonsenseQA  & Open-source from benchmark dataset from \citeA{talmor-etal-2019-commonsenseqa} containing 9.26k questions with common sense yes/no questions. & This dataset benchmarks the reasoning capabilities of LLMs, and provides a catalogue of candidate answers to choose from.\\
\addlinespace[0.1cm]
BoolQ & A question-answering dataset from \citeA{clark2019boolq} that focuses on logical reasoning and evaluation of boolean expressions. It is part of the BIG-Bench Hard (BBH) suite. & This dataset examines the logical reasoning capabilities of LLMs and provides a catalogue of possible answers \textit{true} and \textit{false}. \\
\addlinespace[0.1cm]
\bottomrule
\end{tabular}

\vspace{0.3em}
\caption{Overview of selected evaluation datasets.}
\label{tab:eval-datasets}
\end{table}

\subsection{Computing Specifications}

\begin{table}[h!]
    \centering
    \small
\label{tab:specs}
\begin{tabular}{ll}
\toprule
\textbf{Component} & \textbf{Specification} \\
\midrule
CPU & Intel\textsuperscript{\textregistered} Core\texttrademark i9-9900K @ 3.60\,GHz \\
Cores/Threads & 16 \\
GPU & NVIDIA GeForce RTX 2080 \\
VRAM & 8\,GB GDDR6 \\
CUDA Version & 11.8 \\
System Memory & 64\,GB DDR4 \\
Storage & NVMe SSD \\
Python Version & 3.8.10 \\
Gurobi Optimizer & 9.5.2 \\
\bottomrule
\end{tabular}
\vspace{0.3em}
\caption{Computing hardware and software library specifications.}

\end{table}

\clearpage

\subsection{Chain-of-Thought Prompting with Constrained Output Formatting}

\paragraph{Structured Chain-of-Thought Prompting:}
Our approach implements a constrained variant of chain-of-thought (CoT) prompting \cite{wei2022chain} with strict output formatting rules. We provide a prompt template for each of the domains.

\subsubsection{GSM8K Prompt Template}

\begin{verbatim}
### Instructions:
1. Read the question carefully and identify what is being asked.
2. Solve the problem methodically, showing each step clearly.
3. Double-check your calculations before finalizing the answer.
4. Your final output MUST follow EXACTLY this format:

### Reasoning:
[Your step-by-step reasoning here]

### Final Answer: [Numerical Value]

### Required Output Format Rules:
- Only numbers allowed in final answer (e.g., 42, 3.14, 2/3)
- If uncertain, return ### Final Answer: 0
- No additional text after final answer
- Final answer must be the last line

### Question:
{question}

### Choices:
{choices}

### Reasoning:
\end{verbatim}

\subsubsection{CommonsenseQA Prompt Template}

\begin{verbatim}
### Instructions:
1. Read the question carefully and identify what is being asked.
2. Solve the problem methodically, showing each step clearly.
3. Double-check your calculations before finalizing the answer.
4. Your final output MUST follow EXACTLY this format:

### Reasoning:
[Your step-by-step reasoning here]

### Final Answer: One of 5 catagories [A, B, C, D, E]

### Required Output Format Rules:
- Only numbers are allowed in the final answer (e.g., A, B, C, D, E)
- If you cannot determine the answer, you MUST pick a random answer from [A, B, C, D, E].
- No additional text, explanations, or characters after the final answer.
- The final answer line must be the very last line of your response.

### Question:
{question}

### Choices:
{choices}

### Reasoning:
\end{verbatim}

\subsubsection{BoolQ Prompt Template}

\begin{verbatim}
### Instructions:
1. Read the question carefully and identify what is being asked.
2. Solve the problem methodically, showing each step clearly.
3. Double-check your calculations before finalizing the answer.
4. Your final output MUST follow EXACTLY this format:

### Reasoning:
[Your step-by-step reasoning here]

### Final Answer: One of 2 catagories [true, false]

### Required Output Format Rules:
- Only numbers are allowed in the final answer (e.g., true, false)
- If you cannot determine the answer, you MUST pick a random answer from [true, false].
- No additional text, explanations, or characters after the final answer.
- The final answer line must be the very last line of your response.

### Question:
{question}

### Reasoning:
\end{verbatim}




\subsubsection{Output Parsing Algorithm}

The response parsing algorithm enforces strict numerical output constraints through regular expression matching:

\begin{algorithm}[H]
\caption{Response Parser}
\begin{algorithmic}[1]
\Procedure{ParseResponse}{$R$}
    \State $\textit{pattern} \gets \text{``\textbackslash\textbackslash\textbackslash\textbackslash Final Answer:\textbackslash s*(-?\textbackslash d+\textbackslash .?\textbackslash d*|[-+]?\textbackslash d+/\textbackslash d+)''}$
    \State $\textit{match} \gets \text{regex\_search}(R, \textit{pattern}, \text{IGNORECASE})$
    \If{$\textit{match.success}$}
        \State \textbf{return} $\textit{match.group}(1)$
    \Else
        \State \textbf{return} ``0''
    \EndIf
\EndProcedure
\end{algorithmic}
\end{algorithm}








    




\clearpage

\subsection{Extended Empirical Results} \label{sec:extended_empirical_appendix}

\begin{table}[h!]
    \centering
    \small
    \begin{tabularx}{\linewidth}{@{}
        >{\hsize=1.0\hsize}X  
        >{\hsize=0.3\hsize}X  
        >{\hsize=0.3\hsize}X  
        >{\hsize=0.4\hsize}X  
        >{\hsize=0.4\hsize}X  
        >{\hsize=0.4\hsize}X  
        >{\hsize=0.4\hsize}X  
        >{\hsize=0.4\hsize}X  
        >{\hsize=1.15\hsize}X  
        >{\hsize=0.8\hsize}X@{}  
    }
        \toprule
        \textbf{Algorithm} & $N$ & $|\mathcal{E}^*|$ & $\mathbb{P}^*_{\text{maj}}$ & $\minPEps$ & $R_T$ & $\%R \downarrow$ & $\hat{P}_{\text{maj}}$ & \textbf{Domain} & \textbf{Config. Id.} \\
        \midrule
        \tT{SEE} & 20 & 9 & 0.992 & 0.010 & \textbf{546.2} & 0.921 & \textbf{0.980} & Bernoulli & SE3 \\
        Comb. UCB & 20 & 9 & 0.992 & 0.010 & 6932 & - & 0.370 & Bernoulli & SC3 \\
        \tT{SEE} & 15 & 3 & 0.985 & 0.040 & \textbf{649.8} & 0.867 & \textbf{0.969} & Bernoulli & SE3 \\
        Comb. UCB & 15 & 3 & 0.985 & 0.040 & 4906 & - & 0.549 & Bernoulli & SC2 \\
        \tT{SEE} & 5 & 3 & 0.883 & 0.010 & \textbf{535.3} & 0.719 & \textbf{0.871} & Bernoulli & SE1 \\
        Comb. UCB & 5 & 3 & 0.883 & 0.010 & 1905 & - & 0.759 & Bernoulli & SC1 \\
        \midrule
        \addlinespace[0.2cm]

        $\theta$-WMV & 9 & 9 & 0.644 & 0.004 & \textbf{217.4} & 0.545 & \textbf{0.606} & Bernoulli & WV3 \\
        Zooming & 9 & 9 & 0.644 & 0.004 & 478.4 & - & 0.449 & Bernoulli & WZ3 \\
        $\theta$-WMV & 6 & 6 & 0.911 & 0.004 & \textbf{312.4} & 0.331 & \textbf{0.853} & Bernoulli & WV2 \\
        Zooming & 6 & 6 & 0.911 & 0.004 & 467.0 & - & 0.765 & Bernoulli & WZ2 \\
        $\theta$-WMV & 3 & 3 & 0.881 & 0.106 & \textbf{287.0} & 0.474 & \textbf{0.835} & Bernoulli & WV1 \\
        Zooming & 3 & 3 & 0.881 & 0.106 & 546 & - & 0.681 & Bernoulli & WZ1 \\
        \midrule
        \addlinespace[0.2cm]

        $\theta$-WMV & 9 & 9 & 0.897 & 0.023 & \textbf{200.1} & 0.080 & \textbf{0.828} & Bernoulli & WS4 \\
        \tT{SEE} & 9 & 9 & 0.897 & 0.023 & 216.0 & - & 0.806 & Bernoulli & WE5 \\
        $\theta$-WMV & 5 & 5 & 0.811 & 0.05 & \textbf{203.8} & 0.114 & \textbf{0.732} & Bernoulli & WS5 \\
        \tT{SEE} & 5 & 5 & 0.811 & 0.05 & 230.1 & - & 0.707 & Bernoulli & WE5 \\
        $\theta$-WMV & 3 & 3 & 0.89 & 0.02 & \textbf{202.6} & 0.484 & \textbf{0.817} & Bernoulli & WS6 \\
        \tT{SEE} & 3 & 3 & 0.68 & 0.05 & 392.7 & - & 0.427 & Bernoulli & WE6 \\
        \midrule
        \addlinespace[0.2cm]
        
        $\theta$-WMV & 9 & 9 & 0.927 & 0.027 & \textbf{587.0} & 0.829 & \textbf{0.927} & GSM8K & WG3 \\
        Zooming & 9 & 9 & 0.927 & 0.027 & 3443 & - & 0.604 & GSM8K & ZG3 \\
        $\theta$-WMV & 6 & 6 & 0.927 & 0.132 & \textbf{420.6} & 0.353 & \textbf{0.918} & GSM8K & WG2 \\
        Zooming & 6 & 6 & 0.927 & 0.132 & 650.2 & - & 0.891 & GSM8K & ZG2 \\
        $\theta$-WMV & 3 & 3 & 0.439 & 0.057 & \textbf{302.0} & 0.660 & \textbf{0.433} & GSM8K & WG1 \\
        Zooming & 3 & 3 & 0.439 & 0.057 & 886.0 & - & 0.369 & GSM8K & ZG1 \\
        \midrule
        \addlinespace[0.1cm]
        
        \tT{SEE} & 9 & 4 & 0.807 & 0.036 & \textbf{328.2} & 0.939 & \textbf{0.805} & CommonsenseQA & CS3 \\
        Comb. UCB & 9 & 4 & 0.807 & 0.036 & 5437 & - & 0.293 & CommonsenseQA & CC3 \\
        \tT{SEE} & 5 & 3 & 0.810 & 0.031 & \textbf{676.8} & 0.364 & \textbf{0.810} & CommonsenseQA & CS2 \\
        Comb. UCB & 5 & 3 & 0.810 & 0.031 & 1065 & - & 0.664 & CommonsenseQA & CC2 \\
        \tT{SEE} & 3 & 1 & 0.810 & 0.051 & \textbf{339.0} & 0.65 & \textbf{0.807} & CommonsenseQA & CS1 \\
        Comb. UCB & 3 & 1 & 0.810 & 0.051 & 970 & - & 0.763 & CommonsenseQA & CS1 \\
        \midrule
        \addlinespace[0.1cm]
        
        $\theta$-WMV  & 9 & 9 & 0.763 & 0.0085 & \textbf{210.1} & 0.402 & \textbf{0.675} & BoolQ & WB3 \\
        Zooming & 9 & 9 & 0.763 & 0.0085 & 351.4 & - & 0.483 & BoolQ & ZB3 \\
        $\theta$-WMV  & 5 & 5 & 0.735 & 0.013 & \textbf{360.9} & 0.512 & \textbf{0.735} & BoolQ & WB2 \\
        Zooming & 5 & 5 & 0.735 & 0.013 & 740.3 & - & 0.694 & BoolQ & ZB2 \\
        $\theta$-WMV  & 4 & 4 & 0.735 & 0.04 & \textbf{183.1} & 0.225 & \textbf{0.667} & BoolQ & WB1 \\
        Zooming & 4 & 4 & 0.735 & 0.04 & 236.4 & - & 0.586 & BoolQ & ZB1 \\
        \bottomrule
    \end{tabularx}
    \vspace{0.3em}
    \caption[Bandit experiment results]{Multi-armed bandit experiment results showing consistent performance across environments, with time horizon $T\leq10^4$. Performance metrics show reduced cumulative regret ($R_T$) and increased empirical accuracy ($\hat{P}_{\text{maj}}$) across all baselines.}
    \label{tab:transposed_results}
    \vspace{-2.6em}
\end{table}

\clearpage

\subsubsection{Empirical Results: Successive Expert Elimination with Bernoulli Experts}

\begin{figure}[H]
\minipage{0.47\textwidth}
  \includegraphics[width=\linewidth]{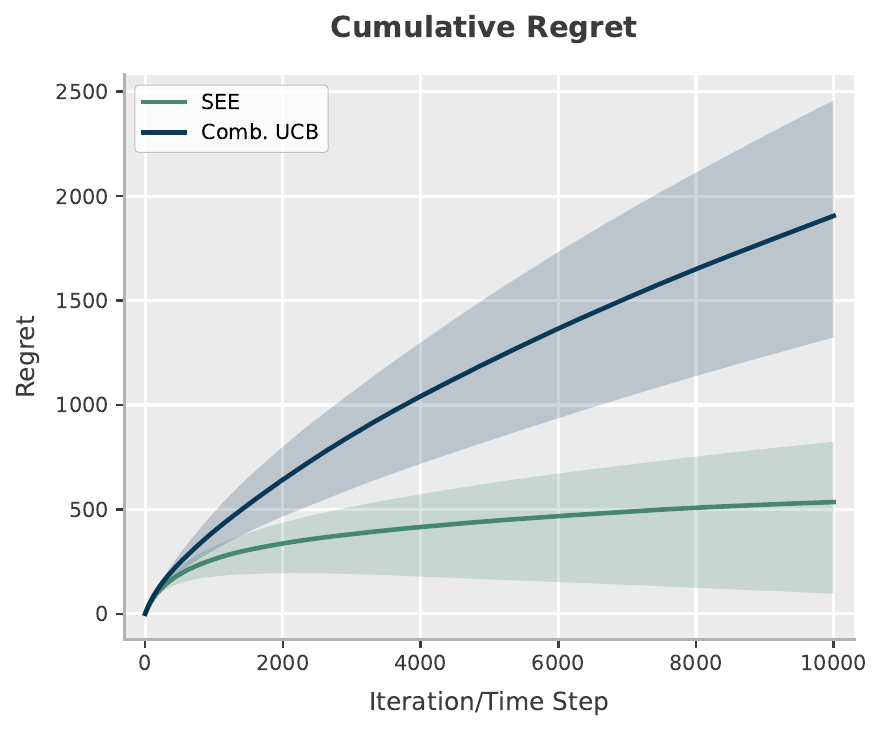}
\endminipage\hfill
\minipage{0.47\textwidth}
  \includegraphics[width=\linewidth]{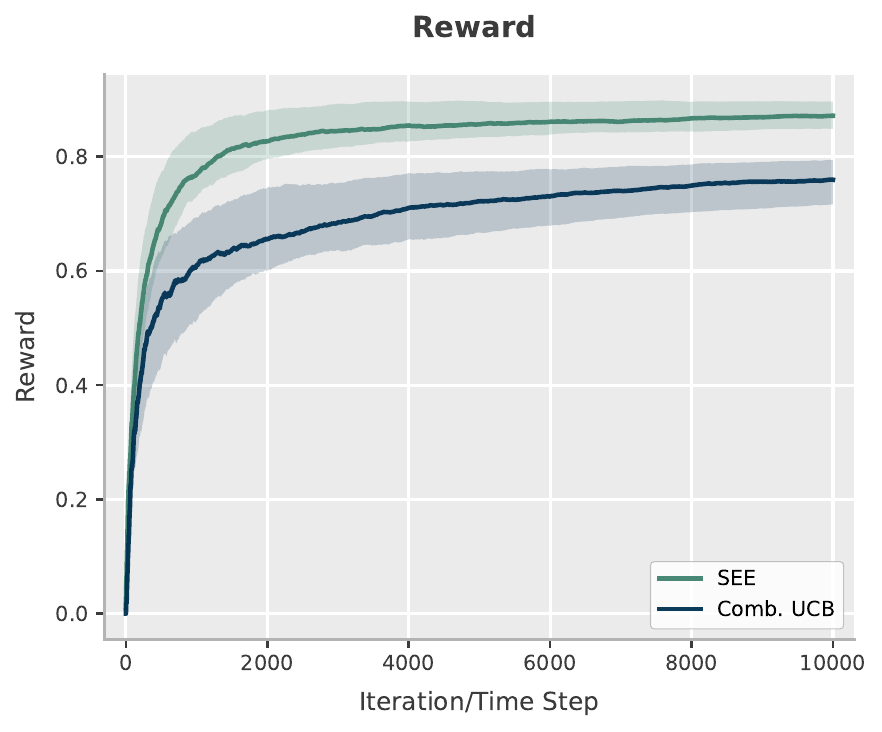}
\endminipage\hfill
\caption*{Config. SE1 \& SC1 - $p_i$: $[0.1, 0.65, 0.77, 0.79, 0.8]$.}
\minipage{0.47\textwidth}
  \includegraphics[width=\linewidth]{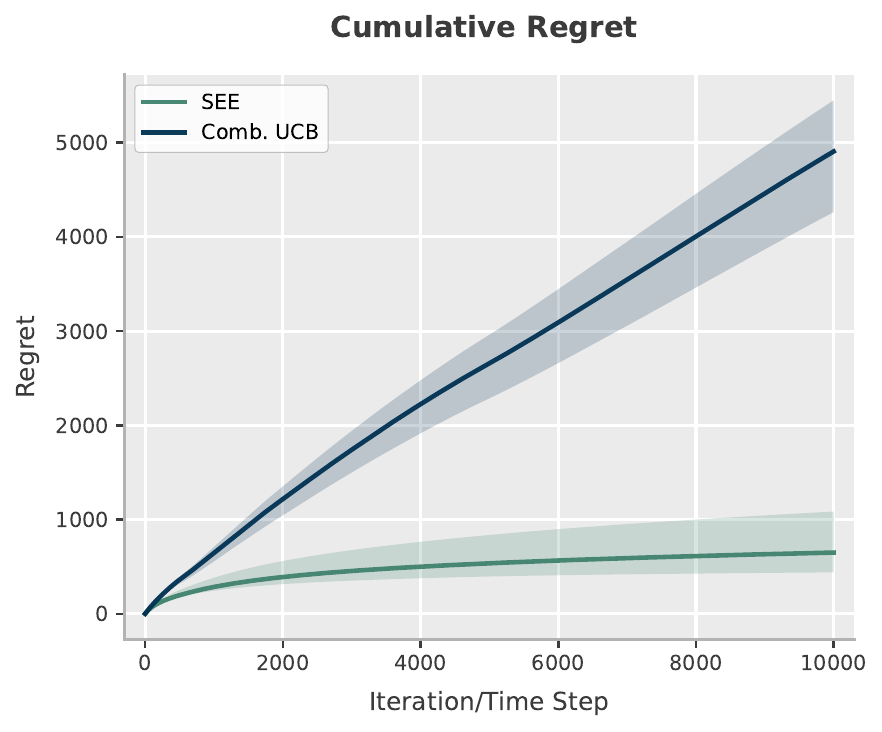}
\endminipage\hfill
\minipage{0.47\textwidth}
  \includegraphics[width=\linewidth]{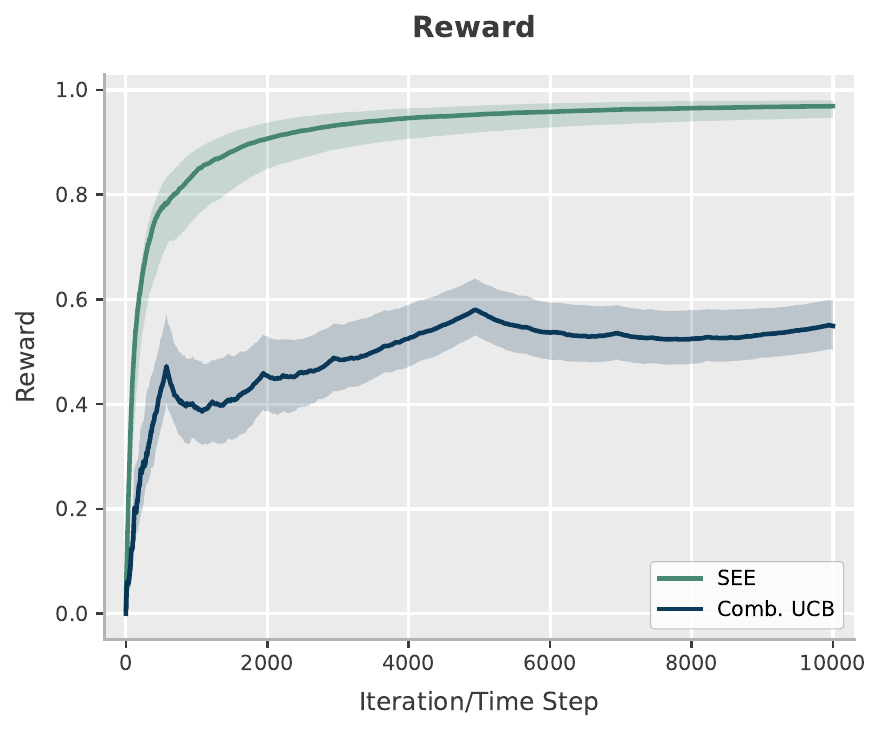}
\endminipage\hfill
\caption*{Config. SE2 \& SC2 - $p_i$: $[0.12, 0.23, 0.31, 0.37, 0.42, 0.49, 0.55, 0.63, 0.68, 0.72, 0.76, 0.81, 0.87, 0.92, 0.98]$.}
\minipage{0.47\textwidth}
  \includegraphics[width=\linewidth]{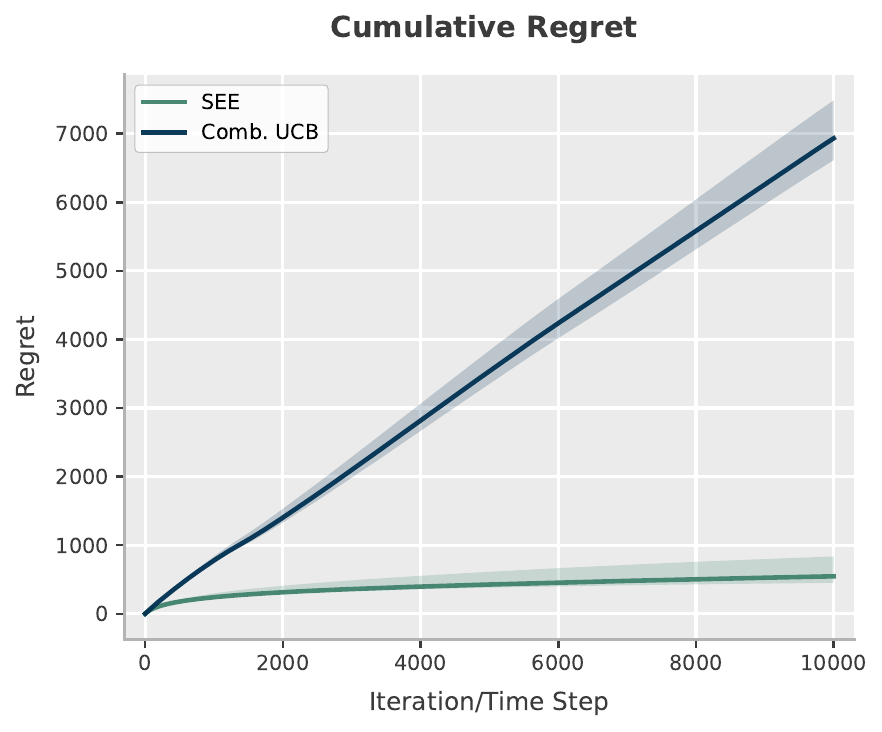}
\endminipage\hfill
\minipage{0.47\textwidth}
  \includegraphics[width=\linewidth]{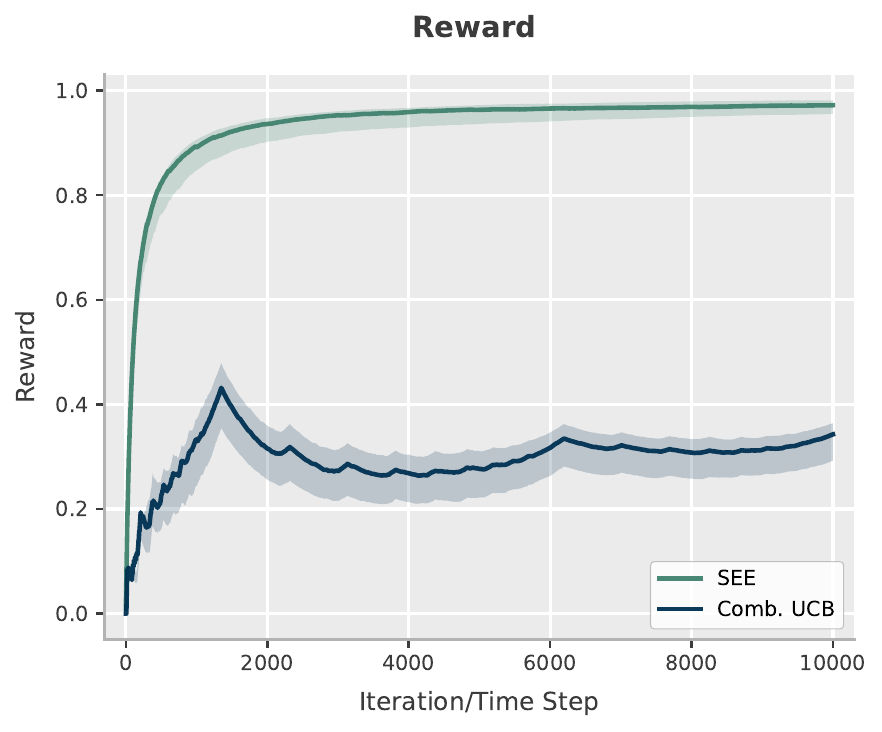}
\endminipage\hfill
\caption*{Config. SE3 \& SC3 - $p_i$: $[
    0.04, 0.07, 0.09, 0.12, 0.15, 0.19, 0.22, 0.26, 0.30, 0.33,
    0.70, 0.73, 0.76, 0.78, 0.81, \\ 0.84, 0.86, 0.89, 0.90, 0.91
]$.}
\caption{Mean values are calculated over 1,000 trials, with finite time horizon $T=10,000$, with shaded regions representing confidence intervals of $\pm \ucB$.} \label{fig:see_plots}
\end{figure}

\clearpage

\subsubsection{Empirical Results: $\theta$-Weighted Majority Voting with Bernoulli Experts}

\begin{figure}[H]
\minipage{0.47\textwidth}
  \includegraphics[width=\linewidth]{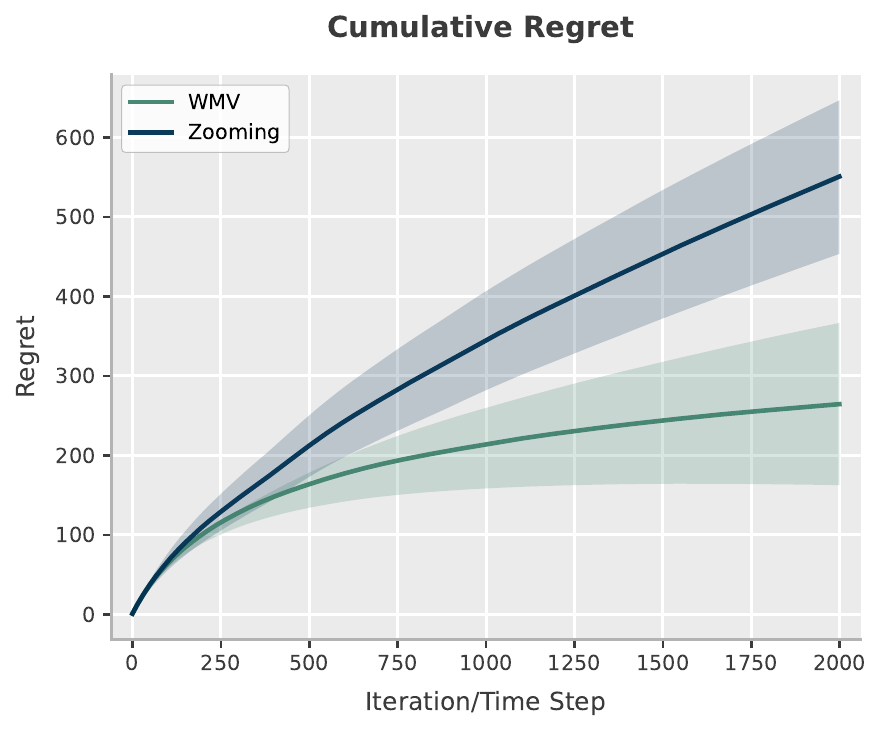}
\endminipage\hfill
\minipage{0.47\textwidth}
  \includegraphics[width=\linewidth]{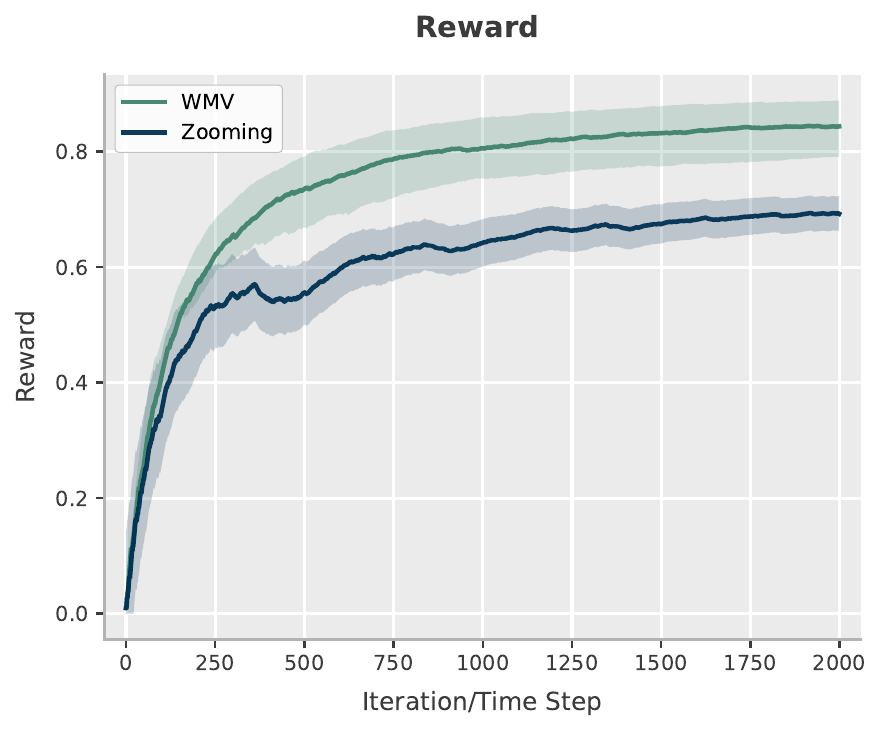}
\endminipage\hfill
\caption*{Config. WV1 \& WZ1 - $p_i$: $[0.332, 0.775, 0.881]$.}
\minipage{0.47\textwidth}
  \includegraphics[width=\linewidth]{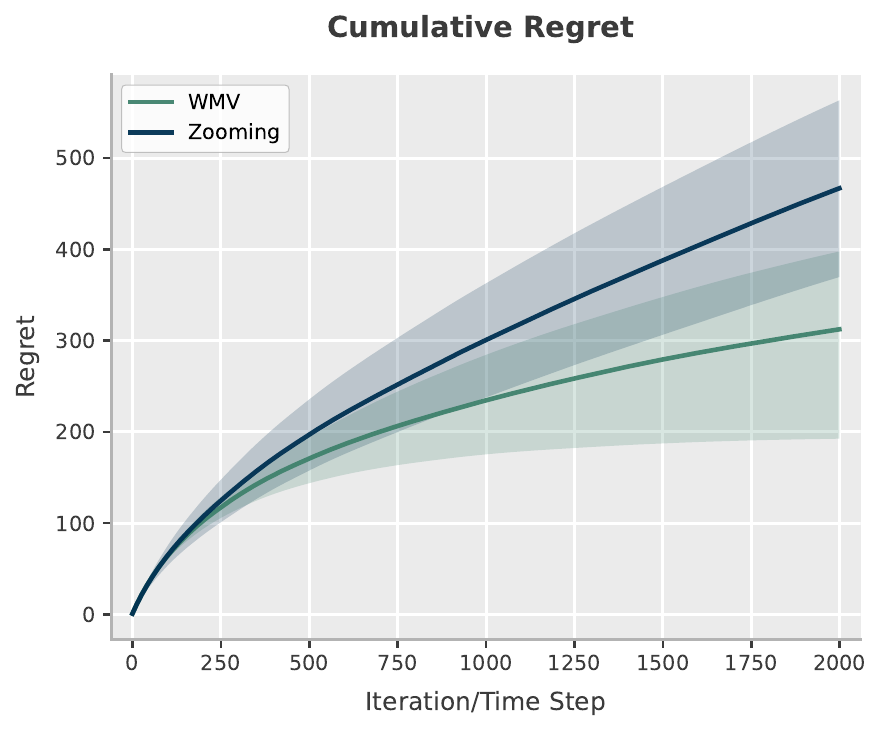}
\endminipage\hfill
\minipage{0.47\textwidth}
  \includegraphics[width=\linewidth]{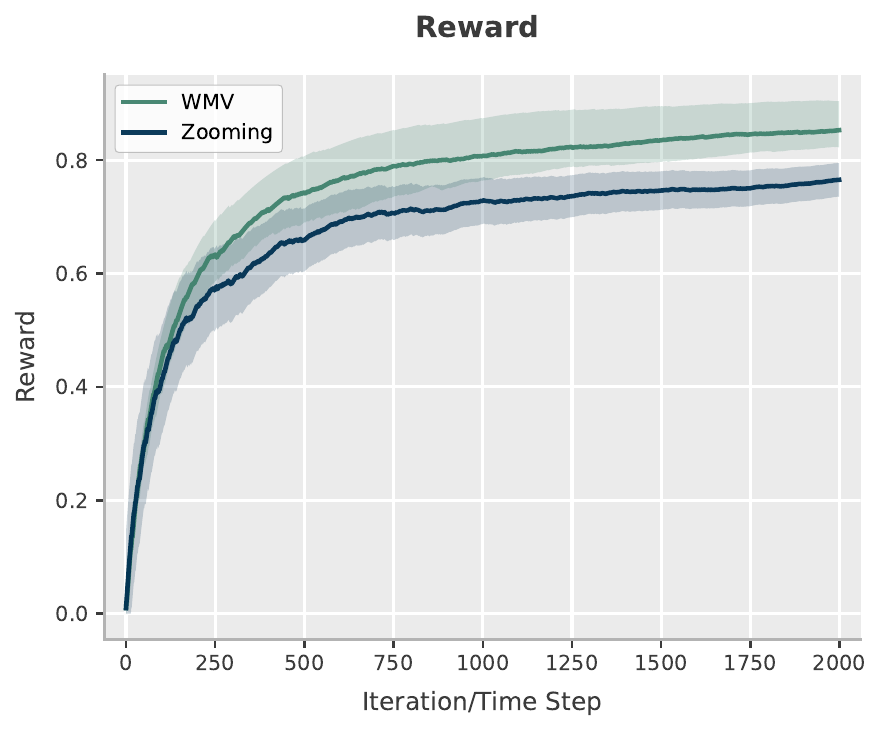}
\endminipage\hfill
\caption*{Config. WV2 \& WZ2 - $p_i$: $[0.388, 0.561, 0.782, 0.799, 0.803, 0.841]$.}
\minipage{0.47\textwidth}
  \includegraphics[width=\linewidth]{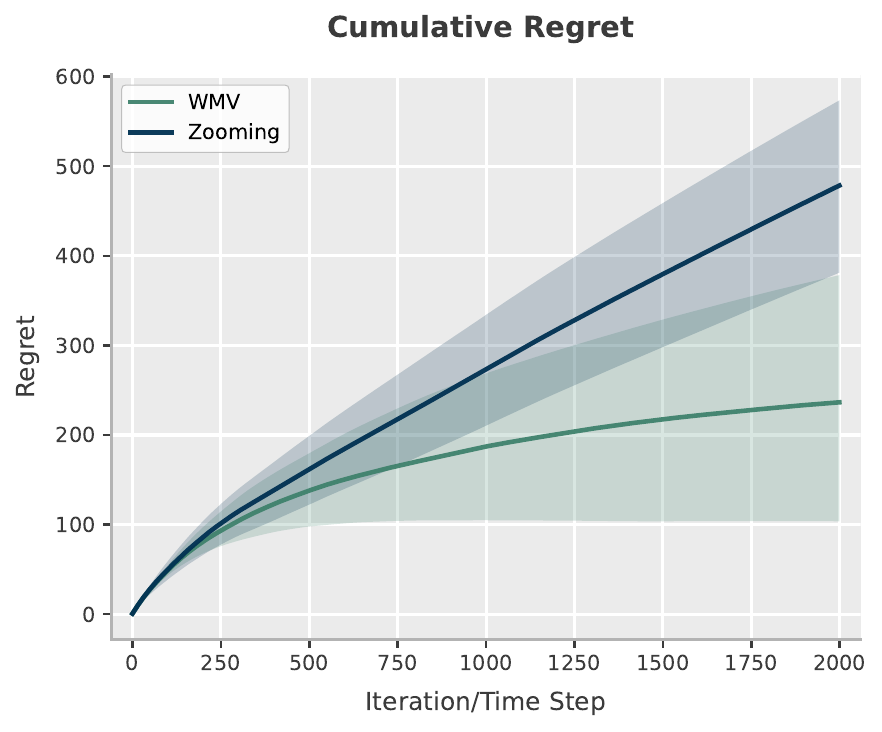}
\endminipage\hfill
\minipage{0.47\textwidth}
  \includegraphics[width=\linewidth]{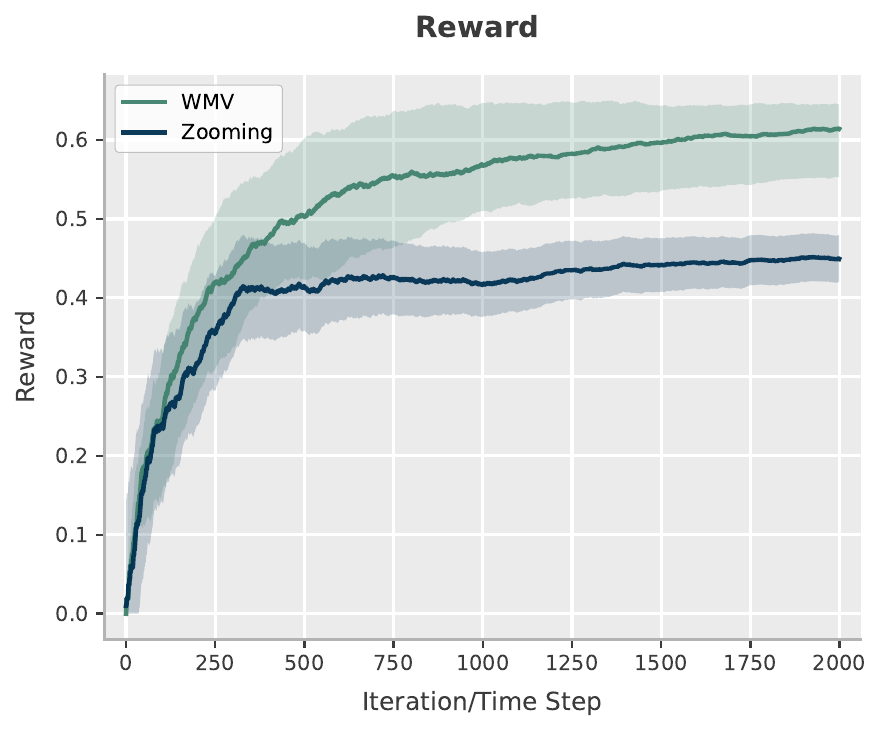}
\endminipage\hfill
\caption*{Config. WV3 \& WZ3 - $p_i$: $[0.261, 0.370, 0.382, 0.499, 0.503, 0.511, 0.542, 0.616, 0.634]$.}
\caption{Mean values are calculated over 1,000 trials, with finite time horizon $T=2000$, with shaded regions representing confidence intervals of $\pm \ucB$.} \label{fig:wmv_plots} 
\end{figure}
\clearpage

\subsubsection{Empirical Results: $\theta$-WMW vs. SEE Voting with Bernoulli Experts}

\begin{figure}[H]
\minipage{0.47\textwidth}
  \includegraphics[width=\linewidth]{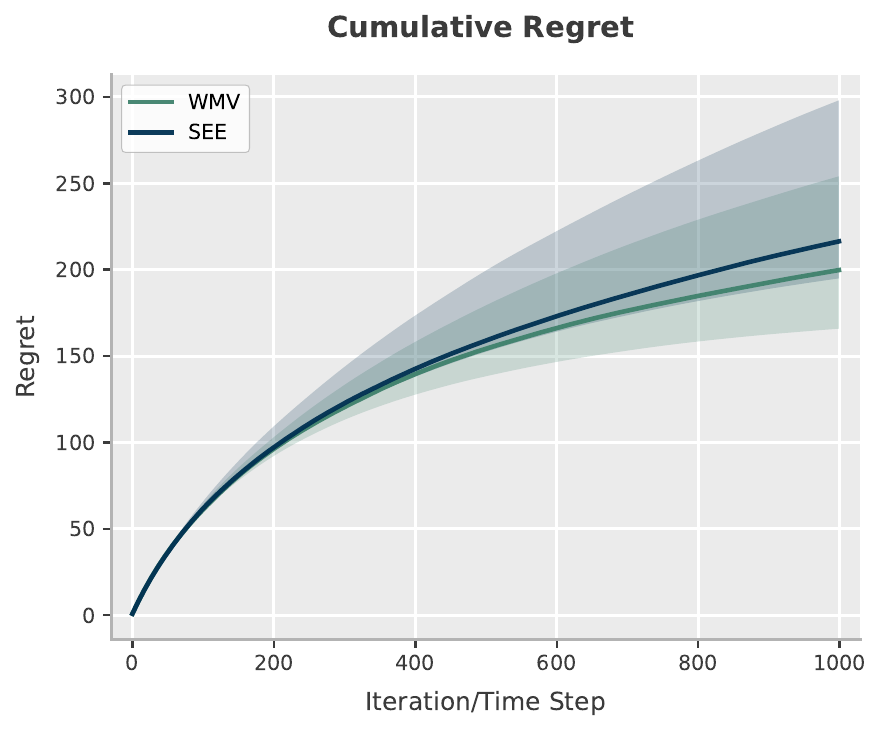}
\endminipage\hfill
\minipage{0.47\textwidth}
  \includegraphics[width=\linewidth]{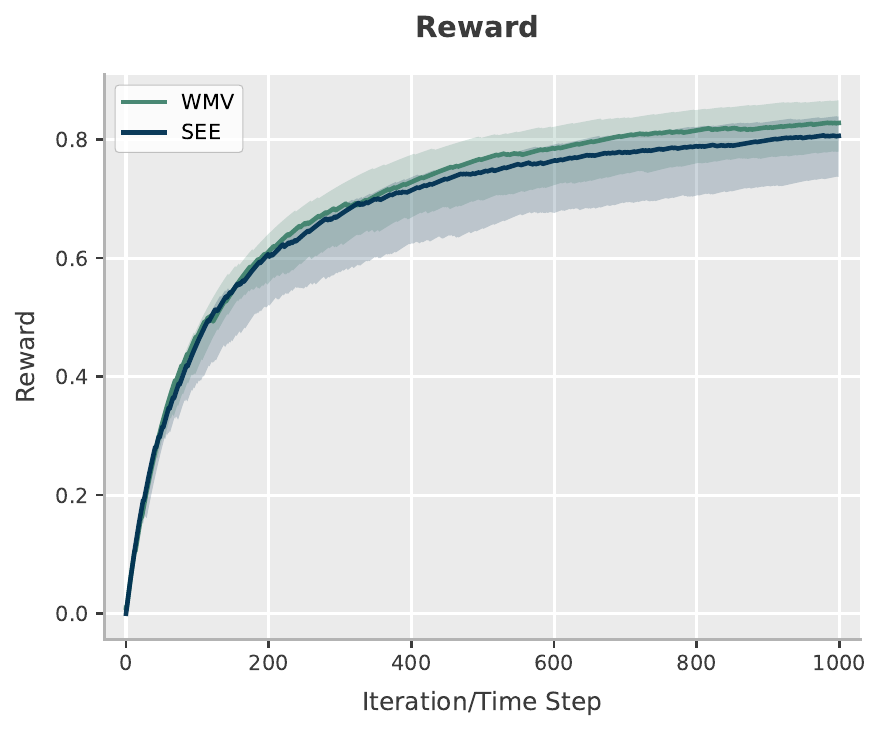}
\endminipage\hfill
\caption*{Config. WS4 \& WE4 - $p_i$: $[0.763, 0.786, 0.8492]$.}
\minipage{0.47\textwidth}
  \includegraphics[width=\linewidth]{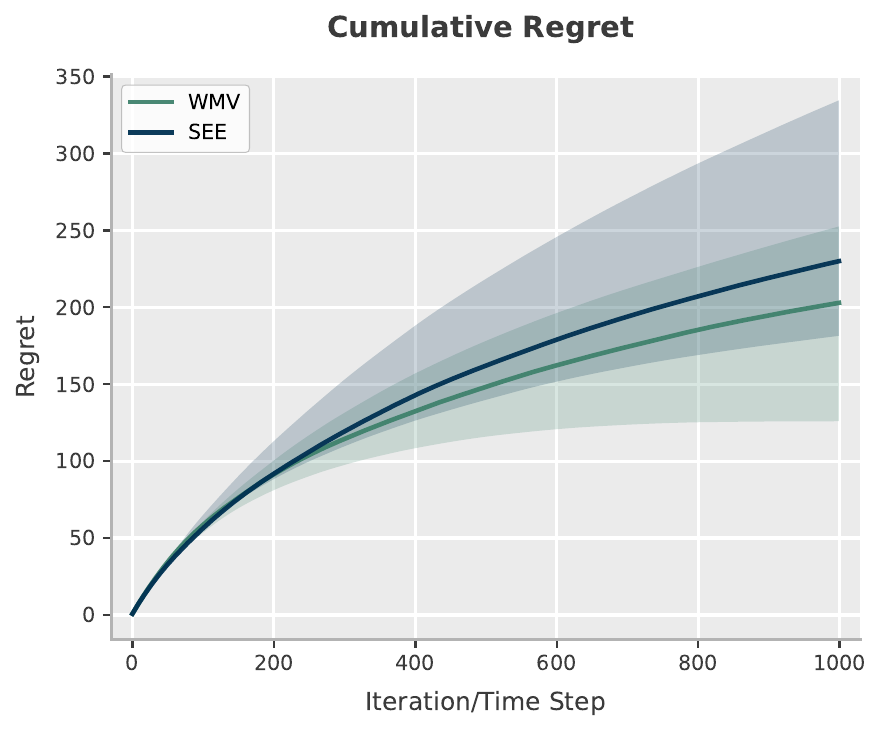}
\endminipage\hfill
\minipage{0.47\textwidth}
  \includegraphics[width=\linewidth]{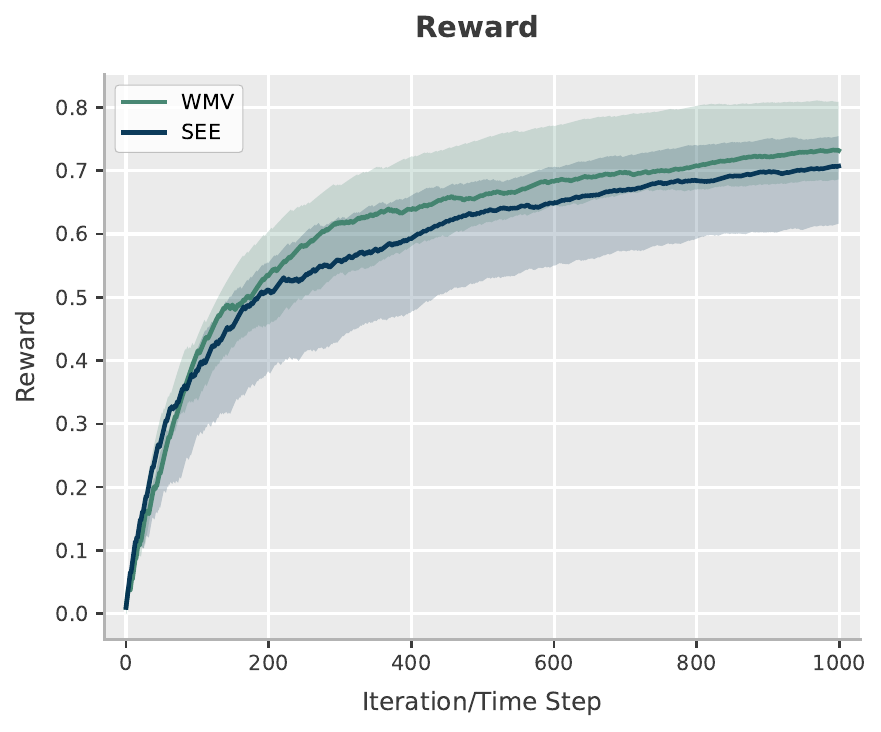}
\endminipage\hfill
\caption*{Config. WS5 \& WE5 - $p_i$: $[0.435, 0.501, 0.667, 0.714, 0.792]$.}
\minipage{0.47\textwidth}
  \includegraphics[width=\linewidth]{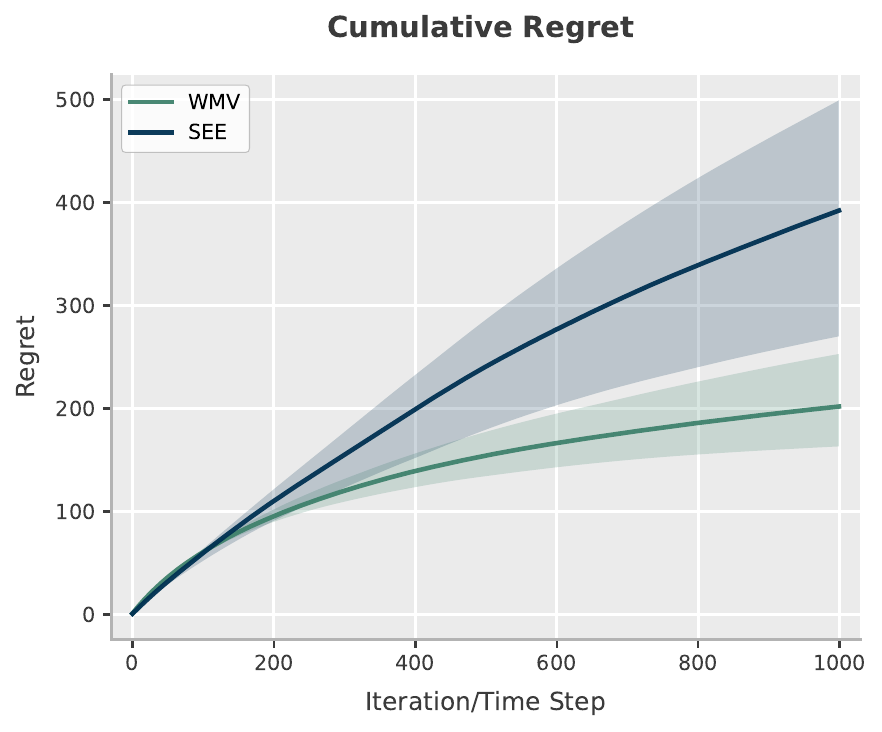}
\endminipage\hfill
\minipage{0.47\textwidth}
  \includegraphics[width=\linewidth]{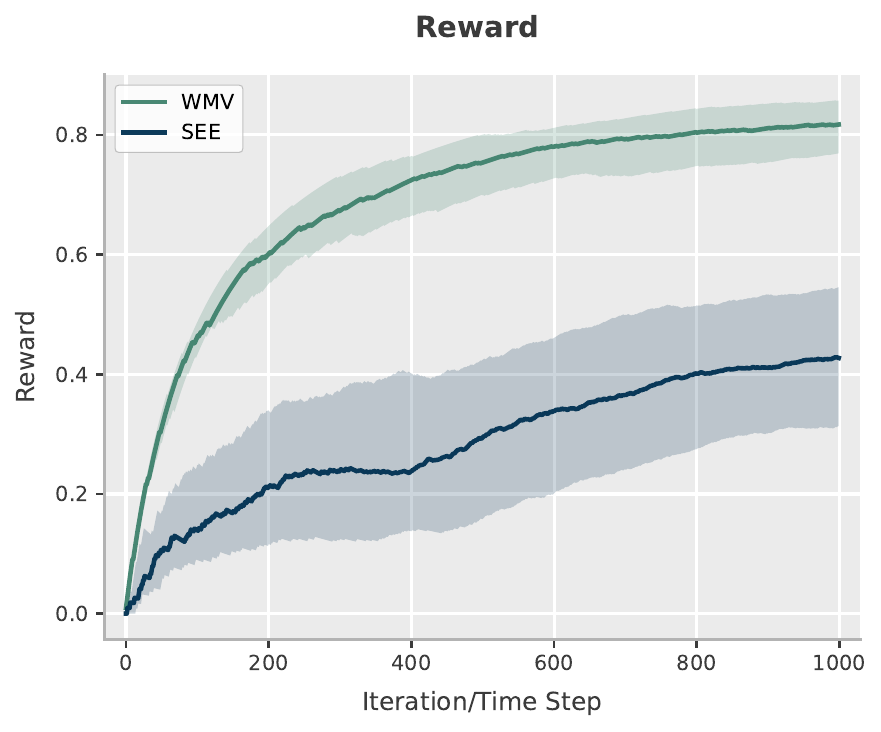}
\endminipage\hfill
\caption*{Config. WS6 \& WE6 - $p_i$: $[0.121, 0.232, 0.319, 0.374, 0.428, 0.498, 0.552, 0.637, 0.681]$.}
\caption{Mean values are calculated over 1,000 trials, with finite time horizon $T=1000$, with shaded regions representing confidence intervals of $\pm \ucB$. To note, for a small batch of experts, the regret performances are near-identical as, the optimal committee consists of a single best expert among a few. The learning complexity is not high, and both algorithms converge to the optimal solution quickly. As the number of experts increases, $\theta$-WMV demonstrates a clear advantage in regret minimization, aligning with our theory from Lemma \ref{lem:maj_vote_dominance}, that the weighted majority voting committee will always yield a superior solution compared to the egalitarian committee.
} \label{fig:wmv_plots} 
\end{figure}
\clearpage

\subsubsection{Empirical Results: $\theta$-Weighted Majority Voting with GSM8K Tasks}

\begin{figure}[H]
\minipage{0.47\textwidth}
  \includegraphics[width=\linewidth]{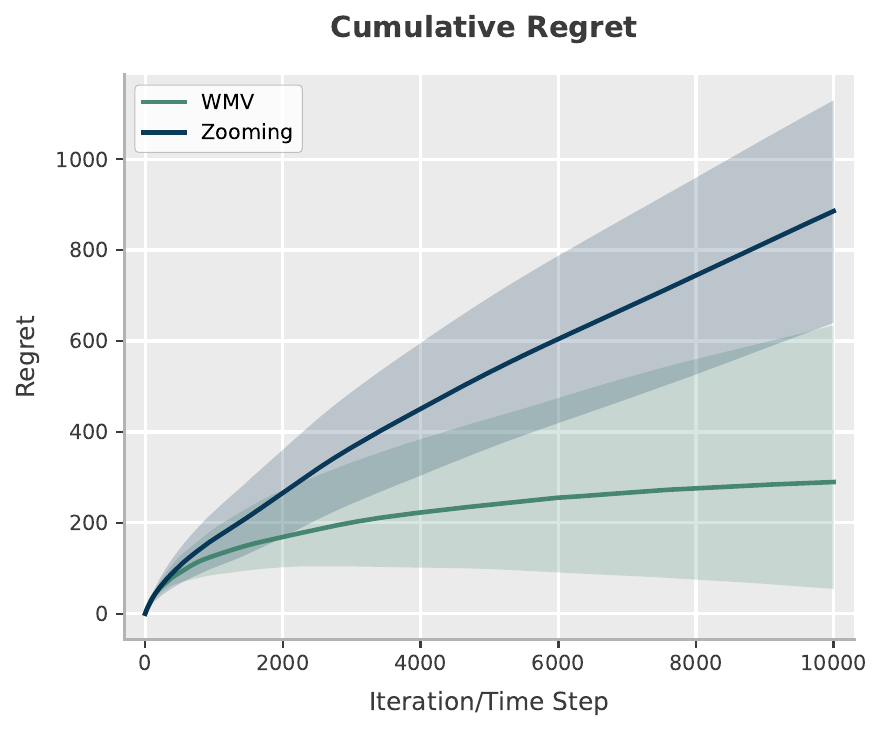}
\endminipage\hfill
\minipage{0.47\textwidth}
  \includegraphics[width=\linewidth]{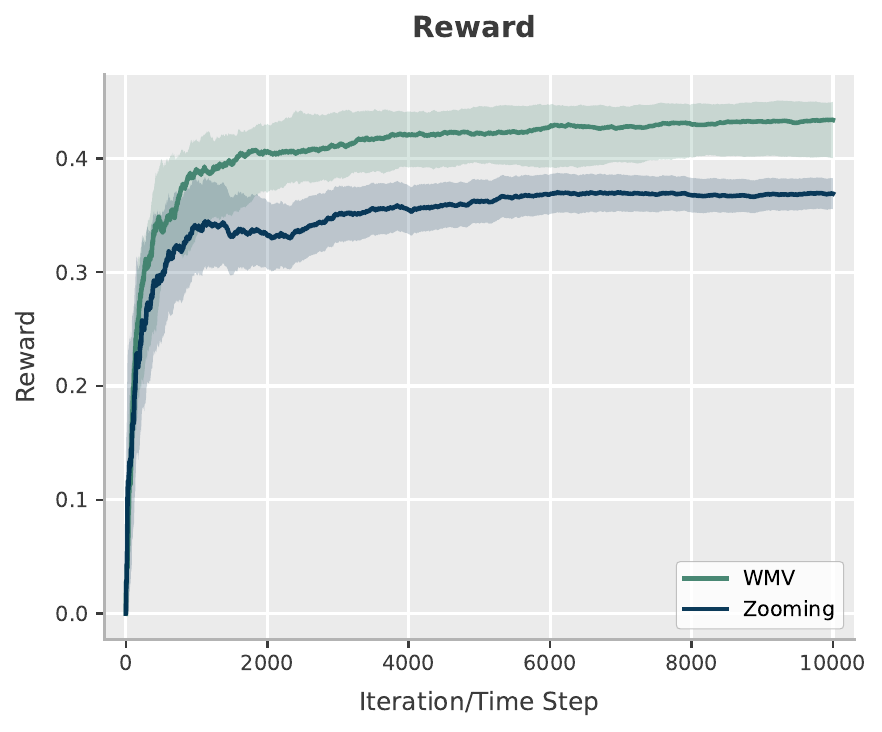}
\endminipage\hfill
\caption*{Config. WG1 \& ZG1 - LLM Expert Set: $\tT{[notus, qwen-14b, mistral]}$.}

\minipage{0.47\textwidth}
  \includegraphics[width=\linewidth]{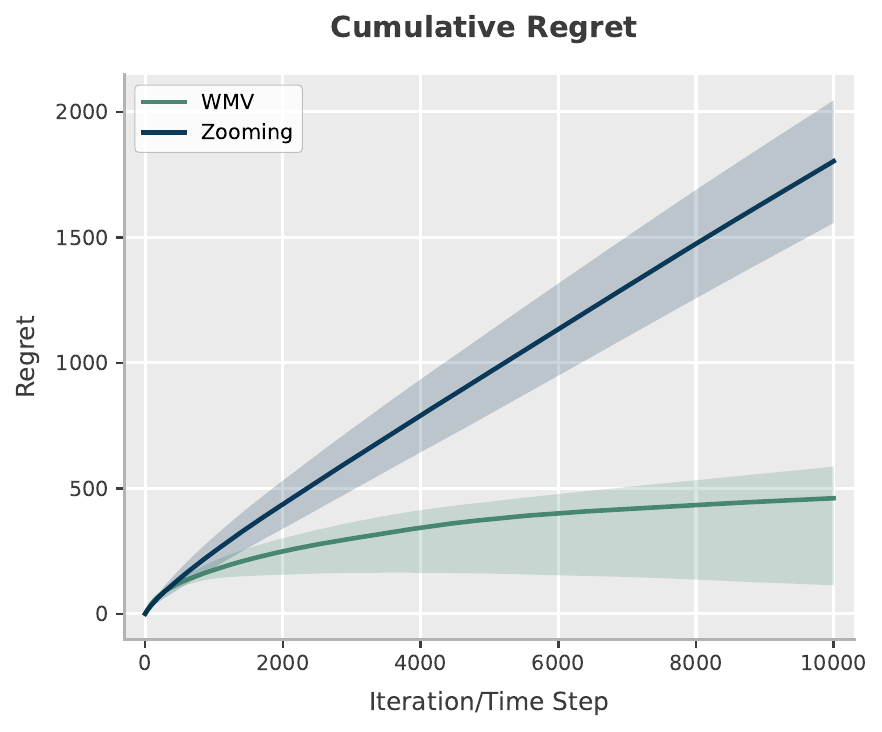}
\endminipage\hfill
\minipage{0.47\textwidth}
  \includegraphics[width=\linewidth]{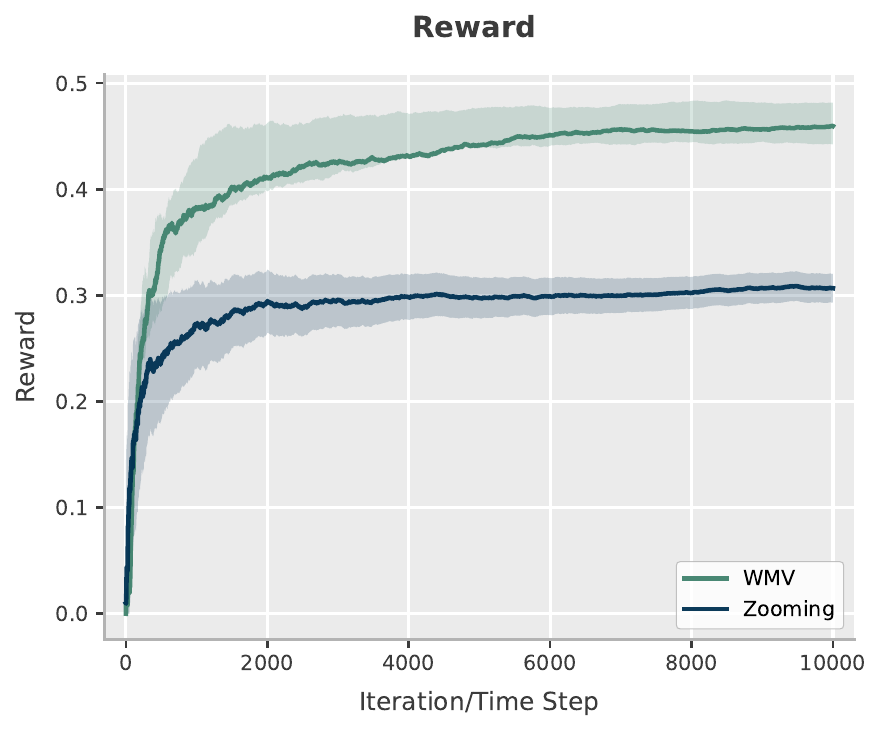}
\endminipage\hfill
\caption*{Config. WG2 \& ZG2 - LLM Expert Set: $\tT{[samantha-mistral, notus, qwen-14b, mistral, } \\ \tT{gemma-7b, deepseek-r1-14b]}$.}

\minipage{0.47\textwidth}
  \includegraphics[width=\linewidth]{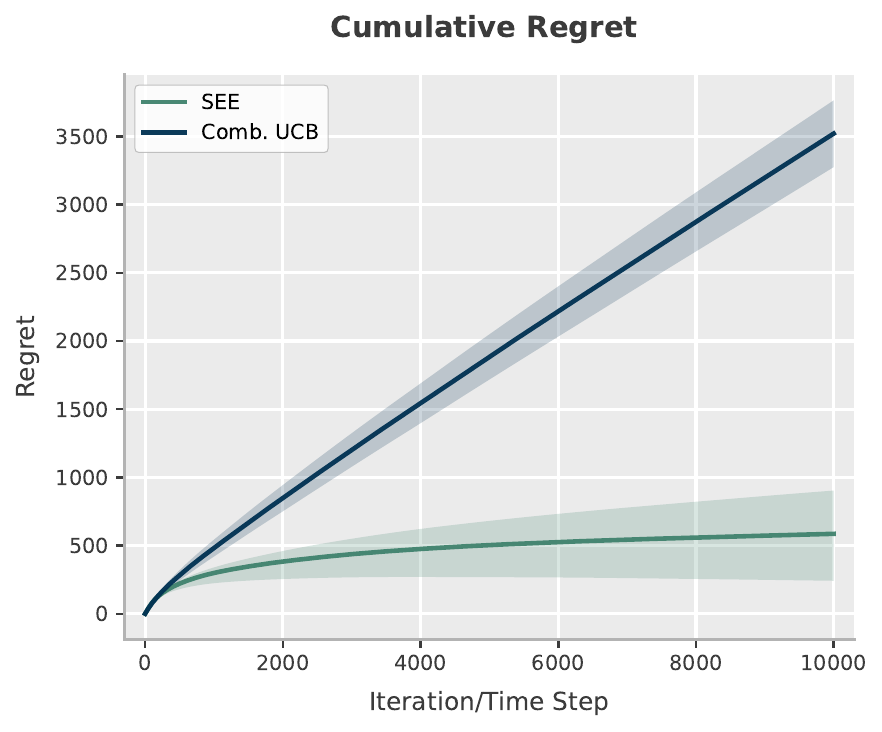}
\endminipage\hfill
\minipage{0.47\textwidth}
  \includegraphics[width=\linewidth]{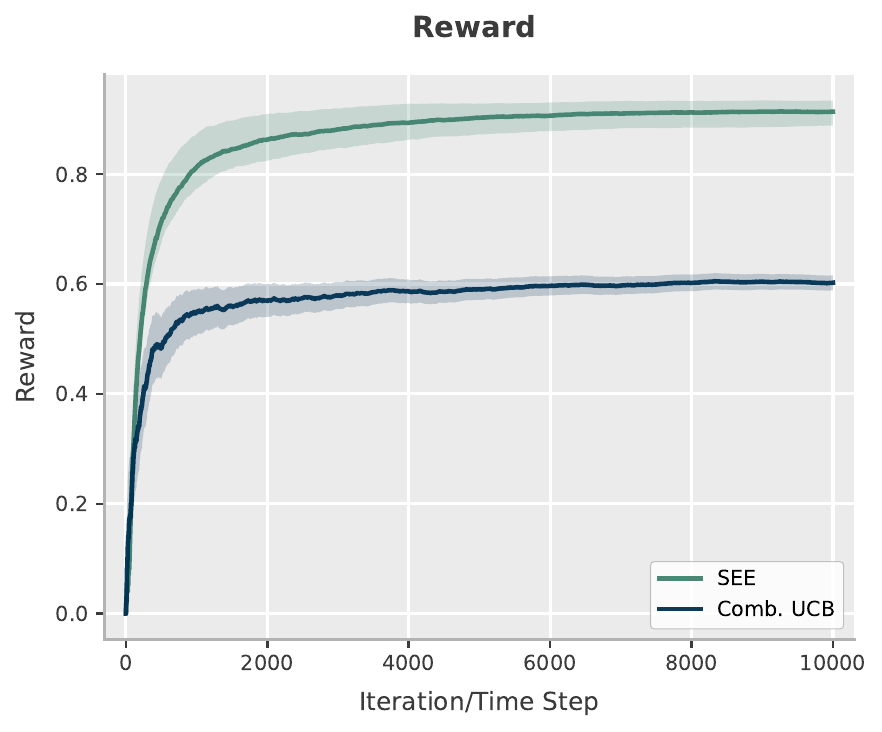}
\endminipage\hfill
\caption*{Config. WG3 \& ZG3 - LLM Expert Set: $\tT{[aya, mistral-openorca, samantha-mistral, notus, } \\ \tT{qwen-14b, mistral, gemma-7b, deepseek-r1-14b, phi4]}$.}

\caption{Questions were sampled from the GSM8K dataset \cite{cobbe2021gsm8k}. Mean values are calculated over 1,000 trials, with finite time horizon $T=10,000$, with shaded regions representing confidence intervals of $\pm \ucB$.} \label{fig:r1_game_summary_plots}
\end{figure}
\clearpage

\subsubsection{Empirical Results: Successive Expert Elimination Voting with CommonsenseQA Tasks}

\begin{figure}[H]
\minipage{0.47\textwidth}
  \includegraphics[width=\linewidth]{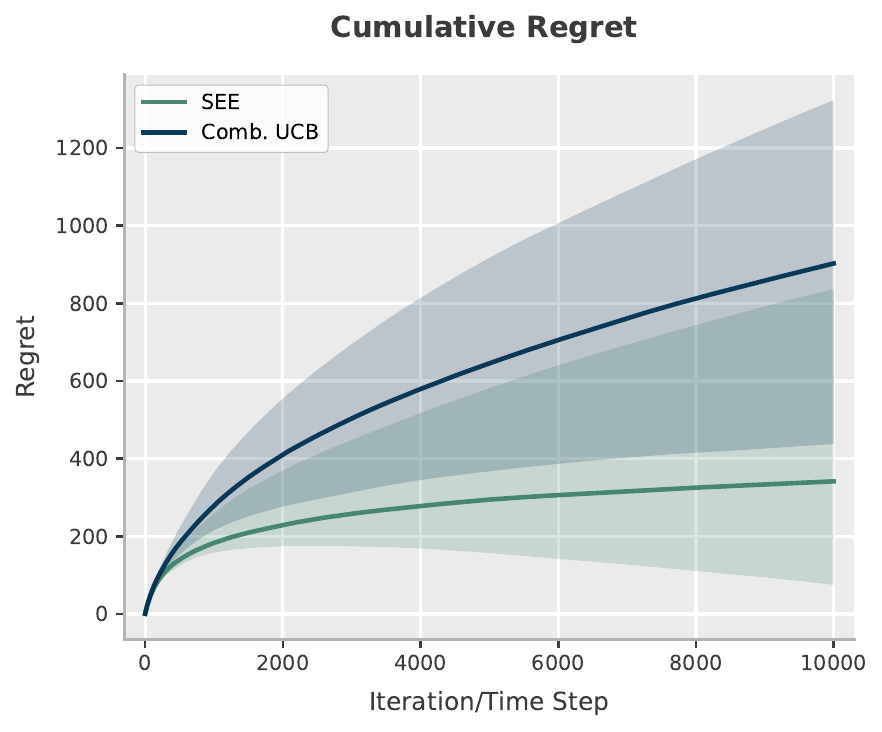}
\endminipage\hfill
\minipage{0.47\textwidth}
  \includegraphics[width=\linewidth]{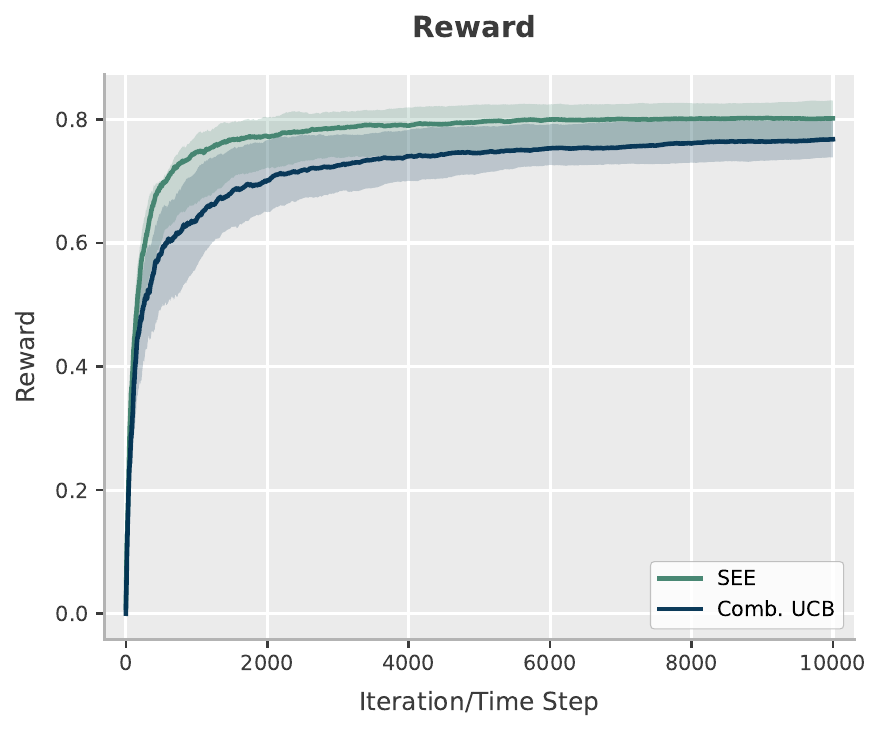}
\endminipage\hfill
\caption*{Config. CS1 \& CC1 - LLM Expert Set: $\tT{[gemma-7b, phi4, deepseek-r1-14b]}$.}
\minipage{0.47\textwidth}
  \includegraphics[width=\linewidth]{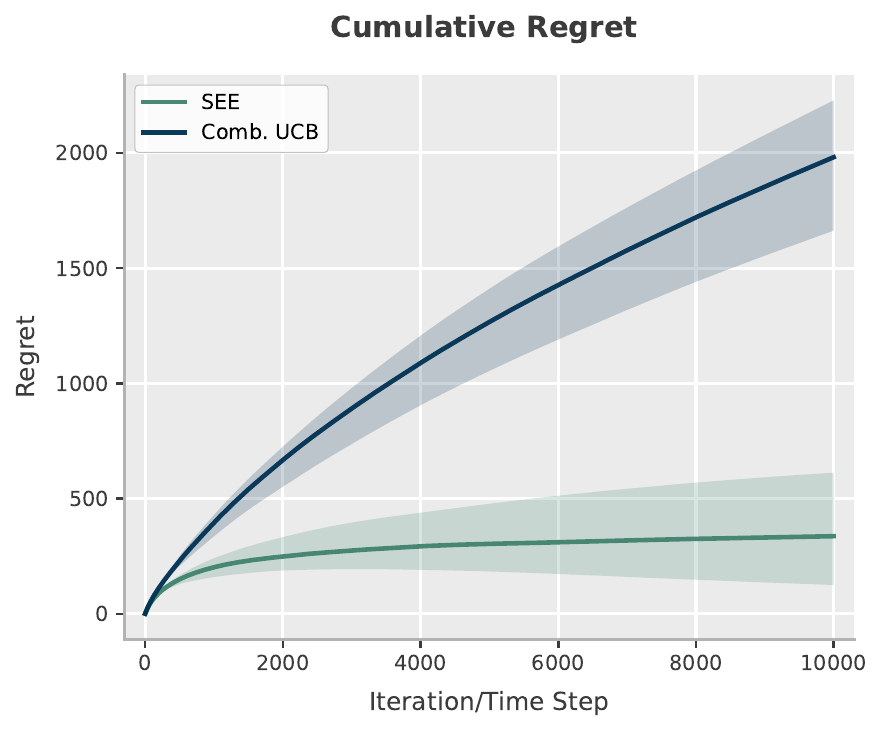}
\endminipage\hfill
\minipage{0.47\textwidth}
  \includegraphics[width=\linewidth]{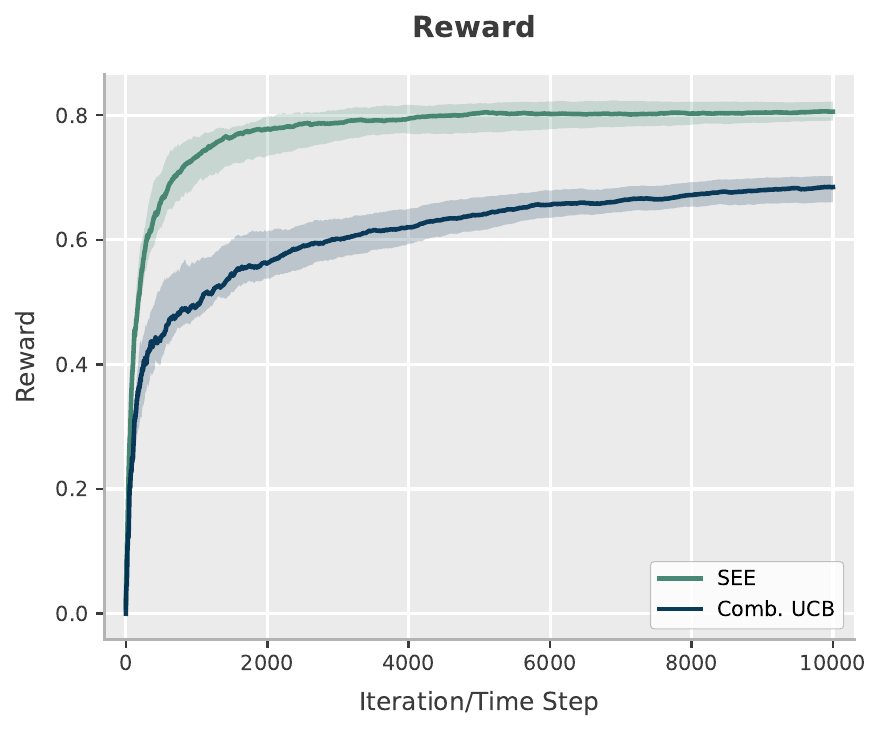}
\endminipage\hfill

\caption*{Config. CS2 \& CC2 - LLM Expert Set: $\tT{[mistral, qwen-14b, gemma-7b, phi4, deepseek-r1-14b]}$.}
\minipage{0.47\textwidth}
  \includegraphics[width=\linewidth]{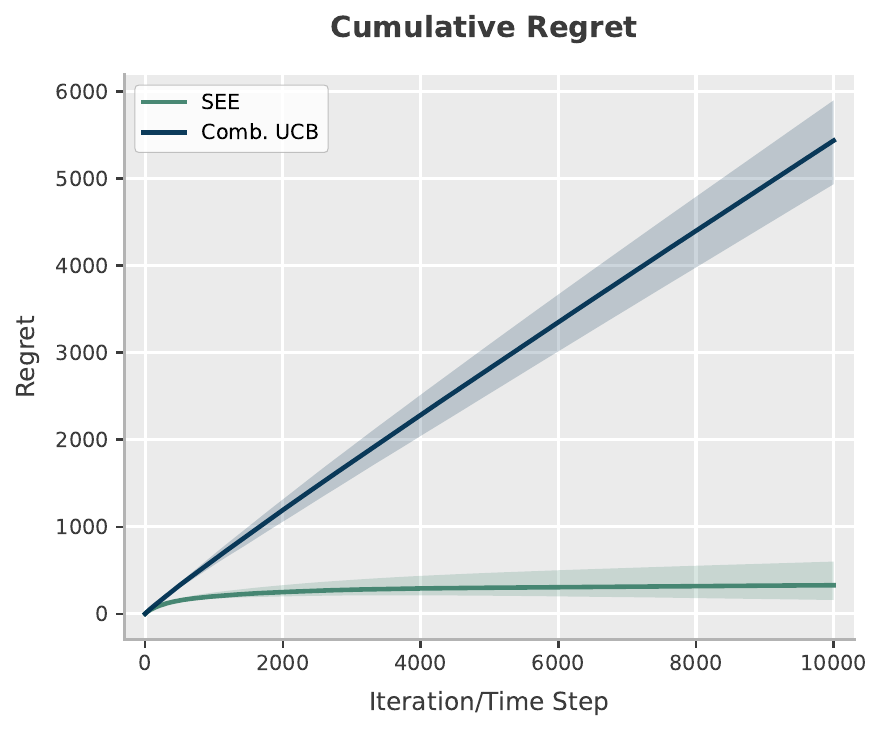}
\endminipage\hfill
\minipage{0.47\textwidth}
  \includegraphics[width=\linewidth]{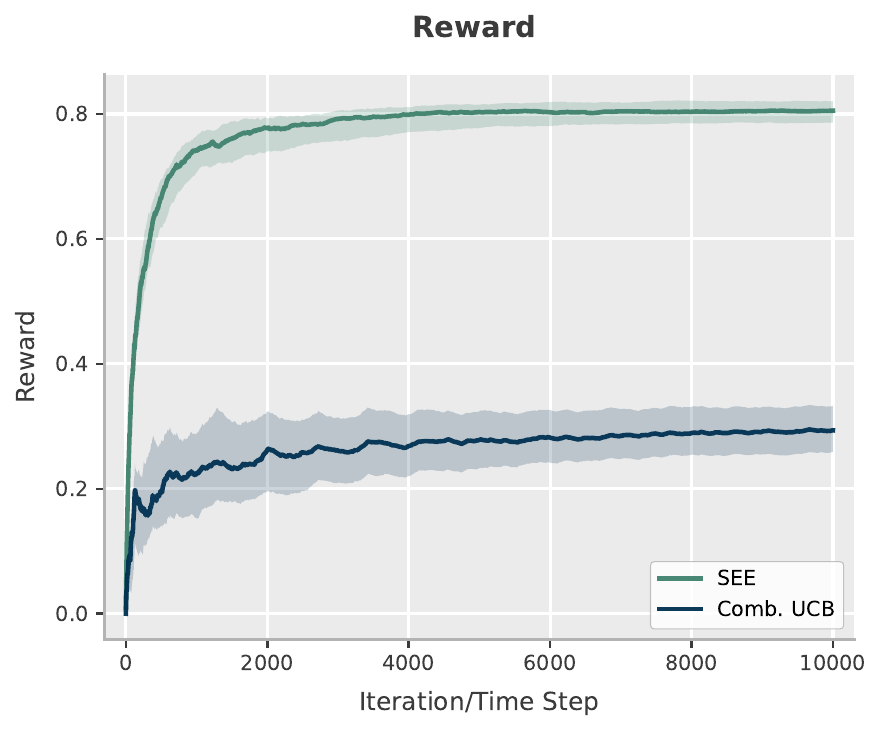}
\endminipage\hfill

\caption*{Config. CS3 \& CC3 - LLM Expert Set: $\tT{[aya, mistral-openorca, samantha-mistral, } \\ \tT{notus, mistral, qwen-14b, gemma-7b, phi4, deepseek-r1-14b]}$.}
\caption{Questions were sampled from the CommonsenseQA dataset \cite{talmor-etal-2019-commonsenseqa}. Mean values are calculated over 1,000 trials, with finite time horizon $T=10,000$, with shaded regions representing confidence intervals of $\pm \ucB$} \label{fig:r1_game_summary_plots}
\end{figure}
\clearpage

\subsubsection{Empirical Results: Successive Expert Elimination Voting with BoolQ Tasks}

\begin{figure}[H]
\minipage{0.47\textwidth}
  \includegraphics[width=\linewidth]{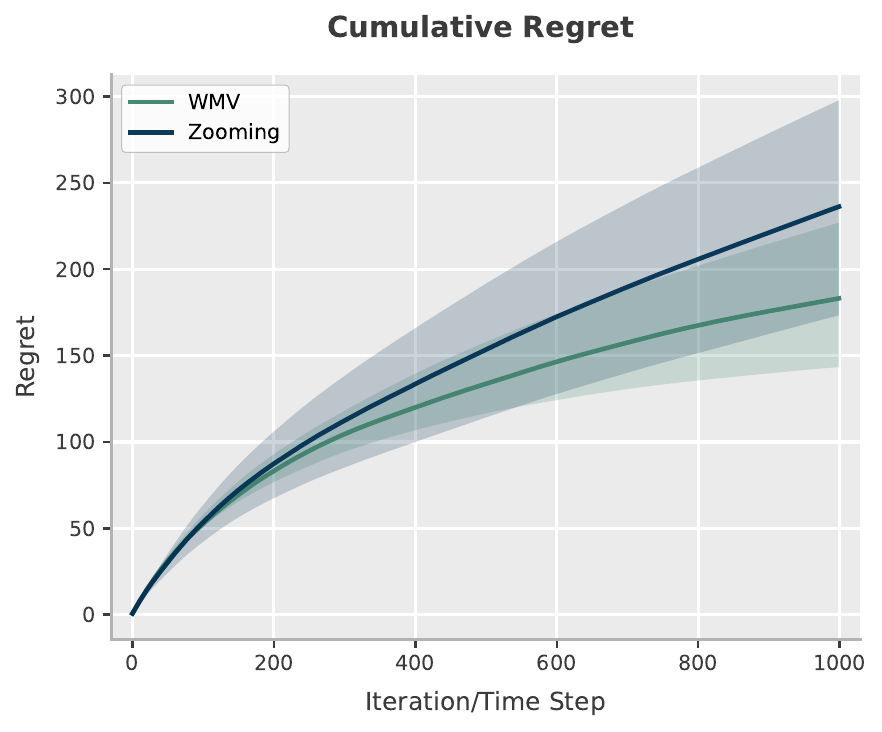}
\endminipage\hfill
\minipage{0.47\textwidth}
  \includegraphics[width=\linewidth]{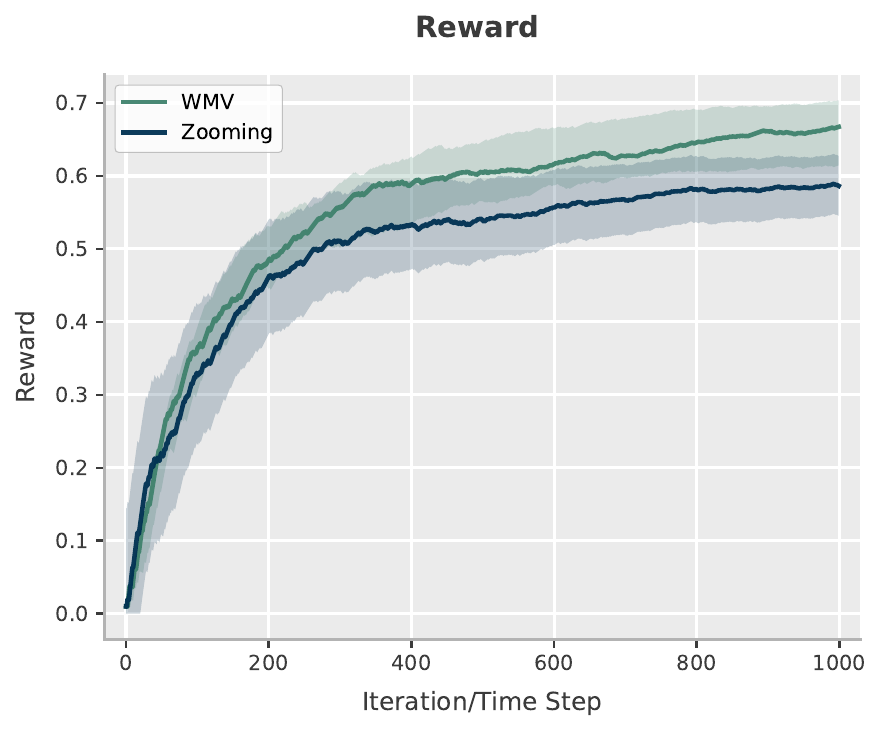}
\endminipage\hfill
\caption*{Config. WB1 \& ZB1 - LLM Expert Set: $\tT{[mistral, gemma-7b, deepseek-r1-14b, phi4]}$.}
\minipage{0.47\textwidth}
  \includegraphics[width=\linewidth]{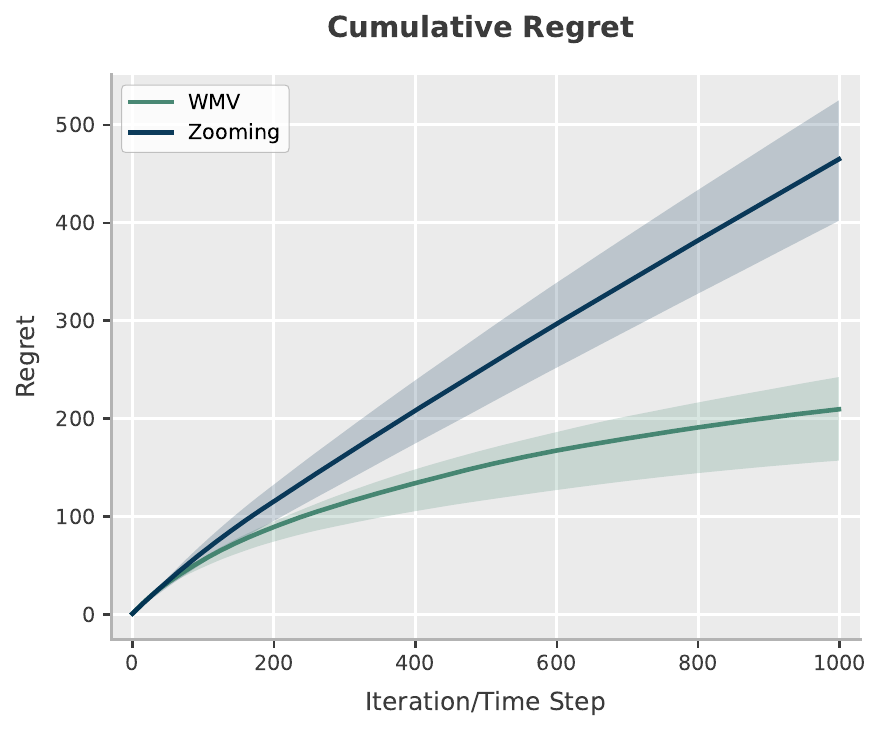}
\endminipage\hfill
\minipage{0.47\textwidth}
  \includegraphics[width=\linewidth]{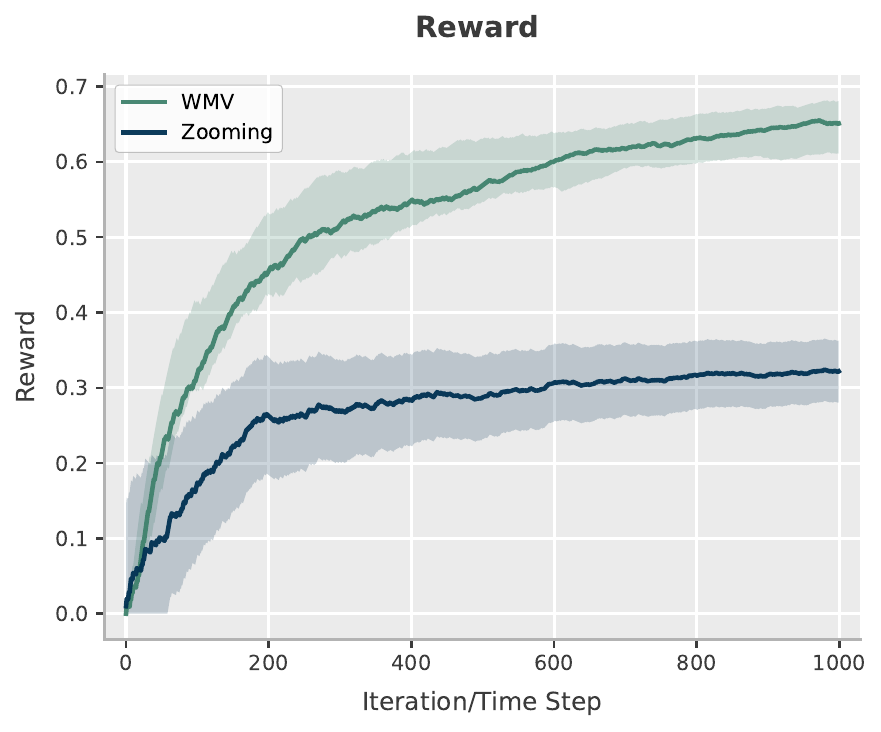}
\endminipage\hfill

\caption*{Config. WB2 \& ZB2 - LLM Expert Set: $\tT{[qwen-14b, mistral, gemma-7b, deepseek-r1-14b, phi4]}$.}
\minipage{0.47\textwidth}
  \includegraphics[width=\linewidth]{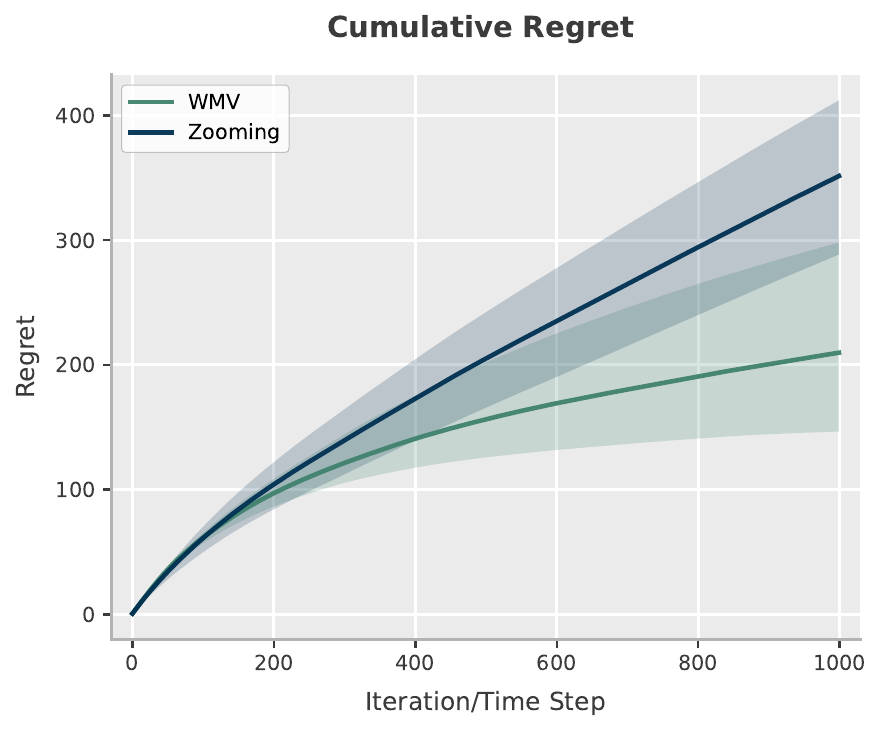}
\endminipage\hfill
\minipage{0.47\textwidth}
  \includegraphics[width=\linewidth]{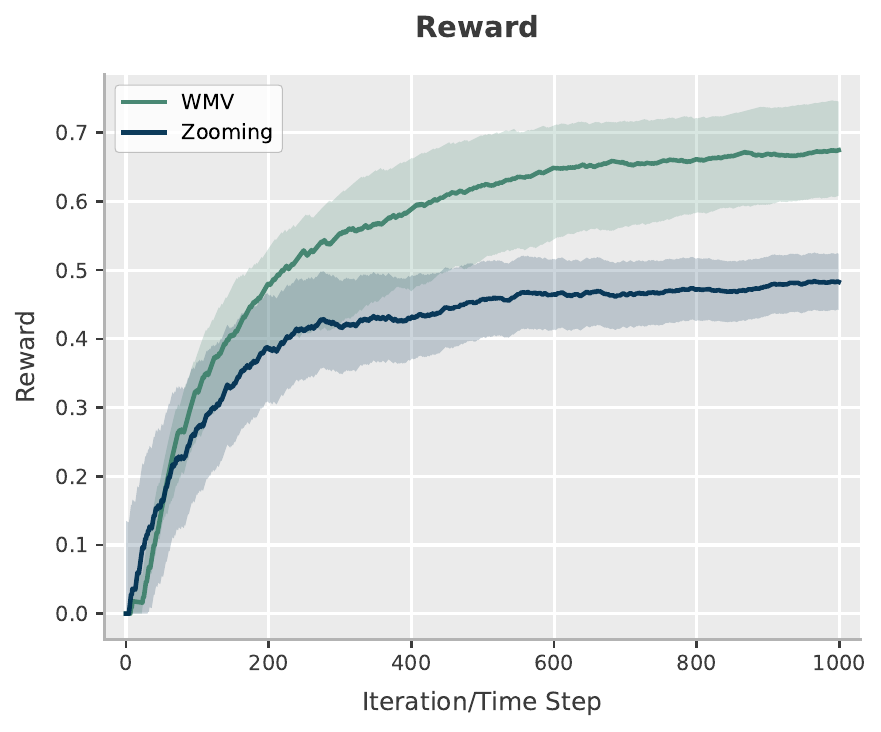}
\endminipage\hfill

\caption*{Config. WB3 \& ZB3 - LLM Expert Set: $\tT{[samantha-mistral, qwen-14b, mistral-openorca,  } \\
\tT{notus, aya, mistral,gemma-7b, deepseek-r1-14b, phi4]}$.}
\caption{Questions were sampled from the BoolQ dataset \cite{clark2019boolq}. Mean values are calculated over 1,000 trials, with finite time horizon $T=1,000$, with shaded regions representing confidence intervals of $\pm \ucB$.}  \label{fig:r1_game_summary_plots}
\end{figure}
\clearpage

\end{document}